\documentclass[11pt]{article}
\usepackage[margin=1in]{geometry}
\usepackage[T1]{fontenc}
\usepackage[utf8]{inputenc}
\usepackage{authblk}
\usepackage{lmodern}
\usepackage{amsmath,amssymb,amsthm,mathtools,bm}
\usepackage{booktabs,array,graphicx,xcolor}
\usepackage{enumitem}
\usepackage{algorithm,algpseudocode}
\usepackage{float}
\usepackage[round,authoryear]{natbib}
\usepackage{microtype}
\usepackage[hidelinks]{hyperref}
\usepackage{url}
\hypersetup{pdftitle={Intrinsic Associative Memory on Riemannian Manifolds: Curvature, Capacity, and Emergent Modes},pdfauthor={Anonymous Authors},pdfsubject={Riemannian associative memory}}
\newtheorem{theorem}{Theorem}
\newtheorem{proposition}[theorem]{Proposition}
\newtheorem{corollary}[theorem]{Corollary}
\newtheorem{lemma}[theorem]{Lemma}
\theoremstyle{definition}

\newtheorem{assumption}[theorem]{Assumption}
\theoremstyle{remark}

\newcommand{\M}{\mathcal M}
\newcommand{\R}{\mathbb R}
\newcommand{\E}{\mathbb E}
\renewcommand{\P}{\mathbb P}
\newcommand{\ind}{\mathbf 1}
\newcommand{\geo}{\mathrm{geo}}
\newcommand{\vc}{\mathrm{vc}}
\newcommand{\emg}{\mathrm{em}}
\newcommand{\tot}{\mathrm{tot}}
\DeclareMathOperator{\grad}{grad}
\DeclareMathOperator{\Hess}{Hess}
\DeclareMathOperator{\Ric}{Ric}
\DeclareMathOperator{\Scal}{Scal}
\DeclareMathOperator{\vol}{vol}
\DeclareMathOperator{\inj}{inj}
\DeclareMathOperator{\Var}{Var}
\DeclareMathOperator{\Poisson}{Poisson}

\newcommand{\norm}[1]{\left\lVert #1\right\rVert}
\newcommand{\ip}[2]{\left\langle #1,#2\right\rangle}

\title{\bf Intrinsic Associative Memory on Riemannian Manifolds:\\Curvature, Capacity, and Emergent Modes}
\author[1]{Krishnakumar Balasubramanian}
\author[2]{Zhaoyang Shi}

\affil[1]{Department of Statistics, University of California, Davis. \texttt{kbala@ucdavis.edu}}
\affil[2]{Center for Applied Mathematics, Fudan University. \texttt{zyshi@fudan.edu.cn}}

\date{}
\begin{document}
\maketitle
\begin{abstract}
Geometry does more than constrain an associative memory: curvature
determines what it remembers and which states it creates. We develop
intrinsic dense associative memories on Riemannian manifolds by casting
memory as Epanechnikov kernel-density mode seeking. We compare geodesic and
volume-corrected energies and show that curvature separates their behavior.
We prove that geodesic memory always retains an isolated pattern, while
corrected memory obeys a sharp Ricci-curvature threshold:
positive curvature can erase memories in high dimensions, while negative
curvature reinforces them. We derive geodesic capacity scalings of
$q_\beta^{-1/2}$ for retaining every pattern and $q_\beta^{-1}$ for a
typical one, where $q_\beta$ is the pairwise kernel-overlap probability. We
show how overlap \emph{creates} novel memories: designed $N$-pattern
configurations realize all $2^N-1$ subset modes, but random data at the
storage threshold yield only a Poisson number. We establish exact one-step
recall using Riemannian mean shift. In simulations, we recover the predicted
curvature transition and every designed mode. On WordNet's full noun
hierarchy, we demonstrate that volume correction improves low-capacity retrieval. Together, our work shows that curvature is a design variable for associative memory, not merely a property of the data.
\end{abstract}

\section{Introduction}

Modern AI increasingly relies on manifold-valued representations:
hyperbolic embeddings encode lexical hierarchies such as WordNet
\citep{nickel2017poincare,wang2026curved}, normalized neural embeddings lie
on hyperspheres \citep{wangisola}, and orientations, covariance matrices,
and shapes inhabit $SO(3)$, positive-definite manifolds, and Kendall shape
spaces, respectively \citep{chatterjee2013,barachant2012,kendall1984}.
The same geometric considerations arise in non-Euclidean foundation
models \citep{yang2025noneuclidean,he2025beyondeuclidean} and generative
modeling on manifolds \citep{debortoli2022,chen2024flow}.
In these settings, geometry is part of the semantics, with direct
consequences for associative memory: Euclidean updates can produce
invalid states---averaged rotations need not be rotations, and perturbed
covariance matrices need not remain positive definite---while projection
back onto the manifold can alter distances and hence retrieval dynamics.
These limitations call for associative memories that operate intrinsically
on manifolds, extending beyond the predominantly binary or Euclidean
formulations of existing high-capacity and modern continuous models
\citep{krotov,demircigil,krotov2021,ramsauer}.

The central challenge in this extension is accounting for curvature.
Tangent spaces provide a first-order Euclidean approximation, but curvature
alters geodesic distances, local volume, and intrinsic averaging beyond
this approximation. These effects directly shape the memory's energy
landscape: changes in the energy Hessian affect local stability, changes
in support overlap influence interference and capacity, and shifts in the
balance among memory contributions can create or remove additional modes.
Consequently, constructions that coincide in Euclidean space can yield
different attractors and retrieval dynamics under positive or negative
curvature. We therefore ask how curvature governs storage, retrieval,
and the creation of new memory states.

\begin{figure}[!t]
\centering
\includegraphics[width=\linewidth]{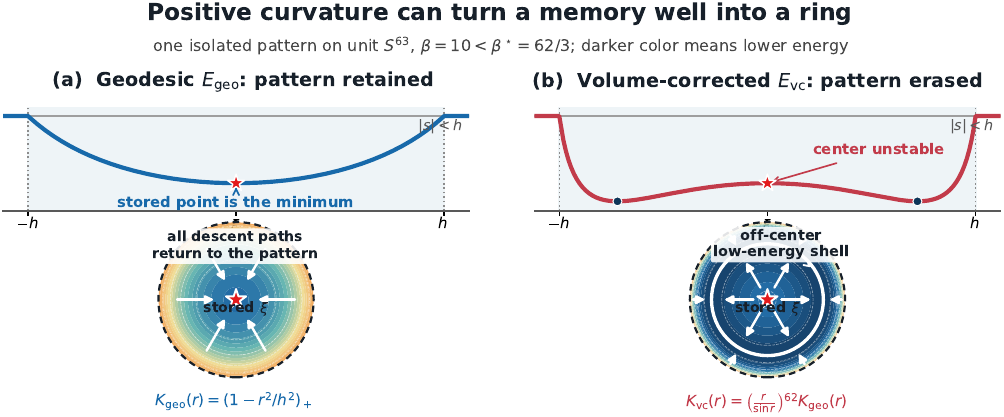}
\caption{Curvature separates two energies that coincide in flat space.
Radial profiles and normal-coordinate support disks show one isolated
pattern on the unit $S^{63}$ at $\beta=10<\beta^\star=62/3$ and
$\epsilon=1$; darker colors denote lower energy and arrows indicate
descent. The geodesic energy retains its minimum at $\xi$. Multiplication
by $\theta_\xi^{-1}=(r/\sin r)^{62}$ instead makes $\xi$ unstable for the
volume-corrected energy and creates an off-center low-energy shell. The
shell visualizes erasure of the original, not an additional nondegenerate
memory; above threshold the center is restored, while negative curvature
reinforces it. See Section~\ref{sec:model} for the precise definitions of
the two energies.\vspace{-0.2in}}
\label{fig:intro-energy}
\end{figure}

We study these questions using Epanechnikov dense associative memory,
which represents memory states as modes of a compactly supported
kernel-density score \citep{hoover}. Compact support isolates distant
patterns, while overlapping supports can generate additional modes.
On a Riemannian manifold, we compare two formulations: a geodesic-distance
energy that replaces Euclidean distance with geodesic distance, and a
volume-corrected KDE energy that additionally compensates for the
exponential map's volume distortion \citep{pelletier}. The two coincide
in flat (Euclidean) space, but under curvature the volume correction can reshape
the landscape and destabilize memory modes. We analyze both to determine
how a correction motivated by density estimation affects associative
memory; Figure~\ref{fig:intro-energy} previews this distinction.

For retrieval, we adopt the Riemannian mean-shift update associated with
the geodesic score \citep{subbarao}. It averages logarithmic displacements
to active patterns and maps the result back through the exponential map,
recalling a single active pattern in one step and taking an intrinsic
averaging step when several patterns are active. We treat stored and
newly created states uniformly as nondegenerate modes. Our analysis
combines curvature expansions to characterize isolated-pattern stability,
close-pair geometry and probability to quantify interference and capacity,
overlap geometry and geodesic convexity to characterize the formation
and uniqueness of additional modes, and descent estimates to establish
retrieval guarantees. Throughout, we distinguish simultaneous retention,
typical-pattern retention, and prescribed retrieval basins. Newly created
modes constitute geometric novelty, which does not by itself imply
statistical or semantic generalization.

Our contributions in this work are as follows:
\begin{itemize}[leftmargin=*,noitemsep]
\item \textbf{Intrinsic models, curvature, and retrieval guarantees.}
We formulate the geodesic and volume-corrected energies and compute their
exact local Hessians. An isolated pattern is always stable for the geodesic
model, whereas corrected stability obeys a sharp Ricci threshold; positive
curvature can destroy storage and negative curvature can strengthen it.
We also connect the geodesic energy to Riemannian mean shift and prove its
gradient relation, exact one-step recall in singleton neighborhoods, descent
even when the active set changes, and local linear convergence near a
memory (Section~\ref{sec:model}, Theorems~\ref{prop:descent}
and~\ref{thm:isolated}, and Figure~\ref{fig:main-simulation}).

\item \textbf{Capacity under geometric interference.}
For independent patterns in the stated regimes, we identify retention with
the absence of kernel-scale collisions and derive distinct laws for
retaining every pattern and retaining a typical pattern. We also give
high-dimensional sphere rates and expose the bandwidth tradeoff between
capacity and guaranteed retrieval neighborhoods
(Section~\ref{sec:capacity}, Theorem~\ref{thm:capacity}, and
Proposition~\ref{prop:typical}). Writing $q_\beta$ for the pairwise
support-overlap probability, the corresponding scales are
$q_\beta^{-1/2}$ and $q_\beta^{-1}$.

\item \textbf{A theory of emergent memories.}
We characterize each mode by its active subset and show that it is an
intrinsic mean or a curvature-corrected balance point. Designed
configurations realize all $2^N-1$ subset modes, whereas random data at the
storage threshold produce only a conditional Poisson number. This separates
what is geometrically possible by design from what appears typically in
random data
(Section~\ref{sec:emergence},
Theorems~\ref{thm:active}--\ref{thm:emergence}, and
Figure~\ref{fig:main-simulation}).

\item \textbf{Experimental tests and scope.}
Controlled experiments recover the predicted curvature transition and
every designed mode. On the full WordNet noun hierarchy, volume correction
yields a reproducible retrieval gain in $16$-memory banks across three
levels of partial evidence and five independently trained embeddings.
Together, these experiments test curvature-controlled stability, designed
emergence, and semantic retrieval on manifold-valued representations
(Section~\ref{sec:experiments},
Figures~\ref{fig:main-simulation}--\ref{fig:main-wordnet}, and
Appendix~\ref{app:wordnet}).
\end{itemize}

A detailed discussion of related work and the position of our contribution
within associative memory, geometric statistics, and manifold learning is
provided in Appendix~\ref{app:related-work}.

\section{Curvature-controlled intrinsic memory and retrieval}\label{sec:model}
We study two intrinsic Epanechnikov energies, connecting corrected manifold
KDE \citep{pelletier} to the geodesic score used in Riemannian mean shift
\citep{subbarao}. Our analysis supplies the exact Ricci criterion, corrected
balance equation, quantitative emergence construction, random retention and
emergence laws, and memory guarantees for the established mean-shift update.
All proofs are in the appendix.

\paragraph{Geometry in three operations.}
Let $(\M,g)$ be a smooth, connected, compact $m$-dimensional Riemannian manifold without boundary, and let $\xi_1,\ldots,\xi_N$ be distinct patterns. The distance $d(x,p)$ is the length of a shortest path on $\M$. The logarithm $\log_xp\in T_x\M$ is its initial displacement, of length $d(x,p)$; the exponential $\exp_xv$ moves from $x$ along displacement $v$. We use a kernel radius $h<\inj(\M)$, so these paths are unique inside each support. Put
\[
 \beta=2/h^2,\qquad q_i(x)=\tfrac12d(x,\xi_i)^2,
 \qquad a_i(x)=\theta_{\xi_i}(x)^{-1}.
\]
Here $\theta_p$ is the exponential map's volume density:
$dV(\exp_pv)=\theta_p(\exp_pv)\,dv$. It measures how Euclidean volume in the tangent space is distorted on the manifold; it equals one in flat space.

\paragraph{Model.}
For $\epsilon\ge0$, define the single-pattern kernels, scores, and energies
\begin{align}
 K_\geo(x,\xi_i)&=(1-\beta q_i(x))_+,
 &K_\vc(x,\xi_i)&=a_i(x)(1-\beta q_i(x))_+,\label{eq:kernels}\\
 S_\geo(x)&=\epsilon+\sum_i K_\geo(x,\xi_i),
 &E_\geo(x)&=-\beta^{-1}\log S_\geo(x),\label{eq:geo}\\
 S_\vc(x)&=\epsilon+\sum_i K_\vc(x,\xi_i),
 &E_\vc(x)&=-\beta^{-1}\log S_\vc(x).\label{eq:vc}
\end{align}
Both kernels vanish at distance $h$. The corrected kernel is defined as zero
outside its support, without evaluating $a_i$ there. If $\epsilon=0$, we
consider only $S_T>0$. The negative logarithm is decreasing, so energy
minima are score maxima.

\paragraph{What is a memory?}\label{def:memory}
A memory is a point where $E_T$ is $C^2$ in a neighborhood, $\grad E_T=0$, and $\Hess E_T\succ0$, for $T\in\{\geo,\vc\}$. The Hessian condition gives a locally attracting minimum under sufficiently small gradient steps. An $\eta$-\emph{novel} memory is at distance at least $\eta>0$ from every original. Write $N_\tot^T$ for all memories and $N_\emg^T(\eta)$ for novel ones. \emph{Global emergence} requires both a novel memory and retention of every original. Thus memories are nondegenerate modes, excluding plateaus and degenerate families.

\subsection{Riemannian mean-shift retrieval}\label{sec:layer}
At a query $x$, define the active set
$A(x)=\{i:d(x,\xi_i)<h\}$. Algorithm~\ref{alg:radius} is Riemannian
mean shift with the flat weights induced by the Epanechnikov profile. For
$\kappa(s)=(1-s)_+$, $-\kappa'(s)=\ind\{s<1\}$ away from the boundary:
this is the classical flat/Epanechnikov shadow-kernel relation
\citep{cheng,comaniciu}. Substituting these weights into
\citet{subbarao}, equations~(29)--(30), gives
\begin{equation}
 \alpha_i(x)=\frac{\ind\{i\in A(x)\}}{|A(x)|},\qquad
 v(x)=\sum_{i\in A(x)}\alpha_i(x)\log_x\xi_i,\qquad
 T_t(x)=\exp_x\!\bigl(t v(x)\bigr).
 \label{eq:sparseattention}
\end{equation}
The usual mean-shift step has $t=1$; $t$ controls its length. An empty
neighborhood returns the query with an inactive flag. In attention
language, the rule selects keys by radius and weights them uniformly, not
by kernel height. This is the mean-shift direction for the geodesic score.
The corrected energy uses the same support gate but requires an additional
volume-gradient term, given in Appendix~\ref{app:descent} and reflected in
the corrected balance equation of Section~\ref{sec:structure}.

\begin{algorithm}[H]
\caption{Riemannian mean-shift retrieval with flat weights}\label{alg:radius}
\begin{algorithmic}[1]
\Require Query $x\in\M$, keys $\xi_1,\ldots,\xi_N$, radius $h<\inj(\M)$, step $t>0$
\State $A\gets\{i:d(x,\xi_i)<h\}$
\If{$A=\varnothing$}
 \State \Return $(x,\mathrm{inactive})$
\EndIf
\State $v\gets |A|^{-1}\sum_{i\in A}\log_x\xi_i$
\State \Return $(\exp_x(tv),\mathrm{active})$
\end{algorithmic}
\end{algorithm}

\begin{theorem}[Memory guarantees for Riemannian mean shift]\label{prop:descent}
At a query away from support boundaries with $k=|A(x)|>0$,
\begin{equation}
 \grad S_\geo(x)=\beta k v(x),\qquad \grad E_\geo(x)=-\frac{k}{S_\geo(x)}v(x),
 \label{eq:gradientlayer}
\end{equation}
so $T_t$ is a gradient step with the query-computable step size
$tS_\geo(x)/k$. If $A(x)=\{i\}$, then $T_1(x)=\xi_i$ exactly. In
particular, write $s=\min_{i\ne j}d(\xi_i,\xi_j)$, with $s=\infty$ when
$N=1$. Every query in $B_\Delta(\xi_i)$ retrieves $\xi_i$ in one step
whenever
\begin{equation}
 0<\Delta<h,\qquad \Delta\le s-h.
 \label{eq:retrievalradius}
\end{equation}
More generally, suppose each $q_i$, $i\in A(x)$, is smooth and has
$\Hess q_i\preceq\Lambda g$ along
$\gamma(u)=\exp_x(uv(x))$, $0\le u\le t$. If $0<t<2/\Lambda$, then
\begin{equation}
 S_\geo(T_t(x))-S_\geo(x)
 \ge\beta k t(1-\Lambda t/2)\norm{v(x)}^2.
 \label{eq:descentmain}
\end{equation}
Consequently the energy decreases strictly unless $v(x)=0$, even if the
active set changes. On a fixed manifold, sufficiently small $h$ permits the
full step $t=1$. Near a memory with fixed active set and
$cg\preceq\Hess q_i\preceq\Lambda g$, admissible fixed steps converge
locally linearly, with derivative contraction at most
$\max\{|1-tc|,|1-t\Lambda|\}<1$.
\end{theorem}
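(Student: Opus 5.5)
The plan is to prove the parts of the theorem in the order they are stated, each building on the previous one.

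\textbf{Gradient identity.} I start from the standard fact that, within the injectivity radius, $\grad q_i(x)=-\log_x\xi_i$. At a query away from support boundaries the active set is locally constant, so near $x$ we have $S_\geo=\epsilon+\sum_{i\in A(x)}(1-\beta q_i)$. Differentiating gives $\grad S_\geo=\beta\sum_{i\in A}\log_x\xi_i=\beta k v(x)$. The chain rule for $-\beta^{-1}\log$ then yields $\grad E_\geo=-k v/S_\geo$. Hence $\exp_x(tv)=\exp_x\bigl(-(tS_\geo/k)\grad E_\geo\bigr)$, a Riemannian gradient step with step size $tS_\geo(x)/k$.

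\textbf{One-step recall.} If $A(x)=\{i\}$, then $v=\log_x\xi_i$, and $\exp_x\log_x\xi_i=\xi_i$ because $d(x,\xi_i)<h<\inj(\M)$. For the ball statement, take $x\in B_\Delta(\xi_i)$. Then $d(x,\xi_i)<\Delta<h$, so $i\in A(x)$. For $j\ne i$ the triangle inequality gives $d(x,\xi_j)\ge s-\Delta\ge h$, so $j\notin A(x)$. When $N=1$ this is vacuous. Equality $d=h$ is excluded from the active set because the definition uses a strict inequality, so the boundary case causes no problem.

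\textbf{Descent inequality.} Set $f(u)=\sum_{i\in A(x)}(1-\beta q_i(\gamma(u)))$ along the geodesic $\gamma$, where $\gamma'(0)=v$ and $\|\gamma'\|=\|v\|$ is constant. Then $f'(0)=\beta k\|v\|^2$, and $f''(u)=-\beta\sum_{i\in A}\Hess q_i(\gamma',\gamma')\ge-\beta k\Lambda\|v\|^2$. Taylor's theorem gives $f(t)-f(0)\ge\beta k\|v\|^2(t-\Lambda t^2/2)$. The key observation for active-set changes is the following. Since $(\cdot)_+\ge\cdot$, each $i\in A(x)$ satisfies $K_\geo(\gamma(t),\xi_i)\ge 1-\beta q_i(\gamma(t))$, and newly activated patterns only add nonnegative terms. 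So $S_\geo(T_t x)\ge\epsilon+f(t)$, while $S_\geo(x)=\epsilon+f(0)$, which proves \eqref{eq:descentmain}. Because $-\log$ is strictly decreasing, the energy strictly decreases when $v\ne0$. For the full-step claim with small $h$, I would use Hessian comparison: on a compact manifold with sectional curvature bounded below, $\Hess q_i\preceq\Lambda(h)g$ on $B_{2h}(\xi_i)$ with $\Lambda(h)\to1$ as $h\to0$. Thus $\Lambda<2$ for small $h$, and $t=1$ is admissible. The path $\gamma$ stays within distance $d(x,\xi_i)+\|v\|\le 2h$ of each active $\xi_i$, since $\|v\|<h$ as an average of vectors of length less than $h$.

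\textbf{Local linear convergence.} With the active set fixed near a memory $x^*$, the map is $T_t(x)=\exp_x(-(t/k)\grad F(x))$ with $F=\sum_{i\in A}q_i$. Its derivative at the fixed point $x^*$, where $v=0$, is $I-(t/k)\Hess F(x^*)$. The assumption gives $c\,g\preceq\Hess F/k\preceq\Lambda g$, so the eigenvalues lie in $[1-t\Lambda,1-tc]$ and the spectral norm is at most $\max\{|1-tc|,|1-t\Lambda|\}<1$ for $0<t<2/\Lambda$. Contraction of the derivative then gives local linear convergence by a standard fixed-point argument in normal coordinates.

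I expect this last step to be the main obstacle. It requires justifying that the differential of $x\mapsto\exp_x(w(x))$ at a zero of $w$ equals $I+\nabla w$, with no curvature correction. This holds because $d\exp_x$ at the zero vector is the identity and $w(x^*)=0$, but it needs care. I also need to ensure the iterates keep the active set fixed, which follows by continuity, since $x^*$ lies away from support boundaries.
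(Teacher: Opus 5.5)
Your proposal is correct and follows the paper's proof essentially step for step. It uses the same locally constant active-set computation $\grad S_\geo=\beta kv$, the same minorization $(\cdot)_+\ge\cdot$ combined with a second-order Taylor bound along the geodesic, and the same linearization $DT_t(z)=I-(t/k)\Hess F_A(z)$ in normal coordinates at the memory. The only differences are cosmetic: you get $\Lambda=1+o(1)<2$ for the full step from Hessian comparison rather than from the normal-coordinate estimate of Lemma~\ref{lem:small} at radius $2h$, and your triangle inequality is actually strict ($d(x,\xi_j)>s-\Delta\ge h$ since $d(x,\xi_i)<\Delta$), so you need not appeal to the strict definition of the active set.
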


\paragraph{From mode seeking to recall.}
The score-dependent step size requires no knowledge of the target identity.
A singleton neighborhood yields exact recall; with multiple keys, the
update takes one step toward their intrinsic mean without computing it
exactly. The descent argument follows classical mean-shift monotonicity
\citep{comaniciu}; the Hessian condition provides an explicit admissible
step size on the manifold, including across changes in the active set.

Each iteration computes $N$ distances, $|A|$ logarithmic maps, and one
exponential map, with manifold-dependent costs. The update is isometry
equivariant (Appendix~\ref{app:descent}). Outside the full-step regime,
the Hessian bound or energy-based backtracking provides a descent step.
The theorem establishes local convergence near a memory, without
guaranteeing convergence from arbitrary initializations.

\paragraph{Values, sparsity, and learning.}
The keys are manifold-valued patterns. Euclidean values $u_i\in\R^d$
admit the readout $r(x)=\sum_i\alpha_i(x)u_i$;
Theorem~\ref{prop:descent} applies to the query update $T_t(x)$, not to
arbitrary value maps. In the similarity--separation--projection taxonomy
of \citet{millidge}, negative squared distance defines similarity, the
radius gate imposes hard separation, and the exponential map returns
the tangent aggregate to the manifold.

Unlike graded sparsemax/entmax weights \citep{hu,santos}, the flat weights
can be discontinuous at support boundaries. For a fixed active set and
fixed $t$, the update depends smoothly on the query and keys, but its
derivative with respect to $h$ vanishes. Radius selection therefore
requires a separate tuning rule; smoothing the gate modifies the
exact-retrieval model. The corrected energy additionally requires the
Jacobian-gradient term in~\eqref{eq:correctedmean}, as established in
Appendix~\ref{app:descent}.

\subsection{Curvature-controlled stability}
\begin{theorem}[Curvature changes stability, not stationarity]\label{thm:isolated}
This pointwise statement also holds on a complete noncompact manifold with $h<\inj(\M)$. If $d(\xi_i,\xi_j)>h$ for all $j\ne i$, both gradients vanish at $\xi_i$, and
\begin{equation}
 \Hess E_\geo(\xi_i)=\frac{g_{\xi_i}}{1+\epsilon},\qquad
 \Hess E_\vc(\xi_i)=\frac{g_{\xi_i}-\Ric_{\xi_i}/(3\beta)}{1+\epsilon}.
 \label{eq:ricci}
\end{equation}
Thus the geodesic model always stores this isolated pattern. The corrected model stores it if and only if
\begin{equation}
 \beta>\tfrac13\lambda_{\max}(\Ric_{\xi_i}),
 \label{eq:threshold}
\end{equation}
where the eigenvalue is relative to $g_{\xi_i}$. On the unit sphere $S^m$, this is $\beta>(m-1)/3$, equivalently $h^2<6/(m-1)$ for $m>1$.
\end{theorem}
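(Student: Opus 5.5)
Because $d(\xi_i,\xi_j)>h$ for every $j\ne i$, there is a neighborhood $U$ of $\xi_i$ on which every other kernel vanishes identically. On $U$ we therefore have $S_T=\epsilon+K_T(\cdot,\xi_i)$, and the argument reduces to a single-pattern computation in normal coordinates $x=\exp_{\xi_i}(v)$ with $|v|<h$. Two standard facts are used. First, $q_i(\exp_{\xi_i}v)=\tfrac12|v|^2$, so $\grad q_i(\xi_i)=0$ and $\Hess q_i(\xi_i)=g_{\xi_i}$; this is exact at the center, since Christoffel symbols vanish there in normal coordinates. Second, the volume density has the expansion
\[
\theta_{\xi_i}(\exp_{\xi_i}v)=1-\tfrac16\Ric_{\xi_i}(v,v)+O(|v|^3).
\]
Consequently $a_i=\theta^{-1}=1+\tfrac16\Ric(v,v)+O(|v|^3)$, and so $a_i(\xi_i)=1$, $\grad a_i(\xi_i)=0$ and $\Hess a_i(\xi_i)=\tfrac13\Ric_{\xi_i}$. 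Smoothness of $a_i$ holds on the whole support because $h<\inj(\M)$, and this is the only place compactness enters. The same reasoning therefore applies on a complete noncompact manifold.

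Write $f=K_T$ near $\xi_i$. Then $E_T=-\beta^{-1}\log(\epsilon+f)$ is smooth near the center, with
\[
\grad E_T=-\frac{\grad f}{\beta(\epsilon+f)},\qquad
\Hess E_T=-\frac{\Hess f}{\beta(\epsilon+f)}+\frac{\grad f\otimes\grad f}{\beta(\epsilon+f)^2}.
\]
At a critical point of $f$ the second term vanishes. For the geodesic kernel, $f=1-\beta q_i$, so $\grad f=0$ and $\Hess f=-\beta g$ at $\xi_i$, with $f=1$; this gives $\Hess E_\geo=g/(1+\epsilon)$. For the corrected kernel, $f=a_i(1-\beta q_i)$. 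By the product rule at $\xi_i$, $\grad f=\grad a_i-\beta\grad q_i=0$ and
\[
\Hess f=\Hess a_i-\beta g-2\beta\,\mathrm{sym}(\grad a_i\otimes\grad q_i)=\tfrac13\Ric-\beta g .
\]
Dividing by $-\beta(1+\epsilon)$ gives \eqref{eq:ricci}. So both gradients vanish at $\xi_i$, and the Hessians are as stated.

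The storage conclusions then follow from the definition of a memory. Both energies are $C^2$ near $\xi_i$ with zero gradient there, so $\xi_i$ is a memory exactly when the Hessian is positive definite. The geodesic Hessian $g/(1+\epsilon)$ is always positive definite. The corrected Hessian is positive definite if and only if every eigenvalue of $\Ric_{\xi_i}$ relative to $g$ is less than $3\beta$, which is \eqref{eq:threshold}. The "only if" direction needs no further argument, because at equality the Hessian is degenerate and the point is excluded by definition. On the unit sphere $S^m$ we have $\Ric=(m-1)g$, so the condition becomes $\beta>(m-1)/3$, and substituting $\beta=2/h^2$ gives $h^2<6/(m-1)$.

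The main obstacle is justifying the Hessian of $a_i$ at the center rigorously. Here I would cite the classical Gray/Vanhove expansion of the volume density, $\theta=\sqrt{\det g_{ij}}$ in normal coordinates, using $g_{ij}=\delta_{ij}-\tfrac13R_{ikjl}v^kv^l+O(|v|^3)$, whose trace yields $-\tfrac16\Ric(v,v)$. Because the coordinate Hessian coincides with the covariant one at the origin of normal coordinates, this expansion transfers directly to $\Hess a_i(\xi_i)$.
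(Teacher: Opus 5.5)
Your proof is correct and follows the paper's argument: you localize to a neighborhood where only the own kernel is active, apply the log-Hessian identity, and use $dq_i=da_i=0$, $\Hess q_i=g$, $\Hess a_i=\tfrac13\Ric$ at the center, so the product-rule cross terms vanish and the Ricci threshold follows from the definition of a memory. The one difference is the volume-density jet: you quote the normal-coordinate metric expansion, whereas the paper derives it from the Jacobi-field expansion $A(t)=tI-\tfrac16t^3R(0)+O(t^4)$; either works. Also, compactness does not enter your argument at all, since it uses only locality and $h<\inj(\M)$.
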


\paragraph{Why curvature appears.}
Ricci curvature governs the leading local volume distortion through
$\theta_p(\exp_p v)=1-\Ric_p(v,v)/6+O(\norm{v}^3)$.
The vanishing linear term preserves stationarity at the kernel center,
whereas the quadratic term modifies its stability. Below the stability
threshold, a spherical corrected kernel attains its maximum on a shell
of positive radius rather than at its center
(Appendix~\ref{app:retrieval}). On hyperbolic space $\mathbb H^m$ of
curvature $-1$, $\Ric=-(m-1)g$, so the Hessian of the isolated corrected
energy is $(1+(m-1)/(3\beta))g/(1+\epsilon)$; negative curvature therefore
strengthens local stability. At equality in~\eqref{eq:threshold}, the
Hessian is degenerate. The criterion is exact for every admissible $h$
and requires none of the additional counting assumptions introduced below.

\section{Storage capacity: all patterns, a typical pattern, and bandwidth}\label{sec:capacity}
Let $X_1,\ldots,X_N$\footnote{Here and in the corresponding appendix
proofs, $X_i$ denotes a random stored pattern. For a realization $\omega$,
writing $\xi_i=X_i(\omega)$ recovers the fixed-pattern notation of
Section~\ref{sec:model}.} be independent with common law $\nu$ admitting a
density relative to $dV$. Write $\mathcal A_T$ for retention of every
original, and $N_\delta^T(h)$ for the largest $N$ with
$\P(\mathcal A_T)\ge1-\delta$. Define
\begin{equation}
 b_h(x)=\nu(B_h(x)),\qquad q_h=\P\{d(X_1,X_2)<h\}=\int b_h(x)\,d\nu(x).
 \label{eq:collisiondef}
\end{equation}
The close-pair probability $q_h$ measures interference. For the geodesic model, an original with an active neighbor is almost surely nonstationary: exact cancellation of random displacements is a null event. The same holds for the corrected model at sufficiently small $h$ on a fixed compact manifold, ensuring both Ricci stability and nonsingularity of the neighbor-to-gradient map. Hence, in these regimes,
\begin{equation}
 \mathcal A_T=\{d(X_i,X_j)>h\text{ for every }i\ne j\}\quad\text{almost surely}.
 \label{eq:collisionevent}
\end{equation}

\paragraph{All-pattern versus typical-pattern retention.}
Under $q_h\to0$ and $\sup_xb_h(x)\le Cq_h$, with $C$ independent of $h$ on a fixed manifold or of $m$ along a sphere sequence,
\begin{equation}
 N^2q_h\to2\lambda\quad\Longrightarrow\quad
 \P(\mathcal A_T)\to e^{-\lambda},\qquad
 N_\delta^T(h)\sim\sqrt{\frac{-2\log(1-\delta)}{q_h}}.
 \label{eq:capacitymain}
\end{equation}
This is a geometric birthday problem: there are $\binom N2$ potential collisions. The close-pair Poisson tools are classical \citep{silverman,penrose}; the energy-specific step is~\eqref{eq:collisionevent}.

For a uniformly selected original, let $p_N^T(h)$ be its retention probability and $N_{\mathrm{typ},\delta}^T(h)$ the largest $N$ with $p_N^T(h)\ge1-\delta$. In the same regimes,
\begin{equation}
 p_N^T(h)=\int(1-b_h(x))^{N-1}\,d\nu(x),\qquad
 N_{\mathrm{typ},\delta}^T(h)\sim\frac{-\log(1-\delta)}{q_h}\quad(b_h\equiv q_h).
 \label{eq:typicalmain}
\end{equation}
The last formula assumes homogeneous ball probabilities, as on a uniformly sampled sphere; then $p_N^T=(1-q_h)^{N-1}$. This is also the expected retained fraction. A vanishing fraction fails in probability when $Nq_h\to0$, without requiring $N^2q_h\to0$. Proposition~\ref{prop:typical} proves these statements and gives the nonuniform-density constant. Retention alone is not a guarantee for corrupted queries.

\paragraph{Growing dimension: uniform patterns on $S^m$.}
For $m\ge2$, the collision probability and fixed-$h$ capacity rates are
\begin{equation}
 q_m(h)=\frac{\int_0^h\sin^{m-1}t\,dt}{\int_0^\pi\sin^{m-1}t\,dt},
 \label{eq:spherecap}
\end{equation}
\begin{equation}
 \log N_\delta^\geo(m,h)=-\frac m2\log(\sin h)+o(m),\qquad
 \log N_{\mathrm{typ},\delta}^\geo(m,h)=-m\log(\sin h)+o(m),
 \label{eq:spheredichotomy}
\end{equation}
for fixed $0<h<\pi/2$ and $0<\delta<1$. The typical-pattern exponent is twice the all-pattern exponent. For \emph{either} notion, corrected capacity is zero when $\beta\le(m-1)/3$ and equals the geodesic capacity above that threshold. Nonisolated originals are almost surely nonstationary, while isolated ones obey Theorem~\ref{thm:isolated}.

Choosing $h=c/\sqrt m$ with $0<c<\sqrt6$ restores corrected storage. Both models satisfy
\begin{equation}
 \log N_\delta^T(m,h)=\frac m4\log(m/c^2)+O(\log m),
 \label{eq:shrinksphere}
\end{equation}
and typical capacity has twice this leading term (Corollary~\ref{cor:sphere}). The guaranteed single-kernel retrieval radius is less than $h=O(m^{-1/2})$. Appendix~\ref{app:packing} contrasts these random capacities with designed storage at a prescribed basin radius.

\paragraph{Storage versus density consistency.}
For independent samples from a $C^4$ density on a fixed compact manifold,
retaining all patterns with high probability requires $N^2h^m\to0$
as $h\to0$, whereas mean-square density consistency requires
$Nh^m\to\infty$ (Appendix~\ref{app:normalization}). Additional stable
modes constitute geometric memories; assessing their statistical or
semantic utility requires a separate criterion.

\section{Emergent memories: balance points and counts}
\label{sec:emergence}\label{sec:structure}
\subsection{Self-consistent balance points}
The compact support makes it possible to characterize memories one subset
at a time. Recall the active set $A(x)=\{i:d(x,\xi_i)<h\}$. For a nonempty
subset $A$, let
\begin{equation}
 U_A=\bigcap_{i\in A}B_h(\xi_i),\qquad
 C_A=\{x\in U_A:d(x,\xi_j)>h\text{ for every }j\notin A\}.
 \label{eq:activeregions}
\end{equation}
The first set is the common support of the selected kernels. The second checks that \emph{only} those kernels are active. A candidate computed from $A$ is a memory only if it passes this second test.

\paragraph{The local convexity regime.}
Throughout this section, we assume that each support ball is strongly
geodesically convex: any two of its points are joined by a unique
minimizing geodesic contained in the ball. We also require
$\Hess q_i\succeq cg$ on each support for some $c>0$ and, for the corrected
model, $\Hess[a_i(1-\beta q_i)]\preceq-\mu g$ there for some $\mu>0$.
These inequalities ensure that the active score components are strongly
concave along geodesics within their supports. On a fixed smooth compact
manifold, these conditions hold for all sufficiently small $h$.
Assumption~\ref{ass:convex} provides an explicit sufficient bound in terms
of derivatives of $a_i$, and Lemma~\ref{lem:small} establishes
$\mu\ge c_0h^{-2}$ in this regime. An injectivity-radius bound ensures
smooth squared distances but does not imply these convexity conditions.

\begin{theorem}[One candidate per active subset]\label{thm:active}
Under the preceding conditions, a subset $A$ has at most one critical candidate in $U_A$ for each energy. The geodesic candidate satisfies
\begin{equation}
 \sum_{i\in A}\log_x\xi_i=0.
 \label{eq:mean}
\end{equation}
The corrected candidate satisfies
\begin{equation}
 \sum_{i\in A}a_i(x)\log_x\xi_i
 =\frac1\beta\sum_{i\in A}a_i(x)(1-\beta q_i(x))\grad\log\theta_{\xi_i}(x).
 \label{eq:correctedmean}
\end{equation}
Such a candidate is a memory with active set $A$ exactly when it lies in $C_A$. Every memory is obtained this way; none lies on a support boundary. Every critical point in the smooth supported domain is a memory, so there are no saddles there. In particular,
\begin{equation}
 N_\tot^T\le2^N-1,\qquad
 N_\emg^T(\eta)\le2^N-N-1\quad\text{if all originals are retained}.
 \label{eq:subsetbound}
\end{equation}
\end{theorem}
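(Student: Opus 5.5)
The plan is to work one active subset at a time and reduce everything to strict geodesic concavity of the score restricted to $U_A$. Fix a nonempty $A$ and set $F_A^\geo=\epsilon+\sum_{i\in A}(1-\beta q_i)$ and $F_A^\vc=\epsilon+\sum_{i\in A}a_i(1-\beta q_i)$ on $U_A$. Since each support ball is strongly geodesically convex, $U_A$ is geodesically convex as an intersection of such balls. Every summand of $F_A^\geo$ is smooth on $U_A$ with Hessian $-\beta\Hess q_i\preceq-\beta c g$. For the corrected model, the assumption gives $\Hess[a_i(1-\beta q_i)]\preceq-\mu g$. Hence $F_A^T$ is strongly geodesically concave on $U_A$. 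Along any geodesic $\gamma$ in $U_A$ joining two critical points, $(F_A^T\circ\gamma)'$ is strictly decreasing, so it cannot vanish at both endpoints. This gives at most one critical point of $F_A^T$ in $U_A$. Since $E=-\beta^{-1}\log S$ with $S>0$, critical points of $E$ are those of $S$.

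Next I will derive the equations. I use $\grad q_i=-\log_x\xi_i$ inside the injectivity radius. Setting $\grad F_A^\geo=\beta\sum_{i\in A}\log_x\xi_i=0$ gives \eqref{eq:mean}. For the corrected score, $\grad a_i=-a_i\grad\log\theta_{\xi_i}$, so
\[
\grad F_A^\vc=\sum_{i\in A}\bigl[\beta a_i\log_x\xi_i-a_i(1-\beta q_i)\grad\log\theta_{\xi_i}\bigr].
\]
Setting this to zero and dividing by $\beta$ gives \eqref{eq:correctedmean}.

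I then match candidates with memories. On the open set $C_A$, $S_T$ coincides with $F_A^T$ in a neighborhood, so $S_T$ is smooth there. At a critical point $x^*\in C_A$ we have $\Hess E_T=-\beta^{-1}\Hess S_T/S_T\succ0$, because $\grad S_T=0$ removes the $\grad S\otimes\grad S$ term. So $x^*$ is a memory with active set $A$. Conversely, let $x$ be a memory. By definition $E_T$ is $C^2$ near $x$.

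The main obstacle is excluding support boundaries, that is, points with $d(x,\xi_j)=h$ for some $j$. Here I argue directly. For a boundary index $j$, the term $(1-\beta q_j)_+$ (times $a_j$) has one-sided derivative zero on the outside and $-\beta\,d\,q_j\ne0$ on the inside, since $\grad q_j=-\log_x\xi_j$ has length $h>0$. For $K_\vc$ the $a_j$ factor is positive and the bracket vanishes at the boundary, so only this term survives in the one-sided derivative. Several boundary indices can share the same sphere tangency, and their kinks along a direction transverse to the spheres all jump in second derivative with the same sign. I will take a direction $w$ with $\langle w,\log_x\xi_j\rangle\neq0$ for all boundary $j$. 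This is generic, since each condition excludes only a hyperplane. For a single $j$, the directional derivative of $(1-\beta q_j)_+$ along $w$ jumps by $\beta|\langle w,\log_x\xi_j\rangle|$-type amounts at $x$. Choosing $w$ so that all relevant inner products are positive (which fails only in degenerate cases) makes the jumps add without cancelling. This contradicts $C^1$, let alone $C^2$. If such a $w$ cannot be chosen so the signs agree, I will fall back on the fact that each jump is along the line restriction and the sum of kinks of $u\mapsto(\alpha_ju)_+$ with nonzero $\alpha_j$ has a nonzero total kink. This works because $\sum_j(\alpha_ju)_+$ differs from a linear function by $\sum_{\alpha_j<0}|\alpha_j|u$ on one side, so its derivative jumps by $\sum|\alpha_j|>0$. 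That repair actually works for any $w$, so I will use it directly.

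Hence a memory lies in some $C_A$ with $A=A(x)$ nonempty; if $A$ is empty, then $S\equiv\epsilon$ nearby and the Hessian vanishes, so it is not a memory. Such an $x$ is the unique candidate of $A$. The same argument shows every critical point in the smooth supported domain is a strict maximum of $S$, hence a memory and not a saddle. Injecting memories into nonempty subsets gives $N_\tot^T\le2^N-1$. If all originals are retained, each singleton $\{i\}$ accounts for its memory $\xi_i$ (by uniqueness, the singleton's candidate is $\xi_i$ itself, using Theorem~\ref{thm:isolated}). Novel memories must therefore come from subsets of size at least two, giving at most $2^N-N-1$ of them.
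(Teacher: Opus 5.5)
Your proposal is correct and follows the paper's proof. The shared steps are uniqueness from strong geodesic concavity of the untruncated active score on the convex set $U_A$, the same gradient identities giving \eqref{eq:mean} and \eqref{eq:correctedmean}, matching through the open set $C_A$, and the same one-sided derivative identity at support boundaries (you use it to contradict the $C^2$ clause of the memory definition, while the paper's Lemma~\ref{lem:boundary} uses it to show such a point is not a local score maximum).

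One small correction in the count: a retained original need not be isolated, so Theorem~\ref{thm:isolated} is not the right reference. What you actually use is that a singleton's only possible candidate is $\xi_i$, because $dq_i(\xi_i)=da_i(\xi_i)=0$ by Lemma~\ref{lem:jets}. That fact already gives $N_\emg^T(\eta)\le2^N-N-1$ without the retention hypothesis, whereas the paper simply subtracts the $N$ retained originals from $2^N-1$.
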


\paragraph{Intrinsic and curvature-adjusted means.}
Equation~\eqref{eq:mean} expresses a balance of logarithmic displacements
to the active patterns. Under the convexity assumptions, the candidate
minimizes the sum of squared geodesic distances to these patterns and is
therefore an intrinsic, or Fr\'echet, mean.
Equation~\eqref{eq:correctedmean} includes an additional force induced by
the spatially varying volume correction, so its solution is generally
not an ordinary Fr\'echet mean. In flat geometry, $a_i=1$ and this force
vanishes, recovering the Euclidean subset-mean mechanism \citep{hoover}.

The active-set condition is essential. On the Euclidean line with $h=1$,
for example, the patterns $-0.99,0.8,0.9$ are all active at the origin.
Their mean, approximately $0.237$, lies more than one unit from the first
pattern and therefore does not define a three-pattern memory.
Moreover, a candidate formed from several patterns may coincide with
a stored pattern; novelty requires a distance of at least $\eta$ from
every stored pattern.

A pair illustrates the coexistence of storage and emergence. At separation
$r\in(h,2h)$, the two patterns do not activate one another, yet their
supports overlap around the midpoint. If no third kernel interferes and
the convexity conditions hold, the midpoint is a memory of the geodesic
model, while both original patterns remain stored. For sufficiently small
bandwidth, the corrected pair admits a nearby balance point. This pair
mechanism governs the emergence count for random patterns.

Theorem~\ref{thm:active} yields a finite enumeration procedure: solve the
balance equation for each nonempty subset and retain only candidates
satisfying the active-set and novelty conditions. Although exponential
in $N$, this procedure gives an exact count for small configurations.
Appendix~\ref{app:active} provides a finite-sample occupancy bound, while
Appendix~\ref{app:spherecount} sharpens the count on spheres of fixed
dimension.

\subsection{Designed versus random emergence}
The subset upper bound does not describe a typical dataset. The next result compares a configuration designed to realize every subset with independent samples at the all-pattern storage threshold. Both statements apply to both energies.

\begin{theorem}[Exponential constructions and a Poisson random count]\label{thm:emergence}
\textbf{Designed patterns.}\label{thm:simplex}
Fix $p\in\M$ and $0<\tau<1/\sqrt2$. There is a constant $c_\tau>0$, depending on the local geometry and $\tau$, such that for every $2\le N\le m+1$ and $0<h\le c_\tau/\sqrt N$, a configuration in an $O(h)$ neighborhood of $p$ satisfies
\begin{equation}
 N_\tot^T=2^N-1,\qquad
 N_\emg^T(\tau h)=2^N-N-1,\qquad T\in\{\geo,\vc\},
 \label{eq:maximal}
\end{equation}
with every original retained. For each subset, the memory differs by $O(h^3)$ from the exponential image of its tangent-space centroid.

\textbf{Independent patterns on a fixed manifold.}
Let the $X_i$ have a continuous density $f$, let $h\to0$, and suppose $N^2q_h\to2\lambda\in(0,\infty)$. For fixed $0<\tau<1$, put $a_\tau=\max\{1,2\tau\}$ and $\lambda_\tau=(2^m-a_\tau^m)\lambda$. Then
\begin{equation}
 N_\emg^T(\tau h)\mid\mathcal A_T
 \ \Longrightarrow\ \Poisson(\lambda_\tau),
 \label{eq:emergencepoisson}
\end{equation}
where $\Longrightarrow$ denotes convergence in distribution. In particular,
\begin{equation}
 \P\{\mathcal A_T,\ N_\emg^T(\tau h)>0\}
 \longrightarrow e^{-\lambda}(1-e^{-\lambda_\tau}).
 \label{eq:globalprob}
\end{equation}
\end{theorem}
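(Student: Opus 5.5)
For the designed part, I will work in normal coordinates at $p$, where the model reduces to Euclidean geometry up to $O(h^2)$ relative corrections. The construction is as follows. Take $N\le m+1$ tangent vectors forming a regular simplex scaled so that every pair is at distance $r=\rho h$, and set $\xi_i=\exp_p(v_i)$. The ratio $\rho\in(1,2)$ is chosen as a function of $\tau$ and $N$.

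In the Euclidean model $\R^m$, the centroid of a subset $A$ with $|A|=k$ lies at distance $r\sqrt{(k-1)/(2k)}$ from each member of $A$. From each $j\notin A$ it lies at distance $r\sqrt{(k+1)/(2k)}$. The active-set condition $C_A$ therefore asks for
\[
 r\sqrt{(k-1)/(2k)}<h<r\sqrt{(k+1)/(2k)}
\]
for every $k$, including $k=N$ (there the upper constraint is vacuous) and $k=1$ (which requires $r>h$). Taking $r$ slightly above $h$ satisfies all of these with margins bounded below uniformly. The lower bound $r\sqrt{(k-1)/(2k)}\le r/\sqrt2<h$ holds once $r<\sqrt2 h$. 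For the upper bound, $\sqrt{(k+1)/(2k)}>1/\sqrt2$, but this constraint approaches equality as $k\to\infty$. This is presumably why the margin degrades like $1/N$ and the bandwidth restriction $h\le c_\tau/\sqrt N$ enters: curvature perturbations of size $O(h^3)$ must stay below a margin of order $h/N$.

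Novelty at level $\tau h$ needs the distance $r\sqrt{(k-1)/(2k)}\ge r/2$ for $k\ge2$ to exceed $\tau h$. Here the condition $\tau<1/\sqrt2$ is used: it lets us pick $r\in(\max(h,2\tau h),\sqrt2 h)$ while keeping $r$ close to $h$.

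The curvature step proceeds in three parts.
\begin{itemize}
\item[(i)] By Theorem~\ref{thm:active} and the strong concavity from the local convexity regime, each subset has at most one candidate.
\item[(ii)] To obtain existence and location, I will apply an implicit-function or degree argument to \eqref{eq:mean} and \eqref{eq:correctedmean} in normal coordinates. There $\log_x\xi_i=v_i-x+O(h^3)$. The right side of \eqref{eq:correctedmean} is $\beta^{-1}\cdot O(\Ric\cdot h)=O(h^3)$, using $\grad\log\theta=O(|v|)$. Strong monotonicity of the linearization with constant $\asymp k$ then puts the solution within $O(h^3)$ of the exponential image of the tangent centroid.
\item[(iii)] Nondegeneracy comes from Theorem~\ref{thm:active}, since every critical point in the supported domain is a memory.
\end{itemize}
The main obstacle is keeping the $O(h^3)$ errors, and the $O(h^2)$ distortion of pairwise distances in $\exp_p$, below the $\Theta(h/N)$ active-set margins. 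This is exactly where the constant $c_\tau$ comes from, via $h^3\lesssim h^2\cdot h$ with $h^2\le c_\tau^2/N$.

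For the random part, I condition on $\mathcal A_T$, which by \eqref{eq:collisionevent} means all pairs are at distance $>h$. Under the scaling $N^2q_h\to2\lambda$, triples within distance $2h$ of one another occur with probability $O(N^3q_h^2)\to0$. Hence the only emergent memories come from pairs at separation $r\in(h,2h)$ with no third pattern nearby, and each such pair yields exactly one memory near its midpoint. For the geodesic model this is the exact midpoint. For the corrected model it is a balance point displaced by $O(h^3)$, using the pair discussion and the step-(ii) estimate above. The midpoint is at distance about $r/2$ from both patterns, so it is $\tau h$-novel iff $r\ge 2\tau h$ up to negligible boundary effects.

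A pair therefore contributes exactly when $d(X_i,X_j)\in(a_\tau h,2h)$. Locally, $\P\{d<sh\}\approx s^m q_h$ by the continuous density and the $h^m$ volume scaling. So the expected number of contributing pairs is
\[
 \binom N2 q_h\,(2^m-a_\tau^m)\to\lambda_\tau .
\]
I will then use the Poisson approximation for dissociated close-pair counts (Silverman--Brown or the Stein--Chen method, as in \citet{penrose}) jointly for the two counts: pairs closer than $h$, with limiting mean $\lambda$, and pairs in the annulus. These are asymptotically independent Poisson variables because they are counts of disjoint distance bands. Conditioning on the first being zero gives \eqref{eq:emergencepoisson}, and multiplying by $e^{-\lambda}$ gives \eqref{eq:globalprob}. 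The remaining technical check is that boundary events such as $r=2\tau h$ exactly, or other degenerate placements, have vanishing probability. This follows from the continuous density.
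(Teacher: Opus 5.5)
\textbf{The designed part fails as written, at the choice of simplex scale.} You claim that an edge length $r$ slightly above $h$ satisfies every active-set constraint with margins bounded below uniformly. It does not.

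For a pair $A=\{1,2\}$, the Euclidean centroid lies at distance $r\sqrt{3/4}$ from every other vertex. This is less than $h$ whenever $r<2h/\sqrt3$. So for $N\ge3$ the remaining vertices are active at each pair centroid, and no pair candidate lies in $C_A$.

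More generally, your two constraint families bind at $k=N$ (members) and at $k=N-1$ (nonmembers). Together they force
\[
 2-\frac2N<\frac{r^2}{h^2}<2+\frac2{N-1}.
\]
So $r$ must lie within $O(h/N)$ of $\sqrt2\,h$. Any ratio $r/h$ near $1$ fails, and so does any fixed ratio below $\sqrt2$ once $N$ is large. Your own remark that the nonmember constraint approaches equality as $k$ grows is exactly this lower pinning of $r$. It is not a harmless degradation of the margin at $r\approx h$.

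\textbf{The fix.} The paper takes $r=\sqrt2\,h$ exactly, using vertices $\exp_p(h(e_i-\bar e))$ with $e_i-\bar e$ embedded isometrically in $T_p\M$. The Euclidean squared member and nonmember distances are then $h^2(1-1/k)$ and $h^2(1+1/k)$. Both sides therefore have a squared-distance margin of at least $h^2/N$. This is the actual source of the $1/N$ margin and of the restriction $h\le c_\tau/\sqrt N$.

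This scale is also where $\tau<1/\sqrt2$ comes from. With $r\approx\sqrt2\,h$ forced, pair members sit at distance about $h/\sqrt2$ from their pair memory, which is the smallest member distance over all $k\ge2$. The reason is not that $r$ can be chosen freely in $(\max(h,2\tau h),\sqrt2 h)$.

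Once the scale is corrected, the rest of your designed argument goes through. You compare the $O(h^3)$ distance distortion and the $O(h^3)$ balance-point displacement against a $\Theta(h/N)$ margin. Existence comes from an inward-pointing or degree argument on a ball of radius $O(h^3)$ about the centroid. Uniqueness and nondegeneracy come from Theorem~\ref{thm:active}. This is the paper's rescaled $C^2$ perturbation argument in a different form.

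\textbf{The random-pattern part} matches the paper's proof:
\begin{enumerate}
\item the $2h$-graph consists of isolated vertices and pairs with probability tending to one;
\item each pair yields its midpoint (geodesic) or an $O(h^3)$-displaced balance point (corrected);
\item novelty holds iff $r>a_\tau h$;
\item a joint Poisson limit for the two disjoint distance bands, followed by conditioning on the no-close-pair event, gives the law.
\end{enumerate}
One small correction: the exceptional events are $O(h^3)$-thick shells around $r=2h$ and $r=2\tau h$, not exact equalities. So absolute continuity alone is not enough. You need the bounded density and the estimate that the expected number of pairs in such a shell is $O(N^2h^{m+2})\to0$.
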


\paragraph{Persistence of the designed count under curvature.}
Construct a regular simplex in $T_p\M$ and map its vertices to $\M$
through $\exp_p$. For a subset of size $k$, its Euclidean centroid has
squared distance $h^2(1-1/k)$ to member vertices and $h^2(1+1/k)$ to
nonmembers. Thus every nonempty subset satisfies the active-set conditions
with a squared-distance margin of at least $h^2/N$. After rescaling
coordinates by $h$, curvature perturbs squared distances and kernels
by $O(h^2)$. Comparing these perturbations with the rescaled margin
$1/N$ yields the sufficient condition $h\le c_\tau/\sqrt N$.
Appendix~\ref{app:simplex} bounds the critical-point displacement and
verifies the support conditions uniformly over all subsets.

For $N=3$ on a surface, the construction yields seven memories:
three original memories, three pair memories, and one triple memory.
For $N=m+1$, the total count is $2^{m+1}-1$.
The constant $c_\tau$ may depend on dimension through the underlying
geometry. On unit spheres, Corollary~\ref{cor:quantitative-sphere}
provides dimension-independent constants in the sufficient bounds
$h\le c_\tau/\sqrt N$ for the geodesic model and
$h\le c_\tau/\sqrt{mN}$ for the corrected model.
When $N=m+1$, both sufficient radii decrease only polynomially with
dimension. The result for general manifolds does not assume uniform
geometry across dimensions.

\paragraph{Why the random count does not grow with $N$.}
In the regime $N^2q_h\asymp1$, form a graph connecting patterns within
distance $2h$. With probability tending to one, every connected component
is an isolated vertex or an isolated pair. Retaining all original
patterns excludes pairs separated by less than $h$. Each remaining pair
at separation $r$ generates a midpoint memory in the geodesic model and
a memory displaced from the midpoint by $O(h^3)$ in the corrected model.
Novelty at scale $\tau h$ additionally requires $r\ge2\tau h$ to leading
order. The contributing pair separations therefore lie in
$(a_\tau h,2h)$, yielding the volume factor $2^m-a_\tau^m$.

For $\tau\le1/2$, the limiting Poisson distribution has mean
$(2^m-1)\lambda$. At fixed dimension $m$, this mean is independent of
$N$, and the count conditional on retaining all original patterns
remains bounded in probability. This complements the exponential count
for designed configurations. The underlying graph approximation is
classical; Appendix~\ref{app:emergence} establishes its correspondence
with the exact memory count, accounting for corrected pair locations
and the conditioning event.

\paragraph{What novelty measures.}
We scale the novelty threshold with $h$: every supported memory lies
at distance less than $h$ from an original pattern, so none can satisfy
a threshold $\eta\ge h$. Density consistency concerns a distinct
statistical regime, as discussed in Section~\ref{sec:capacity}.

\section{Retrieval and generation experiments}\label{sec:experiments}
We organize the experiments around the paper's two direct memory claims:
retrieval of retained patterns and generation of additional stable modes.
Figure~\ref{fig:main-simulation} tests both claims in controlled simulations;
Figure~\ref{fig:main-wordnet} tests retrieval on the full WordNet noun
hierarchy. Detailed protocols, additional diagnostics, denoising
experiments, raw-output descriptions, and full-size figures are in
Appendices~\ref{app:additional-simulations}--\ref{app:wordnet}. The
auxiliary EEG covariance retrieval experiments are retained in
Appendix~\ref{app:eeg}.

\paragraph{Synthetic retrieval.}
We first isolate the pointwise curvature claim of
Theorem~\ref{thm:isolated}. Each of $16$ independently sampled centers is
placed in its own singleton bank, and eight fixed tangent directions are
reused throughout a bandwidth sweep. Queries begin at distance $0.3h$, with
$h=\sqrt{2/\beta}$, and recall requires terminal distance at most $0.05h$.
On $S^{63}$, $E_\geo$ has $100\%$ recall throughout, whereas $E_\vc$ switches
from zero recall through $\beta=20.5$ to full recall from $\beta=20.8$,
bracketing the exact prediction $\beta^*=62/3=20.67$. Its sub-threshold
terminal radius matches the predicted off-center shell within
$3\times10^{-4}h$. Both energies remain stable on $SO(3)$, whose threshold
lies below the admissible range, and on affine-invariant $\mathrm{SPD}(3)$,
whose Ricci curvature is nonpositive. A singleton experiment formed from
measured leaf contours exhibits the corresponding predicted transition on
Kendall shape space rather than on simulated tangent perturbations
(Appendix~\ref{app:leaf-transition}).

\paragraph{Synthetic novel generation.}
For the maximal construction in Theorem~\ref{thm:emergence}, we map a
centered tangent simplex to $S^2$ and $SO(3)$ and enumerate every nonempty
active subset. Both energies realize exactly all $2^3-1=7$ and
$2^4-1=15$ predicted stable modes at every tested bandwidth, retaining all
originals. Blind support-annulus queries discover all $4$ and $11$ emergent
modes and reach one in $43.2\%$ and $33.3\%$ of trials. In the less
structured random-bank experiment, additional modes coexist with exact
original recall in $10/12$ retained $S^2$ banks and all $12$ retained
$SO(3)$ banks. Their median held-out density percentiles are $66$ and $70$,
showing that geometric novelty need not imply low probability under the
data-generating distribution.

\begin{figure}[!ht]
\centering
\includegraphics[width=0.92\linewidth]{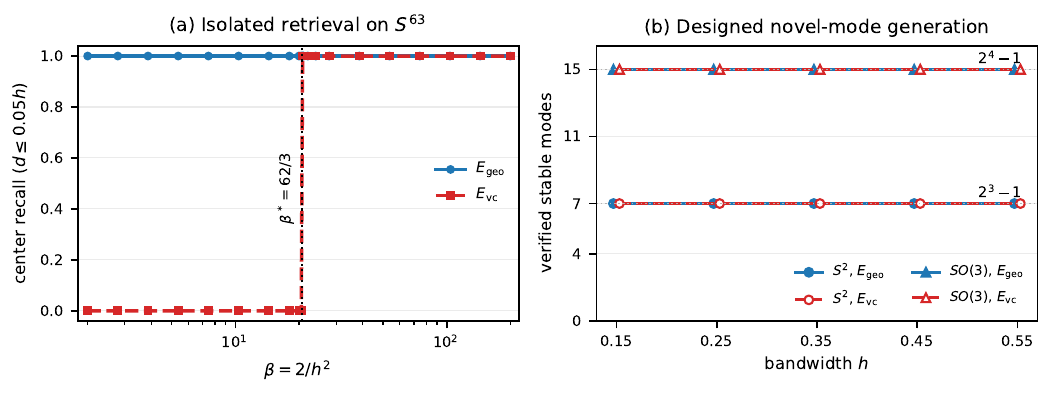}
\caption{Direct synthetic tests. Left: on $S^{63}$, the $E_\vc$ transition
brackets the exact Ricci threshold while $E_\geo$ recalls throughout
($16\times8=128$ paired trials per point). Right: both energies attain every
mode predicted by the designed simplex across the bandwidth sweep.\vspace{-0.15in}}
\label{fig:main-simulation}
\end{figure}

\paragraph{Full WordNet hierarchy.}
Topic-dependent curvature in language-model semantic spaces and gains from
curvature-aware retrieval routing motivate our text-domain test
\citep{wang2026curved}. We use all $82{,}115$ WordNet 3.0 noun synsets and
their $743{,}241$ hypernym-closure relations \citep{miller1995wordnet}.
Following \citet{nickel2017poincare}, each run trains a $10$-dimensional
Lorentz hierarchy embedding for $200$ epochs, holding out $128$ sufficiently
deep leaf concepts. A clean memory key uses all of a leaf's ancestor
relations; its query uses a nested $25\%$, $50\%$, or $75\%$ subset.
Euclidean memory on the corresponding Poincare coordinates is the
same-representation control.

The compact-support radius is selected from memory-to-memory distances to
retain at least $80\%$ of originals, without query outcomes. To average over
bank composition, each of five independent embedding and target-split seeds
uses $256$ deterministic random banks drawn from its $128$ held-out leaves.
Confidence intervals bootstrap these five seed-level means, not the
overlapping banks. Exact recovery additionally requires that the target is
retained and that the terminal point lies within $0.05h$ of it.

\paragraph{Corrected retrieval in 16-memory banks.}
With $16$ memories, $E_\vc$ exact recall is $0.419$, $0.617$, and $0.757$
as evidence increases, versus $0.339$, $0.560$, and $0.717$ for ambient
Euclidean memory. The paired gains are $0.080$, $0.057$, and $0.041$, with
$95\%$ seed-bootstrap intervals $[0.055,0.101]$, $[0.036,0.078]$, and
$[0.028,0.054]$; every seed improves. Gains over $E_\geo$ are $0.103$,
$0.093$, and $0.072$, with all three seed-bootstrap intervals bounded away
from zero. Thus corrected dynamics gives a reproducible benefit in this
low-capacity semantic retrieval setting.

\begin{figure}[!ht]
\centering
\includegraphics[width=0.92\linewidth]{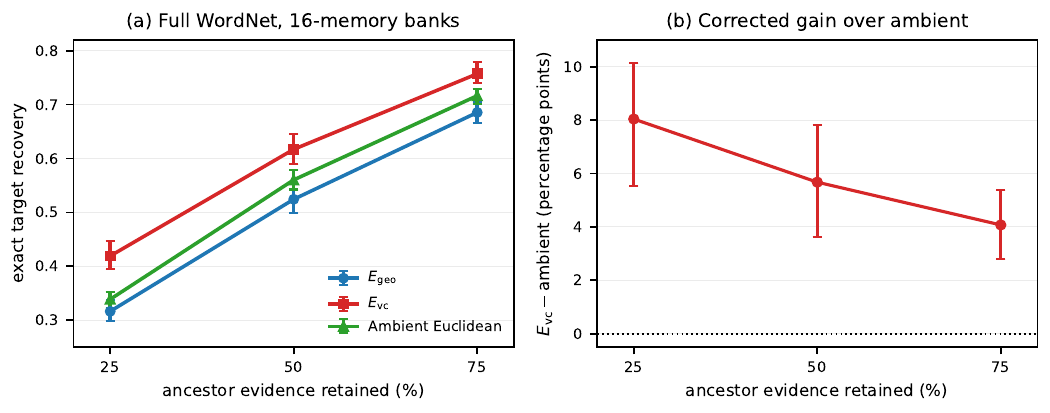}
\caption{Full-WordNet partial-relation retrieval. Left: exact recovery in
$16$-memory banks, averaged over $256$ banks for each of five embedding
seeds; bars show seed-bootstrap $95\%$ intervals. Right: paired $E_\vc$
gains over Euclidean memory on the same Poincare coordinates; all intervals
are positive. \vspace{-0.15in}}
\label{fig:main-wordnet}
\end{figure}

\section{Summary and limitations} \vspace{-0.1in}
We developed intrinsic Epanechnikov DAMs on Riemannian manifolds with
geodesic and volume-corrected energies. Curvature separates them: geodesic
memory retains isolated patterns, whereas corrected stability follows a
Ricci threshold, allowing positive curvature to erase and negative
curvature to strengthen memories. Riemannian mean shift gives exact
singleton retrieval. Support collisions yield $q_h^{-1/2}$ all-pattern and
$q_h^{-1}$ typical-pattern capacity, while overlap creates balance-point
memories: designed configurations realize all $2^N-1$ subset modes, but
random threshold-scale banks yield only Poisson-many. Experiments recover
the transition and designed modes; full-WordNet retrieval shows a volume-correction gain. Limitations include fixed-dimensional general-manifold random limits, convexity-dependent counting and
geometry-dependent constants. For future work building (looped) transformer-like layers from this DAM energy, to iterate intrinsic mean shift while learning manifold-valued keys,
queries, bandwidths, and representations end-to-end is interesting.
 \clearpage
\section*{AI use statement}
In this work, we used Generative AI to formulate mathematical claims, provide critical ingredients for proving mathematical claims (e.g., carry out some calculation that were checked by the authors), assist in the writing of proofs, design or provide feedback on research  methodology or experiments (e.g., asking AI whether a specific theoretical claim can be improved toward a specific direction), implement methods (e.g., write code), literature search (e.g., ask if we missed any existing literature), language editing, proof auditing, and \LaTeX{} assistance.  The authors independently checked the mathematical arguments, citations, code, and experimental claims and take responsibility for the final manuscript.

\section*{Acknowledgements}
KB is supported in part by National Science Foundation (NSF) grant DMS-2413426. The authors thank Benjamin Hoover, Dmitry Krotov and Parikshit Ram for helpful discussions.

\appendix
\numberwithin{theorem}{section}
\numberwithin{equation}{section}
\section{Extended related work and positioning}\label{app:related-work}

We position this work across dense associative memory, mode-seeking,
Riemannian statistics, and geometric learning by its state space, success
event, and use of geometry.

\paragraph{Associative memory and compact support.}
Dense associative memories use nonlinear interactions for storage beyond
pairwise Hopfield energies
\citep{krotov,demircigil,krotov2021}. Modern Hopfield networks connect this
view to attention through global softmax interactions
\citep{ramsauer}; the universal Hopfield framework separates similarity,
separation, and projection \citep{millidge}, and sparse or structured models
modify these choices \citep{hu,santos,stanhope}. A distributional extension
stores Gaussian measures under the $2$-Wasserstein metric and studies
barycentric fixed points, capacity, and perturbation recovery
\citep{tankala2026gaussian}. Epanechnikov memory instead uses compact support
\citep{hoover}: distant patterns cease to interact, while overlaps can create
additional modes. We transfer it to Riemannian manifolds, compare geodesic
and volume-corrected scores, derive the Ricci stability threshold, and
quantify capacity and emergence. Capacity comparisons must
match success events: \citet{demircigil} distinguish fixed-pattern stability
from simultaneous binary error correction, whereas our all-pattern and
typical-pattern capacities concern exact retention as nondegenerate manifold
modes; prescribed retrieval basins are a further requirement.

\paragraph{Mean shift and retrieval.}
Mean shift originated in density-gradient mode seeking \citep{fukunaga},
with the flat/Epanechnikov shadow-kernel relation explaining the uniform
weights over active samples \citep{cheng,comaniciu}. Riemannian mean shift
already uses logarithmic and exponential maps and distinguishes geodesic
from volume-corrected objectives \citep{subbarao}. We therefore do not claim
a new generic algorithm. Instead, we identify its flat-weight update as the
geodesic-memory retrieval rule and prove exact singleton recall, score
increase across active-set changes, and local linear convergence.
Euclidean gradient-line consistency \citep{ariascastro,ariaserrata} is a
distinct asymptotic statement and does not determine finite-sample
retention, emergence, or these retrieval guarantees.

\paragraph{Manifold KDE and geometric probability.}
The second energy uses the exponential-map volume correction from manifold
KDE \citep{pelletier}; related work studies estimator bias, geometric
structure, embedded distances, and boundaries
\citep{henry,kimpark,berry}. Our question is different: does each
finite-sample kernel center remain a stable memory? The Ricci term in the
local volume expansion shows that a statistically natural correction can
destabilize an isolated mode on positive curvature. All-pattern retention
and mean-square density consistency also require incompatible bandwidth
regimes (Appendix~\ref{app:normalization}). Intrinsic-mean convexity
\citep{karcher,afsari2011,afsari} supports the active-subset analysis,
whereas close-pair Poisson approximations \citep{silverman,penrose} underpin
the random capacity and emergence laws. Appendix~\ref{app:packing}
separates these random laws from designed storage with prescribed basins.

\paragraph{Non-Euclidean foundation models and generation.}
Hyperbolic attention \citep{gulcehre} and hierarchical embeddings
\citep{nickel2017poincare} established geometry-aware learning. Recent
position and workshop papers argue that foundation-model data exhibit
hierarchies, cycles, symmetries, multi-way relations, and non-isotropic
scaling that flat embeddings distort, motivating hyperbolic, spherical, and
mixed-curvature architectures
\citep{yang2025noneuclidean,he2025beyondeuclidean}. Riemannian score models
\citep{debortoli2022} and flow matching \citep{chen2024flow} generate valid
manifold states. These works motivate manifold-valued representations but do
not formulate a dense associative memory whose observations are finite-sample
modes or analyze curvature-dependent storage, retrieval, capacity, and
emergence. Our WordNet study uses a learned hierarchy as its retrieval state
space; it proposes neither an embedding nor a foundation model.

Our work complements modern Hopfield architectures, Riemannian mean shift, manifold
KDE, and non-Euclidean learning by explaining intrinsic compact-support
memory: curvature controls stability and balance, collisions control
capacity, and overlap creates modes. 
 
 \section{Additional retrieval and denoising experiments}
\label{app:additional-simulations}

\subsection{Detailed isolated-retrieval sweep}

To isolate the pointwise claim from pattern interference, each of $16$
independently sampled centers is placed in its own singleton bank. For each
center, eight fixed random tangent directions are reused throughout the
bandwidth sweep. Queries begin at distance $0.3h$, where
$h=\sqrt{2/\beta}$, and recall is declared when the terminal distance is at
most $0.05h$. Thus every trial satisfies the isolation hypothesis of
Theorem~\ref{thm:isolated}; no hyperparameter is selected.

Figure~\ref{fig:sim-detail} shows $100\%$ $E_\geo$ recall throughout. On
$S^{63}$, $E_\vc$ recall is zero through $\beta=20.5$ and one from
$\beta=20.8$, bracketing $\beta^*=(m-1)/3=20.67$. Below threshold, its
median terminal radius agrees with the analytically predicted off-center
shell to within $3\times10^{-4}h$. Both energies retain every center across
the tested range on $SO(3)$ and affine-invariant $\mathrm{SPD}(3)$.

\begin{figure}[!ht]
\centering
\includegraphics[width=\linewidth]{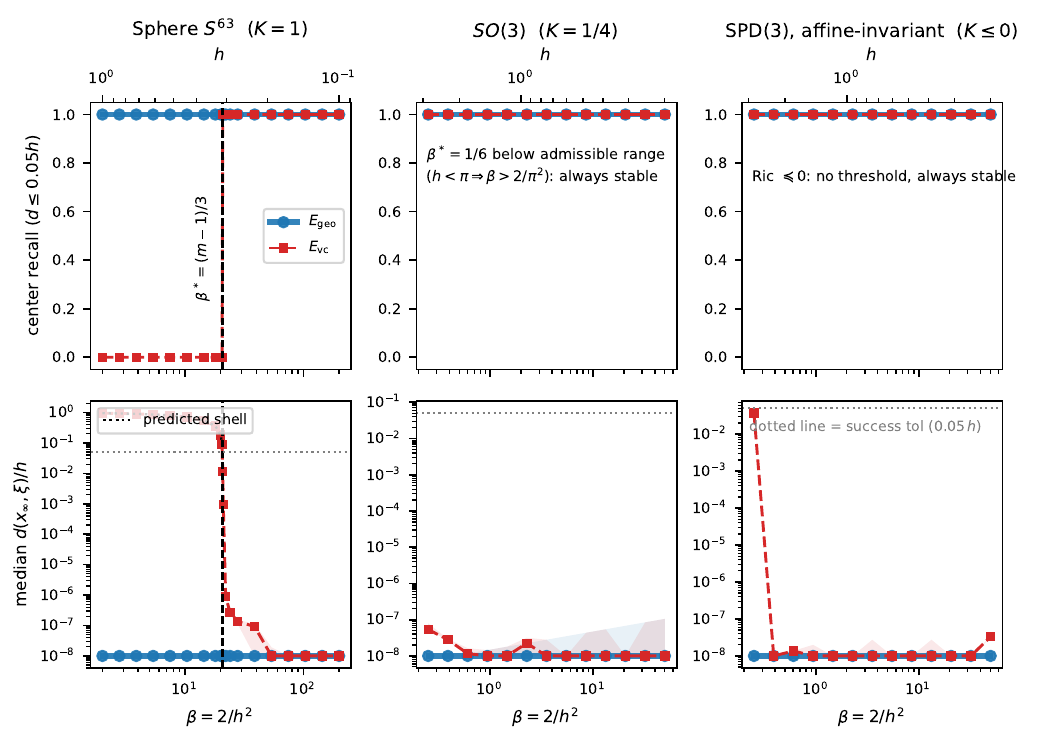}
\caption{Isolated singleton retrieval versus $\beta=2/h^2$. Top: recall
within $0.05h$. Bottom: median terminal distance normalized by $h$, with
repetition quartiles. The $E_\vc$ spherical model switches at
$\beta^*=62/3$ and its sub-threshold endpoint tracks the analytic shell.
On $SO(3)$ and $\mathrm{SPD}(3)$ the threshold does not bind.}
\label{fig:sim-detail}
\end{figure}

\subsection{Real-leaf replication of the isolated transition}
\label{app:leaf-transition}

We next test whether the pointwise transition remains visible when both the
memory centers and query perturbations are obtained from measured shapes.
We use all $225$ scans from three classes of the Swedish Leaf dataset
(Alnus incana, Salix alba `Sericea', and Salix cinerea), with $75$ scans per
class \citep{soderkvist2001}. Each segmented contour is represented by
$k=16$ petiole/apex-anchored landmarks. Removing translation and scale and
quotienting global planar rotation gives Kendall planar shape space
$\mathbb{CP}^{k-2}=\mathbb{CP}^{14}$ with Fubini--Study distance
\[
 d([z],[w])=\arccos |z^*w|.
\]
In this normalization, $\Ric=2(14+1)g=30g$, so
Theorem~\ref{thm:isolated} predicts the strict corrected-memory threshold
$\beta^*=30/3=10$.

For each scan, the contour extracted with maximum image side $512$ pixels
is placed in its own singleton bank. The paired query is produced by
resizing the same scan to maximum side $192$ pixels and rerunning
segmentation and landmark extraction. Thus the perturbations arise from the
image-processing pipeline rather than sampled tangent vectors. All $225$
queries lie inside the kernel support throughout the sweep, and recall uses
the same terminal-distance criterion $d\leq0.05h$ as the synthetic
experiment. No parameter is selected from query outcomes.

Figure~\ref{fig:leaf-transition} shows that $E_\geo$ returns every query to
its center throughout, whereas $E_\vc$ returns no query below $\beta=10$
and every query above it. At equality, the implemented dynamics converge to
the center through higher-order terms, but its Hessian is singular; the
open marker therefore does not count as a nondegenerate memory. This is a
real-data replication of the pointwise Ricci transition, not a claim of
superior multi-pattern retrieval.

\begin{figure}[!ht]
\centering
\includegraphics[width=0.72\linewidth]{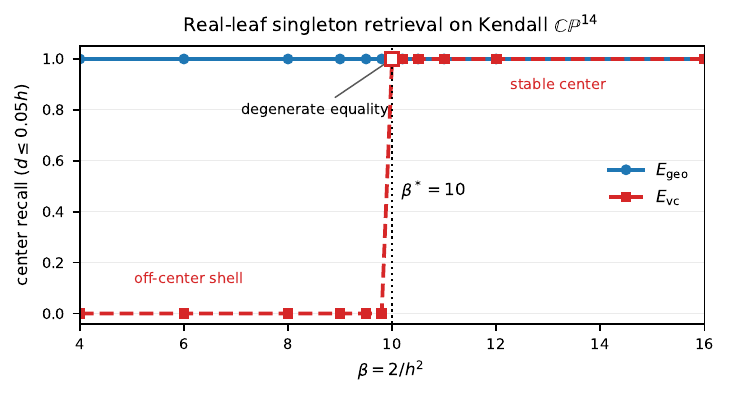}
\caption{Real-leaf singleton retrieval on Kendall
$\mathbb{CP}^{14}$. Each point aggregates $225$ paired
full-resolution/downsampled contour extractions. The exact prediction is
$\beta^*=10$: $E_\geo$ recalls throughout, while $E_\vc$ moves from an
off-center shell below threshold to the center above threshold. The open
marker at equality denotes dynamical convergence with a degenerate Hessian,
which is excluded by the memory definition.}
\label{fig:leaf-transition}
\end{figure}

\subsection{Held-out clustered denoising}
\label{app:denoising}

We simulate clustered observations on $S^{63}$, $SO(3)$, and
$\mathrm{SPD}(3)$ with its affine-invariant metric. There are respectively
$24$, $8$, and $12$ concept centers, none of which is stored. Each bank
contains $k=8$ noisy tangent-space perturbations per center. In each of five
independent repetitions, the centers are fixed across noise levels, while
the bank, four validation queries per center, and eight test queries per
center are sampled independently. Radius $h$ is chosen by validation error
for the two intrinsic energies. We likewise tune projected Euclidean
Epanechnikov retrieval \citep{hoover} and the inverse temperature of the
dot-product Hopfield update \citep{ramsauer}. Nearest stored trial is the
nonparametric baseline, while the true-cluster Karcher mean uses unavailable
membership labels and is only a diagnostic reference.

Figure~\ref{fig:denoise} reports held-out intrinsic error. $E_\geo$ reduces
nearest-neighbor error by $2.2$--$2.7\times$ at every tested setting and
usually reaches the true-cluster Karcher reference. On the sphere and
$SO(3)$, intrinsic and projected ambient methods are nearly
indistinguishable. The corrected spherical model separates at the largest
noise, where validation selects $\beta\approx10$ below the stability
threshold $62/3$: its error is $0.146$, versus $0.117$ for $E_\geo$.
Geometry matters strongly for scale-varied SPD banks. As $\sigma$ increases
from $0.15$ to $0.45$ and $0.8$, projected-Euclidean error grows from
$0.060$ to $0.504$ and $2.17$, while $E_\geo$ obtains $0.054$, $0.155$,
and $0.255$.

\begin{figure}[!ht]
\centering
\includegraphics[width=\linewidth]{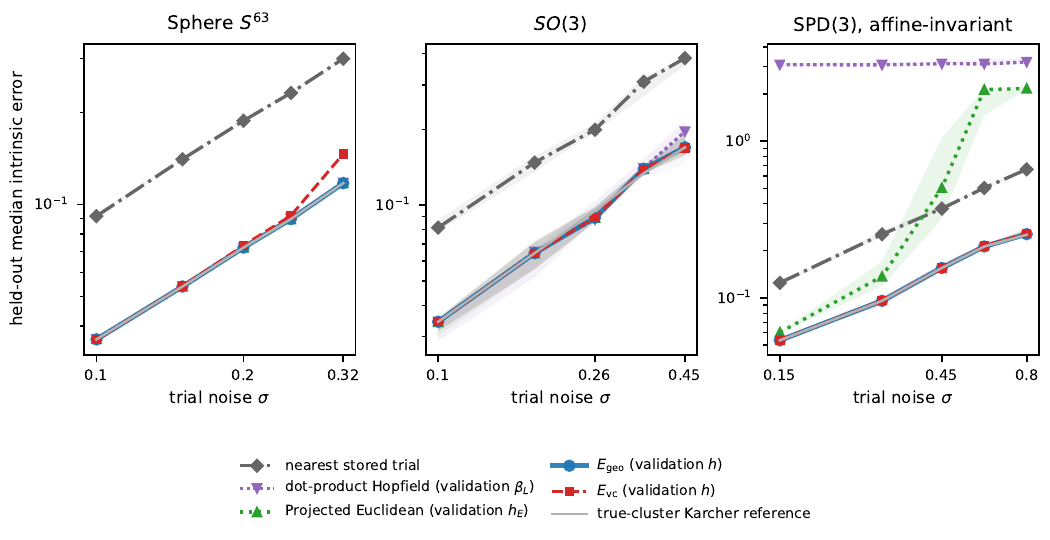}
\caption{Cluster denoising from $k=8$ stored noisy trials per concept.
Curves show median held-out center error over five independently sampled
banks; bands show the repetition interquartile range. Hyperparameters use
disjoint validation queries. The Karcher curve uses true cluster membership
and is not an attainable unlabeled baseline.}
\label{fig:denoise}
\end{figure}

\subsection{Geometry-sensitive asymmetric-noise simulations}
\label{app:geometry-sensitive}

The isotropic sphere and rotation experiments above are a useful
no-penalty control, but they do not separate intrinsic and projected ambient
aggregation: chordal distance is a monotone function of geodesic distance on
both manifolds, and symmetry places the intrinsic and extrinsic population
centers at the same point. Here we remove only that symmetry while keeping
the active-set comparison controlled.

\paragraph{Construction.}
We use $S^2$ and $SO(3)$. The low-dimensional sphere keeps all candidate
corrected-model bandwidths below the isolated-instability boundary,
separating aggregation bias from the high-dimensional transition in
Figure~\ref{fig:sim-detail}. For each independently sampled center $c$ and
random unit tangent direction $u$, one local bank contains $16$ trials near
$\exp_c(a u)$ and $10$ near $\exp_c(-1.6a u)$, each with isotropic tangent
jitter of scale $0.03$. Before jitter, the weighted log-offset is exactly
zero,
\[
 16(a u)+10(-1.6a u)=0,
\]
so $c$ is the intrinsic Fr\'echet center, while nonlinear embedding makes
the projected ambient mean drift as $a$ grows. Each concept is evaluated in
its own bank, isolating aggregation geometry from pattern interference. For
each of five independent repetitions we use $12$ centers, four validation
queries and eight held-out test queries per center; query noise has tangent
scale $0.08$. Radius and Hopfield temperature are selected by validation-set
center error exactly as in the clustered denoising study.

\paragraph{Results.}
Figure~\ref{fig:geometry-sensitive} shows that all averaging methods agree in
the local regime, but their targets separate with increasing spread. At
$a=0.9$ on $S^2$, median held-out error is $0.010$ for $E_\geo$, $0.139$
for $E_\vc$, and $0.229$ for both projected Euclidean Epanechnikov and the
validation-selected dot-product Hopfield update. On $SO(3)$ the corresponding
errors are $0.0067$, $0.087$, and $0.230$. Thus $E_\geo$ reduces the ambient
error by factors of $22$ and $34$, while $E_\vc$ reduces it by factors of
$1.65$ and $2.63$. The sample Karcher reference tracks $E_\geo$, confirming
that the gap is intrinsic-versus-extrinsic center bias rather than failed
optimization.

\begin{figure}[!ht]
\centering
\includegraphics[width=\linewidth]{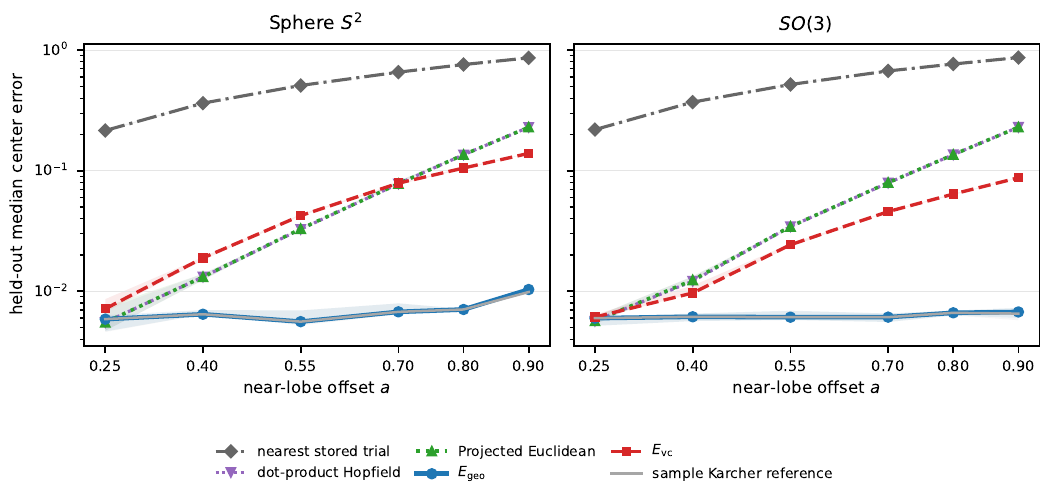}
\caption{Fr\'echet-center recovery under asymmetric manifold noise. Each
bank has $16$ trials near log-offset $+a u$ and $10$ near $-1.6a u$, so the
unjittered intrinsic mean is the center although the projected ambient mean
is biased. Curves show median held-out center error over five independent
repetitions; bands show the repetition interquartile range.}
\label{fig:geometry-sensitive}
\end{figure}

\section{Designed emergent-mode generation}
\label{app:designed-generation}

Unlike denoising, generation has no prescribed target center. A query is a
seed, and its converged output is counted as generated only if it is a stable
mode at least $\tau h$ from every stored pattern. We first instantiate the
maximal construction of Theorem~\ref{thm:emergence}, for which the complete
mode set is known.

\paragraph{Construction and verification.}
At a base point $p$, we place a centered regular simplex in $T_p\M$, scaled
so its vertices have pairwise distance $\sqrt{2}$, and store their images
under $v\mapsto\exp_p(hv)$. We use three vertices on $S^2$ and four on
$SO(3)$. Thus the theorem predicts respectively $2^3-1=7$ and
$2^4-1=15$ modes, of which $4$ and $11$ are non-singleton emergent modes.
For each $h\in\{0.15,0.25,0.35,0.45,0.55\}$ and each intrinsic energy, we
enumerate every nonempty subset, initialize at its sample Karcher mean, and
run retrieval using the full bank. A candidate is accepted only when its
active set equals the proposed subset, its dimensionless stationarity
residual is below $10^{-6}$, and a tangent finite-difference score Hessian
is negative definite. Novelty uses $\tau=1/2$.

We separately test reachability without supplying subset labels. At
$h=0.45$, $5000$ common queries are drawn by choosing an original uniformly,
choosing a radial offset uniformly in $[0.55h,0.99h]$, and choosing a random
tangent direction. This support-annulus proposal is designed to probe overlap
basins; its hit percentages are therefore conditional on the stated proposal,
not uniform volume fractions of the entire manifold.

\paragraph{Results.}
Both $E_\geo$ and $E_\vc$ realize all $7$ spherical modes and all $15$
rotation modes at every tested bandwidth, with all originals retained.
Across the sweep, the largest stationarity residual is
$1.7\times10^{-12}$; the generated points lie within $0.022h$ on $S^2$ and
$0.0064h$ on $SO(3)$ of their tangent-centroid predictions. Blind queries
reach an emergent mode in $43.2\%$ of spherical trials and $33.3\%$ of
rotation trials, discovering all $4$ and all $11$ emergent modes for each
energy. The symmetric construction makes the geodesic and corrected
locations nearly coincide; this experiment tests the existence and
reachability claims rather than a curvature advantage.

\begin{figure}[!ht]
\centering
\includegraphics[width=\linewidth]{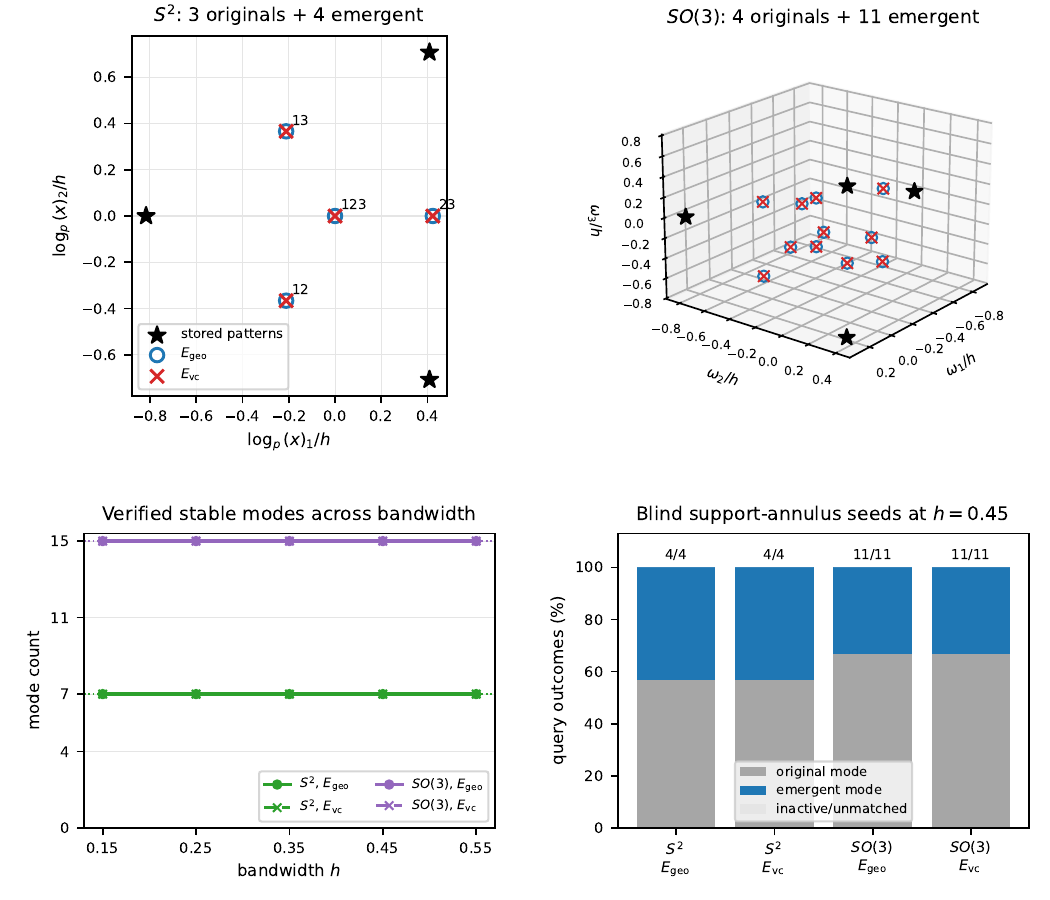}
\caption{Designed generation on $S^2$ and $SO(3)$. Top: stored simplex
vertices (stars) and every verified non-singleton mode in normalized
base-point log coordinates at $h=0.45$. Bottom left: the exact mode counts
persist across the bandwidth sweep. Bottom right: outcomes of $5000$ blind
support-annulus queries; labels give emergent modes reached over emergent
modes present. All original patterns remain stable.}
\label{fig:designed-emergence}
\end{figure}

\section{Quality of generation from retained random banks}
\label{app:random-generation}

The preceding construction deliberately realizes every subset. We next ask
whether additional modes from random data remain plausible under the
distribution that produced the bank. This separates geometric novelty from
statistical quality, following the distinction emphasized for Euclidean
Epanechnikov memories by \citet{hoover}.

\paragraph{Protocol.}
On each manifold we define an equal four-component tangent-normal mixture.
The $S^2$ component centers are tetrahedral, with tangent standard deviation
$0.28$; the $SO(3)$ centers are the identity and rotations of angle $2$
about the three coordinate axes, with tangent standard deviation $0.32$.
Each bank has five independently perturbed points per component. We use fixed
bandwidths $h=0.08$ and $h=0.17$, respectively, and rejection-sample $12$
independent banks conditional only on every inter-pattern distance exceeding
$h$. Hence every original is isolated. Pairs at distance below $2h$ may,
but need not, produce overlap modes.

For each retained bank, candidates are collected from every overlapping-pair
midpoint and from $4000$ blind support-annulus queries generated as above.
Outputs are deduplicated at distance $0.02h$ and must be fixed points with at
least two active patterns and novelty at least $0.5h$; intrinsic candidates
additionally pass a negative-definite score-Hessian check. We report the
principal tangent-normal mixture log density as a percentile among $3000$
independent held-out samples; the neglected probability outside the
injectivity radius is numerically negligible at these scales. A $50$th
percentile output therefore has typical held-out density. Projected Euclidean
Epanechnikov retrieval is included as a local-geometry control.

\paragraph{Results.}
Emergent modes occur in $10/12$ retained $S^2$ banks and all $12/12$
retained $SO(3)$ banks. For $E_\geo$, the median numbers per bank are $3$
and $2$, while median normalized novelty is $0.75$ and $0.83$. Among banks
with an emergent mode, median per-bank quality is the $66$th held-out
percentile on $S^2$ and the $70$th on $SO(3)$, compared with stored-bank
medians at the $58$th and $46$th percentiles and uniform-manifold references
at the $3$rd and $0$th.
Blind queries reach a novel mode in median $2.82\%$ and $1.54\%$ of trials;
the median discovered-mode coverage is $100\%$. Perturbations kept inside
the measured isolation margin recall every original.

At these retention-scale bandwidths, $E_\geo$, $E_\vc$, and projected
Euclidean retrieval agree to plotting precision. The experiment therefore
supports a compact-support emergence claim---new stable and statistically
plausible points coexist with exact storage---but does not claim that
Riemannian geometry alone creates semantic creativity. A decoder-valued
latent experiment would be required for that stronger interpretation.

\begin{figure}[!ht]
\centering
\includegraphics[width=\linewidth]{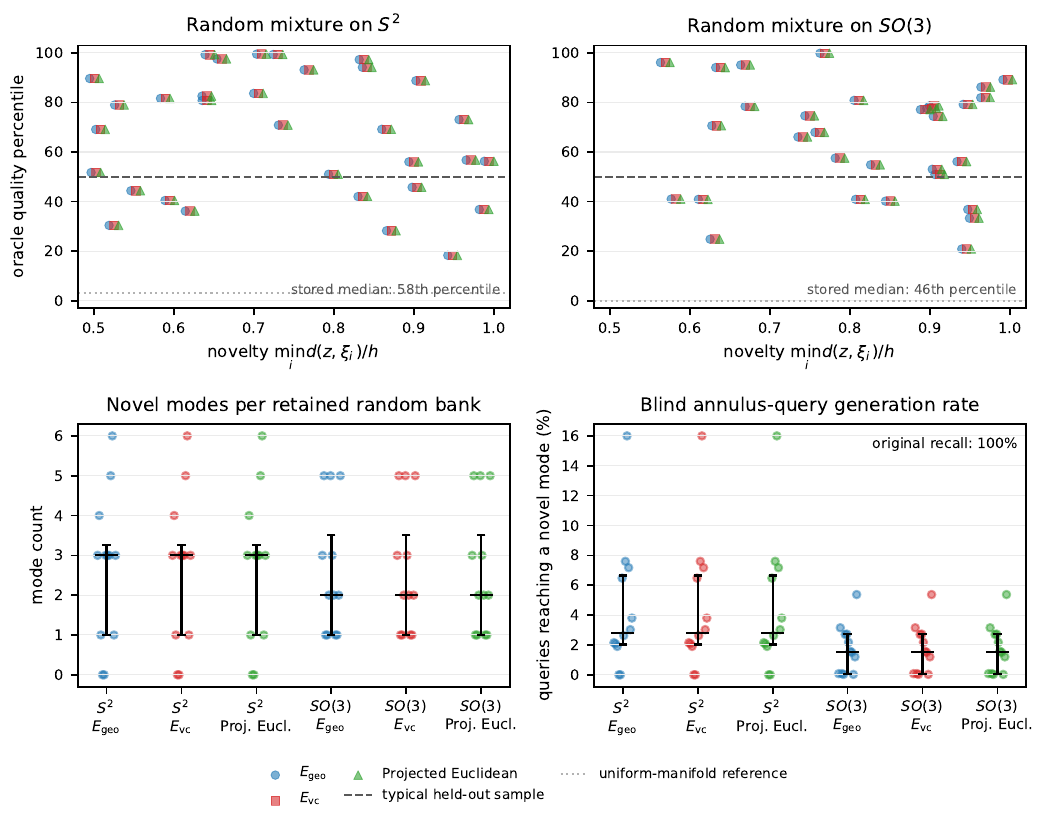}
\caption{Generation from random banks conditioned on retaining every
original. Top: oracle density percentile versus normalized novelty for every
stable generated mode; the dashed line is the density of a typical held-out
sample and the dotted line is the median uniform-manifold reference. Bottom:
mode counts and the fraction of blind support-annulus queries reaching a
novel mode across $12$ retained banks. Zero counts are retained banks with no
overlap-generated mode. All tested original-recall probes succeed.}
\label{fig:generative-quality}
\end{figure}
 \clearpage
\section{Full-WordNet hierarchy retrieval}
\label{app:wordnet}

This appendix gives the complete protocol for the semantic-hierarchy
experiment summarized in Section~\ref{sec:experiments}. The principal
comparison is intrinsic memory against ordinary Euclidean memory on the
same Poincare coordinates, so representation quality is held fixed.

\subsection{Dataset, embeddings, and held-out concepts}

We parse the WordNet 3.0 noun database directly \citep{miller1995wordnet}
and retain all descendants of \texttt{entity.n.01}. The resulting directed
acyclic graph contains $82{,}115$ noun synsets and $743{,}241$ ordered
synset--ancestor pairs in its hypernym closure. For each seed, $128$ leaf
concepts of depth at least seven are held out from embedding training. The
remaining $81{,}987$ concepts are anchors.

We train a $10$-dimensional Lorentz embedding with the sampled-softmax
hierarchy objective used for Poincare representations
\citep{nickel2017poincare}. Each of the five independent runs uses $200$
epochs, $300{,}000$ sampled positive relations per epoch, $50$ filtered
negative relations per positive, batch size $1024$, learning rate $0.5$,
and a $20$-epoch burn-in at one hundredth of that rate. A fixed set of
$1000$ held-out relations measures filtered ranking quality. Across seeds
83--87, mean reciprocal rank is $0.098$, with range $[0.075,0.109]$; mean
rank is $477$.

For each held-out leaf, its clean point is inferred from every available
ancestor relation while anchor points remain fixed. Partial-query points
use nested random subsets containing $25\%$, $50\%$, or $75\%$ of those
relations, with three independent relation orderings per target. Clean and
partial points are optimized for $150$ epochs with batch size $512$, the
same negative count, and the same Lorentz learning rate. The Poincare chart
is obtained exactly from the inferred Lorentz points.

\subsection{Retrieval protocol and statistical unit}

A memory bank contains clean points for a subset of the $128$ held-out
leaves. For each method and bank, let $s_i$ be the nearest-neighbor distance
of stored point $i$. The support radius is the largest strict open radius
for which at least $80\%$ of stored points satisfy $s_i>h$. This depends
only on the bank, never on a query or its target identity. A retrieval is
successful when the target is among the retained originals and the terminal
point lies within $0.05h$ of it.

We compare $E_\geo$, $E_\vc$, and ordinary Euclidean Epanechnikov memory on
the same Poincare coordinates. The latter is called \emph{ambient
Euclidean}; it isolates the memory geometry from the learned
representation. To average over bank composition, the analysis uses $256$
deterministic random banks within each embedding seed. The independent
statistical units remain the five embedding and target-split seeds, and
reported intervals bootstrap their five mean paired differences.

\begin{table}[!ht]
\centering
\small
\setlength{\tabcolsep}{4.5pt}
\begin{tabular}{ccrrrr}
\toprule
Bank & Evidence & Ambient & $E_\geo$ & $E_\vc$ &
  $E_\vc-\mathrm{Ambient}$ \\
\midrule
16 & $25\%$ & .339 & .316 & .419 & $.080\ [.055,.101]$ \\
16 & $50\%$ & .560 & .524 & .617 & $.057\ [.036,.078]$ \\
16 & $75\%$ & .717 & .686 & .757 & $.041\ [.028,.054]$ \\
\bottomrule
\end{tabular}
\caption{Exact target recovery on the full WordNet noun hierarchy. Entries
are five-seed means after averaging $256$ banks within seed. Brackets give
the $95\%$ seed-bootstrap interval for the paired difference.}
\label{tab:wordnet-retrieval}
\end{table}

At bank size $16$, all five seed-level $E_\vc$ differences are positive.
The mean $E_\vc-E_\geo$ improvements are $0.103$, $0.093$, and $0.072$,
with seed-bootstrap intervals $[0.083,0.118]$, $[0.075,0.108]$, and
$[0.059,0.083]$. These paired improvements show that the volume correction
contributes beyond the use of hyperbolic distance alone.

 \section{EEG covariance-state storage and retrieval}
\label{app:eeg}

We test whether the memory definitions lead to measurable behavior on real
manifold-valued observations. The represented object is an EEG spatial
covariance descriptor, not a time-domain signal. This auxiliary real-data
study is reported only in the appendix; the main experimental section
focuses on the full WordNet hierarchy.

\paragraph{Data and covariance representation.}
We use BCI Competition IV-2a \citep{bci2a}, loaded through MOABB
\citep{moabb}. It contains nine subjects, two recording sessions, $22$ EEG
channels, four motor-imagery classes, and $288$ trials per subject and
session. Following the covariance-based Riemannian BCI representation
\citep{barachant2012}, each $2$--$6$ second post-cue epoch is band-pass
filtered to $8$--$30$ Hz. For a centered channel-by-time matrix $X$, we use
\[
 C=(1-\rho)\frac{XX^\top}{T-1}
   +\rho\,\frac{\operatorname{tr}(XX^\top/(T-1))}{22}I,
 \qquad \rho=0.05,
\]
then divide by $\det(C)^{1/22}$. The resulting covariance shapes lie on the
unit-determinant part of affine-invariant $\operatorname{SPD}(22)$. The full
four-second covariance is a stored target. Four evenly positioned
covariances at each of one, two, and three seconds provide real
partial-observation queries from the same physical trial.

\subsection{Storage and paired retrieval}

\paragraph{Protocol.}
Within held-out session 2, deterministic class-balanced farthest-first
selection gives nested banks of $32$, $64$, and $96$ full-trial
covariances. The same first eight targets per class are queried at every bank
size, so adding distractors does not change target difficulty. The displayed
results use $96$ stored covariances and all $32\times4$ paired queries.

For a requested retention fraction $r$, the radius is the largest value
determined from bank-to-bank distances alone for which at least fraction
$r$ of originals have no neighbor inside their open support. No query,
target identity, or class label tunes this radius. We report the
\emph{target singleton-basin fraction}: the fraction of queries for which
the true target is the unique active memory. For $E_\geo$, every such query
recalls its target exactly in one step by
Theorem~\ref{prop:descent}. $E_\vc$ has the same support event, and an
isolated target is stable on affine-invariant SPD because its Ricci curvature
is nonpositive; the statement is not a one-step guarantee for the corrected
dynamics.

The compact controls use matrix-log Euclidean distance or Frobenius distance
in the native sensor coordinates. Affine-invariant nearest neighbor is
included as a radius-free readout reference, not as a mode-generating energy.

\paragraph{Results.}
At $95\%$ requested storage retention, the mean target singleton-basin
fractions for intrinsic support are $0.018$, $0.683$, and $0.940$ for one-,
two-, and three-second queries. At two seconds, native log-Euclidean support
obtains $0.658$ and native projected Euclidean support obtains $0.080$. The
corresponding affine-nearest-neighbor target recall is $0.941$, but it
supplies neither a compact-support retention event nor additional modes.

Thus the intrinsic model improves the native log-Euclidean singleton-basin
fraction by $0.025$ on average and the native projected Euclidean result by
$0.603$. For the log-Euclidean comparison, the subject-paired bootstrap
$95\%$ interval is $[0.008,0.043]$ and the Holm-adjusted exact sign-flip
$p$-value is $0.0469$. The increase from one- to three-second queries
quantifies how longer partial observations enlarge target singleton basins
while retaining $95\%$ of the $96$ trial memories.

We separately apply common determinant-one channel transforms with condition
numbers from $1$ to $100$. Intrinsic recall is unchanged at $0.683$ by
construction. Native log-Euclidean recall decreases from $0.658$ to $0.583$,
and native projected Euclidean recall falls from $0.080$ to $0.055$. This
demonstrates an inherent equivariance advantage over native-coordinate flat
metrics.

\raggedbottom
\begin{figure}[H]
\centering
\includegraphics[width=\linewidth]{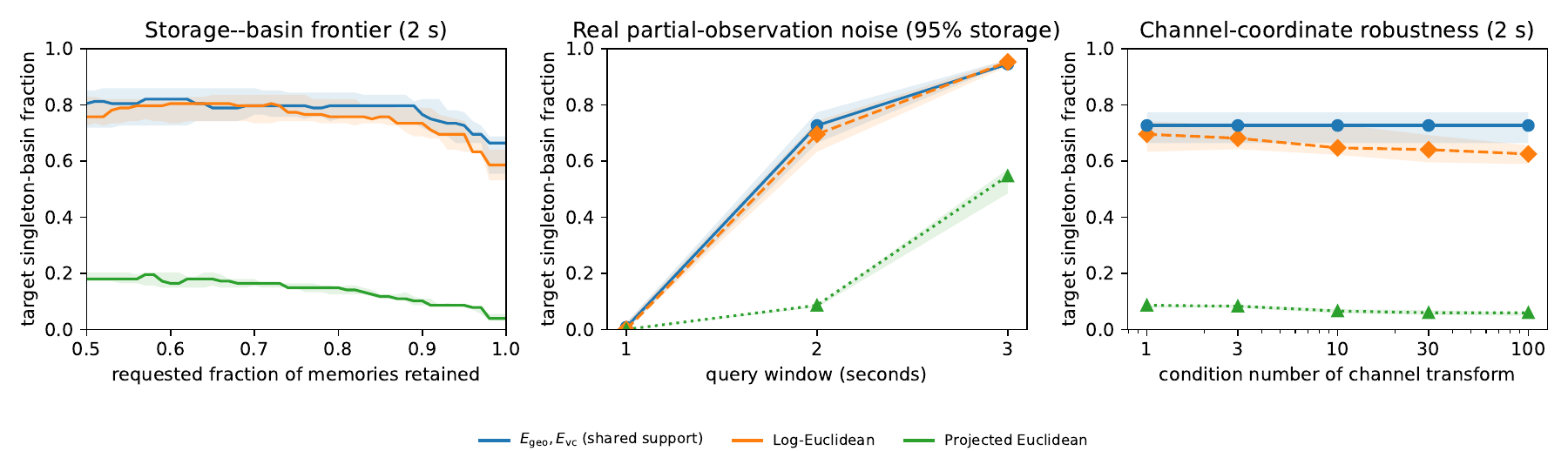}
\caption{Paired EEG covariance storage and retrieval on held-out session 2.
Left: requested original retention versus the fraction of real two-second
queries whose target is the unique active memory. Middle: query duration at
$95\%$ requested retention. Right: a common invertible channel
reparameterization. Curves and bands are the subject median and IQR. The
singleton event certifies one-step exact recall for $E_\geo$ and identifies
an isolated corrected basin for $E_\vc$; it is not claimed as one-step
corrected recall. Native log-Euclidean is the strongest flat control, while
native projected Euclidean support is substantially weaker.}
\label{fig:eeg-memory}
\end{figure}

\paragraph{Scope.}
This experiment tests content-addressable covariance storage and paired
retrieval, not end-to-end BCI decoding. The target singleton event directly
matches the paper's retrieval guarantee, while the channel-transform study
tests the expected intrinsic equivariance.

\section{Local geometric identities}\label{app:geometry}

We use the curvature convention for which the unit sphere has $\Ric=(m-1)g$. The basic geometric facts about normal coordinates and intrinsic means are compatible with the conventions in~\cite{pelletier,afsari}. The derivatives needed here are derived below. Lemma~\ref{lem:jets} is local and remains valid on a complete noncompact manifold within normal domains; compactness enters only the subsequent uniform bounds.

\begin{lemma}[Distance derivatives and volume-density jet]\label{lem:jets}
For $q_p(x)=d(x,p)^2/2$ in the normal domain,
\[
 \grad q_p(x)=-\log_xp,\qquad dq_p(p)=0,\qquad\Hess q_p(p)=g_p.
\]
Moreover,
\begin{equation}
 \theta_p(\exp_p v)=1-\tfrac16\Ric_p(v,v)+O(\norm v^3),\label{eq:thetaseries}
\end{equation}
so, for $a_p=\theta_p^{-1}$,
\[
 a_p(p)=1,\quad da_p(p)=0,\quad\Hess a_p(p)=\tfrac13\Ric_p.
\]
For points in the common short normal domain, $\theta_p(x)=\theta_x(p)$.
\end{lemma}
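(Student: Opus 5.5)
I would prove the statements in the order listed, working in normal coordinates centred at $p$ throughout.

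\textbf{Distance derivatives.} Write $x=\exp_p v$ with $\norm v$ less than the injectivity radius, so that $q_p(x)=\tfrac12\norm v^2$.
\begin{itemize}
\item For the gradient I would use the first variation formula, i.e.\ the Gauss lemma. The function $r(x)=d(x,p)$ is smooth away from $p$ in the normal domain, and $\grad r$ is the unit radial vector pointing away from $p$. This radial vector at $x$ equals $-\log_x p/\norm{\log_x p}$. Hence $\grad q_p=r\,\grad r=-\log_xp$.
\item At $x=p$ this vanishes, so $dq_p(p)=0$.
\item In normal coordinates $g_{ij}(p)=\delta_{ij}$ and the Christoffel symbols vanish at $p$. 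The coordinate Hessian of $\tfrac12\sum_i(v^i)^2$ is therefore the identity there, giving $\Hess q_p(p)=g_p$.
\end{itemize}

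\textbf{Volume-density jet.} I would take the standard normal-coordinate expansion of the metric,
\[
g_{ij}(v)=\delta_{ij}-\tfrac13R_{ikjl}v^kv^l+O(\norm v^3).
\]
Since $\theta_p(\exp_pv)=\sqrt{\det g_{ij}(v)}$, the expansion $\det(I+B)=1+\operatorname{tr}B+O(\norm B^2)$ gives
\[
\sqrt{\det g}=1-\tfrac16\Ric_p(v,v)+O(\norm v^3).
\]
The main care point is the sign convention. It is fixed by requiring $\Ric=(m-1)g$ on the unit sphere, and I would check it against the exact sphere density $(\sin r/r)^{m-1}=1-\tfrac{m-1}{6}r^2+\dots$.

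An alternative route, if one wants to avoid the metric expansion, is via Jacobi fields. Write $J(t)=tE-\tfrac{t^3}{6}R(E,\dot\gamma)\dot\gamma+O(t^4)$. Then $\theta$ is the determinant of $J(r)/r$ on an orthonormal frame, which gives the same expansion.

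\textbf{Reciprocal.} Inverting the series, $a_p=\theta_p^{-1}=1+\tfrac16\Ric_p(v,v)+O(\norm v^3)$.
\begin{itemize}
\item The constant term gives $a_p(p)=1$.
\item There is no linear term, so $da_p(p)=0$.
\item At $p$ the Christoffel symbols vanish, so the Riemannian Hessian equals the coordinate second derivative of the quadratic form $\tfrac16\Ric_p(v,v)$. This gives $\Hess a_p(p)=\tfrac13\Ric_p$.
\end{itemize}

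\textbf{Symmetry $\theta_p(x)=\theta_x(p)$.} This is the one step that is not purely local at $p$, and I expect it to be the main obstacle. My plan is to express both densities through Jacobi fields along the unique minimizing geodesic $\gamma$ from $p$ to $x$, parametrized on $[0,1]$.

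Define the matrix solutions $A(t)$ and $B(t)$ of the Jacobi equation in a parallel orthonormal frame, with
\[
A(0)=0,\ A'(0)=I,\qquad B(1)=0,\ B'(1)=-I.
\]
Then $\theta_p(x)=\det A(1)$ and $\theta_x(p)=\det B(0)$, up to the common factor coming from $\norm v^{m-1}$ scaling, which is symmetric.

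The Wronskian $W(t)=A^\top B'-A'^\top B$ is constant in $t$, because the curvature operator is symmetric. Evaluating it at $t=0$ and at $t=1$ gives
\[
-B(0)=-A(1)^\top.
\]
Hence $\det B(0)=\det A(1)$.

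The remaining detail I would verify is the normalization. The density must be taken relative to Lebesgue measure on the tangent space at the same tangent length, and the endpoint conditions must match the differential of $\exp$ at $v$ versus at $-\log_xp$. Both differentials are governed by $A(1)$ and $B(0)$, with the same length $d(p,x)$, so the normalizations agree.
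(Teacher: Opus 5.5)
Your proposal is correct and follows essentially the paper's argument. It uses a first-variation (Gauss lemma) computation for $\grad q_p$, normal coordinates at $p$ for the jets of $q_p$ and $a_p$, and the same constant-Wronskian identity $B(0)=A(1)^{T}$ for the symmetry $\theta_p(x)=\theta_x(p)$. The only variation is that you obtain the volume jet primarily from the normal-coordinate metric expansion. The paper instead expands the Jacobi matrix $A(t)=tI-\tfrac16t^3R(0)+O(t^4)$ directly, which is the alternative you list. Also, with your $[0,1]$ parametrization $A(1)$ is exactly $d(\exp_p)_v$ in parallel frames, so no extra normalizing factor arises in the symmetry step.
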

\begin{proof}
Let $x(s)$ be a variation of $x$ with derivative $w$, and let $\gamma_s:[0,1]\to\M$ be the minimizing geodesic from $x(s)$ to $p$, parameterized with constant speed. In the normal domain this is a smooth variation. Its energy is $q_p(x(s))$. If $V=\partial_s\gamma_s|_{s=0}$, differentiation and integration by parts give
\[
 dq_p(x)[w]=\int_0^1\ip{D_tV}{\dot\gamma_0}\,dt
 =\left[\ip V{\dot\gamma_0}\right]_0^1
 -\int_0^1\ip V{D_t\dot\gamma_0}\,dt
 =-\ip w{\log_xp}.
\]
Here $V(1)=0$ and $D_t\dot\gamma_0=0$. This proves the gradient formula. In normal coordinates at $p$, $q_p(\exp_pv)=\norm v^2/2$, giving its value, differential, and Hessian at $p$.

For the volume jet, fix a unit vector $u$ and let $\gamma(t)=\exp_p(tu)$. Express Jacobi fields in a parallel orthonormal frame. The Jacobi matrix satisfies
\[
 A''(t)+R(t)A(t)=0,\qquad A(0)=0,\quad A'(0)=I,
\]
where $R(t)z=R(z,\dot\gamma(t))\dot\gamma(t)$. Thus $A''(0)=0$ and $A'''(0)=-R(0)$, whence
\[
 A(t)=tI-\tfrac16t^3R(0)+O(t^4).
\]
The differential of the exponential map at $tu$, represented in this frame, is $A(t)/t$. Expanding the determinant by its multilinearity gives
\[
 \det(A(t)/t)=1-\tfrac16t^2\operatorname{tr}R(0)+O(t^3)
 =1-\tfrac16t^2\Ric_p(u,u)+O(t^3).
\]
Varying $u$ proves~\eqref{eq:thetaseries}. Taking the reciprocal gives
$a_p(\exp_pv)=1+\Ric_p(v,v)/6+O(\norm v^3)$ and hence the asserted derivatives.

For symmetry, let $r=d(p,x)$ and use a parallel frame along the short geodesic from $p$ to $x$. In addition to $A$, define the reversed Jacobi matrix $B$ by
\[
 B''+RB=0,\qquad B(r)=0,\qquad B'(r)=-I.
\]
Since $R$ is symmetric,
\[
 \frac d{dt}(A^TB'-A'^TB)=A^TB''-A''{}^TB=0.
\]
At $t=0$ this constant matrix is $-B(0)$; at $t=r$ it is $-A(r)^T$. Therefore $B(0)=A(r)^T$. The two volume densities are respectively $\det A(r)/r^m$ and $\det B(0)/r^m$, with positive determinants before conjugacy. Their equality proves the result.
\end{proof}

\begin{assumption}[Convex supports and corrected-kernel concavity]\label{ass:convex}
Each $B_h(\xi_i)$ is strongly geodesically convex, and
\[
 \Hess q_i\succeq c g\quad\text{on }B_h(\xi_i),\qquad c>0.
\]
For the corrected energy also assume, on these balls,
\[
 a_i\ge a_*>0,\qquad\norm{\grad a_i}\le L,\qquad\norm{\Hess a_i}\le H,
 \qquad \mu:=\beta a_*c-2\beta hL-H>0.
\]
The inequalities for tensors are pointwise; their norms are operator norms induced by $g$.
\end{assumption}

The derivative bound is a sufficient condition for the corrected component concavity used in Theorem~\ref{thm:active}. The isolated-pattern criterion of Theorem~\ref{thm:isolated} does not require it.

\begin{lemma}[Uniform small-bandwidth bounds]\label{lem:small}
On a fixed smooth compact manifold there is $h_0>0$ such that for $0<h<h_0$ all radius-$h$ balls are strongly convex, and uniformly for $d(x,p)<h$,
\[
 \Hess q_p=g+O(h^2),\quad a_p=1+O(h^2),\quad
 \norm{\grad a_p}=O(h),\quad\norm{\Hess a_p}=O(1).
\]
Consequently Assumption~\ref{ass:convex} holds, and its $\mu$ can be bounded below by $c_0h^{-2}$ for some $c_0>0$.
\end{lemma}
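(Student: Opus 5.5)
The plan is to use compactness to get uniform geometric control at small scales, and then check each item of Assumption~\ref{ass:convex} by substituting these bounds.

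\textbf{Convexity of small balls.} On a fixed smooth compact manifold, the sectional curvature is bounded above by some $K_{\max}$, and $\inj(\M)>0$. Whitehead's theorem, in its standard quantitative form, then says that every ball of radius less than $\tfrac12\min\{\inj(\M),\pi/\sqrt{K_{\max}}\}$ is strongly geodesically convex. Any $h_0$ below that value works.

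\textbf{Uniform jets.} I would work in normal coordinates at $p$ with $x=\exp_pv$ and $\norm v<h$.
\begin{itemize}
\item \emph{Hessian of $q_p$.} Lemma~\ref{lem:jets} gives $\Hess q_p(p)=g_p$. The Jacobi-field comparison says that $\Hess q_p$ at $x$ equals the identity plus a correction of order $K r^2$, where $r=d(x,p)$. This follows from Rauch comparison, or from the Riccati equation for the shape operator of distance spheres. Since the curvature and its covariant derivatives are bounded uniformly in $p$ by compactness, the constant is uniform, so $\Hess q_p=g+O(h^2)$.
\item \emph{Volume density.} The map $(p,v)\mapsto\theta_p(\exp_pv)$ is smooth on the compact set $\{\norm v\le h_0\}\subset T\M$, and it is bounded below by a positive constant there. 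The expansion~\eqref{eq:thetaseries}, which has no linear term, therefore holds with a remainder uniform in $p$.
\item \emph{Derivatives of $a_p$.} Lemma~\ref{lem:jets} gives $a_p(p)=1$, $da_p(p)=0$ and $\Hess a_p=O(1)$. Taylor's theorem with uniform $C^3$ bounds on $a_p\circ\exp_p$ then yields $a_p=1+O(h^2)$ and $\norm{\grad a_p}=O(h)$.
\end{itemize}
The one technical point is converting between coordinate derivatives and covariant ones. In normal coordinates the Christoffel symbols are $O(\norm v)$ and the metric is $\delta+O(\norm v^2)$, both uniformly, so the conversion costs only bounded factors.

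\textbf{Checking the assumption.} For $h$ small, take the constants
\[
a_*=1-Ch^2\ge\tfrac12,\qquad c=1-Ch^2\ge\tfrac12,\qquad L=Ch,\qquad H=C.
\]
Using $\beta=2/h^2$,
\[
\mu=\beta a_*c-2\beta hL-H\ \ge\ \frac{2}{h^2}\cdot\frac14-\frac{2}{h^2}\cdot 2Ch^2-C=\frac{1}{2h^2}-5C,
\]
which is at least $c_0h^{-2}$ with $c_0=1/4$ once $h$ is small. The danger is the cross term $2\beta hL$: it is only $O(1)$, rather than of the same order as the leading term, precisely because the gradient of $a_p$ vanishes at the center.

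\textbf{Main obstacle.} The expected difficulty is the uniformity of the $O(h^2)$ Hessian estimate for $q_p$ at off-center points. I would handle it via the smooth dependence of $(p,x)\mapsto\Hess_x q_p$ on the compact set $\{d(p,x)\le h_0\}$, combined with the explicit Taylor expansion of this quantity in $v$ along radial geodesics.
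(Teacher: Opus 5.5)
Your proof is correct, and its skeleton matches the paper's: uniform convexity of small balls, uniform jets of $q_p$ and $a_p$ from Lemma~\ref{lem:jets} with Taylor remainders controlled by compactness, and then substitution with $\beta=2/h^2$ to get $\mu\ge\frac1{2h^2}-O(1)$. The differences lie in the two geometric inputs.

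For strong convexity, you cite Whitehead's theorem with the explicit radius $\tfrac12\min\{\inj(\M),\pi/\sqrt{K_{\max}}\}$. The paper instead derives convexity from the Hessian estimate itself. It fixes a uniform normal radius $r_0$ on which $\Hess q_p\succ0$ and takes $h<r_0/3$ and $2h<\inj(\M)$. It then notes that the minimizing geodesic between two points of $B_h(p)$ stays within distance $3h$ of $p$. One-dimensional convexity of $q_p$ along that geodesic keeps it inside the ball.

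For $\Hess q_p=g+O(h^2)$, your primary route is Hessian (Rauch/Riccati) comparison. The paper uses the normal-coordinate identity $(\Hess q_p)_{jk}=\delta_{jk}-\Gamma^\ell_{jk}(v)v_\ell$ with $\Gamma(v)=O(\norm v)$ uniformly by compactness. That is essentially the smooth-dependence fallback in your last paragraph.

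The paper's route is self-contained and reuses a single estimate for both conclusions. Yours relies on cited theorems but yields constants that depend only on two-sided sectional-curvature bounds, which makes the $\Hess q_p$ bound dimension-free. That is exactly the control the paper recomputes by hand, via the eigenvalues $1$ and $r\cot r$, for spheres in Corollary~\ref{cor:quantitative-sphere}.
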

\begin{proof}
The short-geodesic squared distance and the volume density are smooth functions of both endpoints in a neighborhood of the diagonal. Normal-coordinate Christoffel symbols vanish at their centers, and their coefficients and derivatives have uniform bounds on a fixed small neighborhood of the diagonal by compactness. Since $q_p=\norm v^2/2$ in coordinates centered at $p$,
\[
 (\Hess q_p)_{jk}=\delta_{jk}-\Gamma_{jk}^{\ell}(v)v_\ell
 =\delta_{jk}+O(\norm v^2),
\]
where $\Gamma(v)=O(\norm v)$. The metric itself is $I+O(\norm v^2)$, so this gives the intrinsic Hessian estimate. The remaining estimates follow from the smooth Taylor expansion in Lemma~\ref{lem:jets}, with uniform remainder bounds. Smoothness also bounds the derivatives of that expansion needed for the gradient and Hessian estimates.

For completeness, uniformly small balls are strongly convex as follows. Fix a uniform normal radius $r_0>0$ on which $\Hess q_p\succ0$ for every center $p$. Choose $h<r_0/3$ and $2h<\inj(\M)$. Two points in $B_h(p)$ have distance less than $2h$, so their minimizing geodesic is unique. Every point on this geodesic has distance less than $3h$ from $p$ by the triangle inequality. Along the geodesic, $q_p$ has positive second derivative unless the geodesic is constant. Its value is bounded above by the larger endpoint value by convexity of this one-dimensional function. Therefore the geodesic stays in $B_h(p)$.

In Assumption~\ref{ass:convex}, one may take $c\ge1/2$, $a_*\ge1/2$, $L\le C_1h$, and $H\le C_2$ once $h$ is small. Since $\beta=2/h^2$,
\[
 \mu\ge\frac1{2h^2}-4C_1-C_2.
\]
Shrinking $h_0$ further gives $\mu\ge1/(4h^2)$.
\end{proof}

\section{Supporting density-estimation results}\label{app:kde}
The volume-corrected estimator and its bias are classical manifold KDE constructions \citep{pelletier,henry}; geometric bias is also studied by \citet{kimpark}. We specialize the expansions to the Epanechnikov kernel and distinguish raw smoothing from normalization. These expansions are supporting results, not a priority claim for manifold KDE.

Let $\omega_m$ be the Euclidean unit-ball volume and set
\[
 K(u)=c_m(1-\norm u^2)_+,\qquad c_m=\frac{m+2}{2\omega_m}.
\]
For independent patterns with density $f$ relative to $dV$, define
\[
 \widehat f_\geo(x)=\frac1{Nh^m}\sum_iK(\log_{\xi_i}x/h),\qquad
 \widehat f_\vc(x)=\frac1{Nh^m}\sum_i a_i(x)K(\log_{\xi_i}x/h).
\]
A summand is zero when $d(x,\xi_i)\ge h$. Write
\[
 \widehat Z_h=\int_\M\widehat f_\geo\,dV,\qquad
 \widetilde f_\geo=\widehat f_\geo/\widehat Z_h,\qquad
 \widetilde f_\vc=\widehat f_\vc.
\]
Both $\widetilde f_T$ are probability densities. The distinction between $\widehat f_\geo$ and $\widetilde f_\geo$ is important: the former is a raw smoother.

\begin{theorem}[Bias and risk of normalized density estimators]\label{thm:kde}
Suppose $f\in C^4(\M)$ and $h\downarrow0$. Uniformly in $x$,
\begin{align}
 \E\widehat f_\vc(x)&=f(x)+\frac{h^2}{2(m+4)}\Delta f(x)+O(h^4),\label{eq:biasvc}\\
 \E\widehat f_\geo(x)&=f(x)+\frac{h^2}{2(m+4)}
 \left[\Delta f(x)-\tfrac13\Scal(x)f(x)\right]+O(h^4).\label{eq:biasgeo}
\end{align}
Here $\Delta f=\operatorname{tr}_g\Hess f$ and $\Scal=\operatorname{tr}_g\Ric$.
For the raw and corrected smoothers,
\[
 \int_\M\Var(\widehat f_T(x))\,dV(x)
 =\frac{R_m}{Nh^m}(1+o(1)),\qquad
 R_m=\frac{2(m+2)}{(m+4)\omega_m}.
\]
Let $\bar s_f=\int\Scal f\,dV$, $D_\vc f=\Delta f$, and
$D_\geo f=\Delta f-(\Scal-\bar s_f)f/3$. For the \emph{normalized densities},
\begin{equation}
 \E\norm{\widetilde f_T-f}_{L^2}^2
 =\frac{h^4}{4(m+4)^2}\norm{D_Tf}_{L^2}^2
 +\frac{R_m}{Nh^m}+o\!\left(h^4+\frac1{Nh^m}\right).
 \label{eq:mise}
\end{equation}
In particular the upper rate is $O(N^{-4/(m+4)})$ at $h\asymp N^{-1/(m+4)}$.
\end{theorem}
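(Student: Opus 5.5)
The plan is to reduce every expectation to an integral over the tangent space at $x$, using normal coordinates there together with the volume symmetry $\theta_p(x)=\theta_x(p)$ from Lemma~\ref{lem:jets}.

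\textbf{Bias.} For a single pattern with density $f$, substitute $\xi=\exp_x(hu)$, so that $dV(\xi)=h^m\theta_x(\exp_x hu)\,du$. Also $\log_\xi x$ has the same length $h\norm u$ as $\log_x\xi$, so the kernel equals $K(u)$. For the corrected smoother, $a_\xi(x)=\theta_\xi(x)^{-1}=\theta_x(\xi)^{-1}$ exactly cancels the Jacobian. This gives
\[
\E\widehat f_\vc(x)=\int K(u)\,f(\exp_x hu)\,du .
\]
Taylor-expand $f\circ\exp_x$ to fourth order. Odd moments of $K$ vanish by symmetry, and $\int K(u)u_ju_k\,du=\delta_{jk}/(m+4)$ for the Epanechnikov kernel, a routine radial computation with $c_m$. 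The second-order term is $\tfrac{h^2}{2}\Hess f$, since in normal coordinates $\frac{d^2}{dt^2}f(\exp_x tu)=\Hess f(u,u)$. This yields \eqref{eq:biasvc}, with an $O(h^4)$ remainder uniform by compactness and $f\in C^4$. For the geodesic smoother the Jacobian $\theta_x(\exp_x hu)=1-h^2\Ric_x(u,u)/6+O(h^3)$ from \eqref{eq:thetaseries} remains in the integrand. Pairing it with $f(x)$ and the second moment gives the extra term $-\tfrac{h^2}{6(m+4)}\Scal f$, which is \eqref{eq:biasgeo}. The $O(h^3)$ terms are odd in $u$ once the full smooth expansion is used, so the remainder is $O(h^4)$.

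\textbf{Variance.} We have $\Var\widehat f_T(x)=N^{-1}\bigl(\E[h^{-2m}a^2K^2]-(\E\widehat f_T)^2\bigr)$. The same substitution gives
\[
\E[h^{-2m}a^2K^2]=h^{-m}\bigl(f(x)\textstyle\int K^2+O(h^2)\bigr).
\]
Integrating over $\M$ with $\int f\,dV=1$ produces $R_m/(Nh^m)$, once we check $\int K^2=c_m^2\int(1-\norm u^2)^2du=R_m$.

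\textbf{Normalization, which I expect to be the main obstacle.} For $\widetilde f_\vc=\widehat f_\vc$ nothing extra is needed: by the cancellation above each summand already integrates to one, and the MISE follows from bias squared plus integrated variance. For $\widetilde f_\geo$, first compute $\E\widehat Z_h=1-\tfrac{h^2}{6(m+4)}\bar s_f+O(h^4)$, integrating \eqref{eq:biasgeo} with $\int\Delta f=0$. Then write
\[
\widetilde f_\geo-f=\frac{\widehat f_\geo-\widehat Z_h f}{\widehat Z_h}.
\]
Its mean is $\tfrac{h^2}{2(m+4)}D_\geo f+O(h^4)$. The fluctuation of $\widehat Z_h$ has variance $O(1/N)$, because $\widehat Z_h$ is an average of bounded i.i.d.\ terms $\int h^{-m}K(\log_{\xi_i}\cdot/h)\,dV=1+O(h^2)$. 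That contribution is $o(h^4+1/(Nh^m))$. Control of $1/\widehat Z_h$ is deterministic: each summand integrates to $1+O(h^2)$, so $\widehat Z_h\in[1/2,2]$ for small $h$, and the ratio linearizes with cross terms bounded by Cauchy--Schwarz.

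Summing squared bias and variance gives \eqref{eq:mise}. Balancing $h^4$ against $1/(Nh^m)$ gives the stated $O(N^{-4/(m+4)})$ rate.
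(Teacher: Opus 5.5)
Your proposal is correct. The bias step is the paper's argument: the substitution $\xi=\exp_x(hu)$, the Jacobian cancellation via $\theta_p(x)=\theta_x(p)$, and parity of the cubic terms. Your variance step differs only in the order of integration. You integrate over the pattern at fixed $x$ and then use $\int f\,dV=1$, whereas the paper integrates $Y_T(x,y)^2$ over $x$ at a fixed center $y$.

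The genuine difference is how you handle the random normalizer. The paper compares $\widetilde f_\geo$ with the population-normalized $U_h=\widehat f_\geo/Z_h^{\mathrm{pop}}$. The gap $\widetilde f_\geo-U_h$ is $D=\widehat Z_h-Z_h^{\mathrm{pop}}$ times the order-one random function $\widehat f_\geo$. To decouple $D^2$ from $\norm{\widehat f_\geo}_{L^2}^2$, the paper needs the conditioning bound $\E[D^2\mid X_i,X_j]\le C'h^4/N$, followed by a Cauchy--Schwarz transfer of risks.

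Your decomposition $\widetilde f_\geo-f=(\widehat f_\geo-\widehat Z_hf)/\widehat Z_h$ sidesteps this. The fluctuation of $\widehat Z_h$ multiplies only the deterministic $f$, so it contributes $\Var(\widehat Z_h)\norm f^2=O(1/N)$. The cross term with $\widehat f_\geo-\E\widehat f_\geo$ is $O(N^{-1}h^{-m/2})$ by Cauchy--Schwarz. Both are $o(1/(Nh^m))$, so second moments and even the crude $O(1/N)$ variance suffice.

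Make one point explicit to get the exact leading constant rather than just the rate. You need $\widehat Z_h^{-2}=1+O(h^2)$ uniformly over all samples, which follows from $z_h(y)=1+O(h^2)$ uniformly in $y$; the bound $\widehat Z_h\in[1/2,2]$ alone is not enough. With it, $\E\norm{\widetilde f_\geo-f}_{L^2}^2=(1+O(h^2))\,\E\norm{\widehat f_\geo-\widehat Z_hf}_{L^2}^2$. The latter equals the squared mean plus the integrated variance. Your $D_\geo$ expression is the mean of this numerator, not of the ratio.

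Your route is shorter. The paper's route additionally gives a quantitative estimate of $\E\norm{\widetilde f_\geo-U_h}_{L^2}^2$, linking sample and population normalization.
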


\begin{proof}
We first compute the kernel moments, then the bias and variance, and finally control the random normalizer.

\paragraph{Kernel moments and mass.}
Polar integration gives
\[
 \int_{\R^m}(1-\norm u^2)_+\,du
 =m\omega_m\left(\frac1m-\frac1{m+2}\right)=\frac{2\omega_m}{m+2}.
\]
Thus $\int K=1$. Reflection in a coordinate hyperplane gives $\int u_jK=0$ and $\int u_ju_kK=0$ for $j\ne k$. By permutation symmetry, all diagonal second moments are equal, and
\begin{align*}
 \int u_j^2K(u)\,du
 &=\frac1m\int\norm u^2K(u)\,du\\
 &=c_m\omega_m\left(\frac1{m+2}-\frac1{m+4}\right)=\frac1{m+4}.
\end{align*}
Also,
\begin{align*}
 \int K(u)^2\,du
 &=c_m^2m\omega_m\left(\frac1m-\frac2{m+2}+\frac1{m+4}\right)\\
 &=c_m^2\frac{8\omega_m}{(m+2)(m+4)}
 =\frac{2(m+2)}{(m+4)\omega_m}=R_m.
\end{align*}
For a fixed center $y$, the substitution $x=\exp_y(hu)$ gives
\[
 \int_\M h^{-m}\theta_y(x)^{-1}K(\log_yx/h)\,dV(x)
 =\int_{\norm u<1}K(u)\,du=1.
\]
Hence every corrected component, and its sample average, is normalized. The raw components are nonnegative and have positive integrals, so division by $\widehat Z_h$ also gives a density.

\paragraph{Bias.}
At a fixed query $x$, use $y=\exp_x(hu)$ and the volume-density symmetry of Lemma~\ref{lem:jets}. For the corrected estimator, the two Jacobian factors cancel:
\[
 \E\widehat f_\vc(x)=\int K(u)f(\exp_x(hu))\,du.
\]
Taylor expansion along the radial geodesic, uniformly for $x\in\M$ and $\norm u\le1$, gives
\[
 f(\exp_x(hu))=f(x)+h\,df_x(u)+\tfrac12h^2\Hess f_x(u,u)
 +h^3P_{3,x}(u)+O(h^4),
\]
where $P_{3,x}$ is a homogeneous cubic polynomial. Its integral and the linear term vanish because they are odd. The computed second moments therefore give~\eqref{eq:biasvc}.

For the raw smoother,
\[
 \E\widehat f_\geo(x)=\int K(u)f(\exp_x(hu))\theta_x(\exp_x(hu))\,du.
\]
The smooth volume-density expansion is
\[
 \theta_x(\exp_x(hu))=1-\tfrac16h^2\Ric_x(u,u)+h^3Q_{3,x}(u)+O(h^4),
\]
where $Q_{3,x}$ is a homogeneous cubic. All first- and third-degree terms in the product are odd. The second-degree term is
$h^2\Hess f_x(u,u)/2-h^2f(x)\Ric_x(u,u)/6$.
Tracing it against $\int u_ju_kK=\delta_{jk}/(m+4)$ yields~\eqref{eq:biasgeo}.

\paragraph{Integrated variance.}
Write a single normalized-scale summand as $Y_T(x,X)$, so that
$\widehat f_T=N^{-1}\sum_iY_T(x,X_i)$. Independence gives
\[
 \int\Var(\widehat f_T(x))\,dV(x)
 =\frac1N\E\int Y_T(x,X)^2\,dV(x)
 -\frac1N\int(\E Y_T(x,X))^2\,dV(x).
\]
Conditional on $X=y$, the geodesic component has squared integral
\[
 h^{-m}\int K(u)^2\theta_y(\exp_y(hu))\,du
 =h^{-m}\bigl(R_m+O(h^2)\bigr).
\]
For the corrected component the remaining factor is $\theta_y^{-1}$ instead of $\theta_y$, yielding the same estimate. The bounds are uniform in $y$. The squared-mean term is $O(1/N)$ by the bias expansions and compactness, and is $o(1/(Nh^m))$. This proves the variance assertion. For $T=\vc$, squaring the uniform bias expansion and integrating gives
\[
 \int(\E\widehat f_\vc-f)^2\,dV
 =\frac{h^4}{4(m+4)^2}\norm{\Delta f}_{L^2}^2+O(h^6),
\]
which proves~\eqref{eq:mise} for the corrected density.

\paragraph{Population normalization.}
Let $m_h=\E\widehat f_\geo$ and $Z_h^{\mathrm{pop}}=\int m_h\,dV$.
The divergence theorem gives $\int\Delta f\,dV=0$ because $\M$ has no boundary. Integrating~\eqref{eq:biasgeo} therefore yields
\[
 Z_h^{\mathrm{pop}}=1-\frac{h^2\bar s_f}{6(m+4)}+O(h^4).
\]
For a scalar $z=O(h^2)$, the identity $(1+z)^{-1}=1-z+z^2/(1+z)$ gives its reciprocal expansion. Multiplying by~\eqref{eq:biasgeo} gives
\begin{equation}
 \frac{m_h(x)}{Z_h^{\mathrm{pop}}}
 =f(x)+\frac{h^2}{2(m+4)}
 \left[\Delta f(x)-\frac{\Scal(x)-\bar s_f}{3}f(x)\right]+O(h^4).
 \label{eq:popnormalizedbias}
\end{equation}
Set $U_h=\widehat f_\geo/Z_h^{\mathrm{pop}}$. This is a random function with deterministic denominator. Its squared bias follows from~\eqref{eq:popnormalizedbias}, and its integrated variance is the raw variance divided by $(Z_h^{\mathrm{pop}})^2=1+O(h^2)$. Hence
\begin{equation}
 \E\norm{U_h-f}_{L^2}^2
 =\frac{h^4}{4(m+4)^2}\norm{D_\geo f}_{L^2}^2
 +\frac{R_m}{Nh^m}+o\!\left(h^4+\frac1{Nh^m}\right).
 \label{eq:poprisk}
\end{equation}

\paragraph{The random sample normalizer.}
It remains to compare $U_h$ with the actual density $\widetilde f_\geo$; we do not interchange expectation and division. The mass of a raw component centered at $y$ is
\[
 z_h(y)=\int Y_\geo(x,y)\,dV(x)
 =1-\frac{h^2\Scal(y)}{6(m+4)}+O(h^4),
\]
by the same kernel moments and the volume expansion. This is uniform in $y$. In particular both $\widehat Z_h$ and $Z_h^{\mathrm{pop}}$ lie in $[1/2,2]$ for small $h$, for every sample.
Put $A_i=z_h(X_i)-\E z_h(X_i)$ and $D=\widehat Z_h-Z_h^{\mathrm{pop}}=N^{-1}\sum_i A_i$. Then $\E A_i=0$ and $|A_i|\le C h^2$. Conditional on $X_i,X_j$, the remaining $A_\ell$ remain independent and centered. The conditional mean of $D$ has magnitude at most $2Ch^2/N$, and its conditional variance is at most $C^2h^4/N$. Enlarging the constant handles $i=j$ and $N=1$, so
\[
 \E[D^2\mid X_i,X_j]\le C' h^4/N.
\]
The factors $Y_\geo(x,X_i)Y_\geo(x,X_j)$ are nonnegative. Multiplying by them, integrating, and summing over $i,j$ gives
\begin{align*}
 \E\bigl[D^2\norm{\widehat f_\geo}_{L^2}^2\bigr]
 &=\frac1{N^2}\sum_{i,j}\E\int D^2Y_\geo(x,X_i)Y_\geo(x,X_j)\,dV(x)\\
 &\le \frac{C'h^4}{N}\,\E\norm{\widehat f_\geo}_{L^2}^2
 \le \frac{C''h^4}{N}\left(1+\frac1{Nh^m}\right).
\end{align*}
The last bound follows by adding the integrated variance and squared mean already established. Since both denominators are at least $1/2$,
\begin{align*}
 e_h:=\E\norm{\widetilde f_\geo-U_h}_{L^2}^2
 &\le16\E\bigl[D^2\norm{\widehat f_\geo}_{L^2}^2\bigr]\\
 &\le\frac{C'''h^4}{N}\left(1+\frac1{Nh^m}\right)
 =o\!\left(\frac1{Nh^m}\right).
\end{align*}
For the last equality, dividing its two terms by $(Nh^m)^{-1}$ gives $O(h^{m+4})$ and $O(h^4/N)$, both tending to zero. Cauchy--Schwarz now gives
\[
 \left|\E\norm{\widetilde f_\geo-f}_{L^2}^2
 -\E\norm{U_h-f}_{L^2}^2\right|
 \le 2\sqrt{\E\norm{U_h-f}_{L^2}^2}\sqrt{e_h}+e_h
 =o\!\left(h^4+\frac1{Nh^m}\right).
\]
Combining with~\eqref{eq:poprisk} proves~\eqref{eq:mise} for the normalized geodesic density. Finally, at $h\asymp N^{-1/(m+4)}$, both $h^4$ and $(Nh^m)^{-1}$ have order $N^{-4/(m+4)}$. This is an upper-rate statement; if the leading bias coefficient vanishes it need not be an optimal bandwidth.
\end{proof}

\paragraph{Relation to the energy background.}
These statistical statements concern the kernel averages, not an arbitrary uniform background added to the energy. With fixed $\epsilon>0$, the normalized score is a mixture with a uniform density. For the corrected energy its two unnormalized masses are $\epsilon\vol(\M)$ and $Nh^m/c_m$. To transfer the displayed density-risk conclusions to that score, set $\epsilon=0$ or make the background negligible at the required risk scale. Merely sending $\beta$ to infinity, or proving mean-square density consistency, does not prove convergence of the number of energy minima.

\section{Proofs of exact retrieval and the LSE comparison}\label{app:retrieval}

\begin{proof}[Proof of Theorem~\ref{thm:isolated} and the single-kernel assertion in Theorem~\ref{prop:descent}]
For any smooth positive $S$, direct differentiation gives
\begin{equation}
 \grad(-\beta^{-1}\log S)=-\frac{\grad S}{\beta S},\qquad
 \Hess(-\beta^{-1}\log S)=-\frac{\Hess S}{\beta S}
 +\frac{dS\otimes dS}{\beta S^2}.\label{eq:loghessian}
\end{equation}
At a critical point the last term vanishes.

The strict separation from $\xi_i$ means that only its own kernel is active in some neighborhood of $\xi_i$. Indeed, if $N>1$, choose the neighborhood radius smaller than both $h$ and $\min_{j\ne i}(d(\xi_i,\xi_j)-h)$; the triangle inequality excludes every other support. If $N=1$, choose any radius smaller than $h$. This argument and the local identities in Lemma~\ref{lem:jets} do not use compactness, proving the noncompact scope of Theorem~\ref{thm:isolated}. In that neighborhood,
\[
 S_\geo=\epsilon+1-\beta q_i,\qquad
 S_\vc=\epsilon+a_i(1-\beta q_i).
\]
By Lemma~\ref{lem:jets}, both have value $1+\epsilon$ and zero differential at $\xi_i$. Their Hessians there are
\[
 \Hess S_\geo=-\beta g,\qquad
 \Hess S_\vc=\tfrac13\Ric-\beta g.
\]
For the second equality, the product-rule cross terms vanish because $da_i=dq_i=0$ at the center. Equation~\eqref{eq:loghessian} now gives~\eqref{eq:ricci}. The corrected Hessian is positive definite exactly when
$g(w,w)-\Ric(w,w)/(3\beta)>0$ for every nonzero $w$, which is equivalent to~\eqref{eq:threshold}.

For the retrieval assertion, let $s=\min_{i\ne j}d(\xi_i,\xi_j)$ and assume~\eqref{eq:retrievalradius}. If $x\in B_\Delta(\xi_i)$, then $d(x,\xi_i)<\Delta<h$. For $j\ne i$,
\[
 d(x,\xi_j)\ge d(\xi_i,\xi_j)-d(x,\xi_i)>s-\Delta\ge h.
\]
Thus the own kernel is active and all others are inactive. Lemma~\ref{lem:jets} and differentiation of the logarithm give
\begin{equation}
 \grad E_\geo(x)=-\frac{\log_x\xi_i}{\epsilon+1-\beta d(x,\xi_i)^2/2}.
 \label{eq:singlegradient}
\end{equation}
Its denominator $t(x)=S_\geo(x)$ is positive and is available from the query score. Therefore
\[
 \exp_x(-t(x)\grad E_\geo(x))=\exp_x(\log_x\xi_i)=\xi_i.
\]
This is the stated exact step, with no approximation of the exponential map.
\end{proof}

\paragraph{Finite temperature and curvature examples.}
For any finite distinct configuration, $s>0$ and compactness bounds the Ricci eigenvalues. Thus the finite choice
\[
 \beta>\max\left\{2/s^2,\ 2/\inj(\M)^2,\ \tfrac13\max_i\lambda_{\max}(\Ric_{\xi_i})\right\}
\]
stores all patterns for both models. In negative Ricci curvature, $-\Ric/(3\beta)$ is positive definite, so the correction increases every Hessian eigenvalue relative to the geodesic model. On the complete hyperbolic space $\mathbb H^m$ of constant sectional curvature $-1$, let $w$ be unit length and extend it to an orthonormal basis $w,e_2,\ldots,e_m$. The definition of Ricci curvature gives
\[
 \Ric(w,w)=\sum_{j=2}^m\operatorname{sec}(\operatorname{span}\{w,e_j\})=-(m-1).
\]
Homogeneity of this quadratic form gives $\Ric=-(m-1)g$. Substitution into~\eqref{eq:ricci} yields
\[
 \Hess E_\vc(\xi_i)=\frac{1+(m-1)/(3\beta)}{1+\epsilon}\,g_{\xi_i}.
\]
Hyperbolic space has no cut locus, so every finite $h$ is admissible for this isolated-pattern statement. This pointwise example does not extend the compact-manifold random limits to noncompact sampling laws.

On the unit sphere, $\theta_p(x)=(\sin r/r)^{m-1}$ for $r=d(p,x)<\pi$, and a corrected single-kernel score has radial part
\[
 k(r)=(1-\beta r^2/2)(r/\sin r)^{m-1}
 =1+\tfrac12\bigl((m-1)/3-\beta\bigr)r^2+O(r^4).
\]
The expansion follows from $\sin r/r=1-r^2/6+O(r^4)$, raising its reciprocal to power $m-1$, and multiplying by $1-\beta r^2/2$. Below the stability threshold, its derivative is positive for sufficiently small positive $r$. The score is zero at $h<\pi$ and positive inside, so continuity gives a maximum at some radius strictly between zero and $h$. Rotational symmetry makes that maximum an off-center shell. Its tangential Hessian directions vanish, so this shell is not a set of nondegenerate memories under our definition.

For comparison, define the geodesic log-sum-exponential energy
\[
 E_{\mathrm{LSE}}(x)=-\beta^{-1}\log\sum_{i=1}^N e^{-\beta d(x,\xi_i)^2/2}.
\]
The next statement is the intrinsic analogue of the distinction between approximate LSE retrieval and exact compact-support storage discussed by \citet{ramsauer,hoover}. It is an almost-sure result for a fixed parameter, not a claim about every symmetric configuration or every parameter selected from the data.
\begin{proposition}[Generic obstruction for geodesic LSE]\label{prop:lse}
Fix $N\ge2$ and a deterministic finite $\beta>0$. For independent patterns drawn from any distribution absolutely continuous with respect to $dV$, almost surely no original pattern is stationary for $E_{\mathrm{LSE}}$. In particular, global emergence in the memory definition in Section~\ref{sec:model} fails almost surely at that fixed $\beta$.
\end{proposition}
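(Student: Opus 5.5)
We show that stationarity of an original is a codimension-$m$ condition on the remaining patterns, hence a null event. Fix $i$, say $i=1$, and condition on $X_1=\xi_1$. We may avoid $X_j$ in the cut locus of $\xi_1$: the cut locus $\mathrm{Cut}(\xi_1)$ has measure zero for a smooth compact manifold, so absolute continuity gives this almost surely. Off the cut locus, $q_j=d(\cdot,X_j)^2/2$ is smooth near $\xi_1$, and by Lemma~\ref{lem:jets} $\grad q_j(\xi_1)=-\log_{\xi_1}X_j$. Differentiating the log-sum-exp gives
\[
 \grad E_{\mathrm{LSE}}(\xi_1)=-\sum_{j=1}^N w_j\log_{\xi_1}X_j,\qquad
 w_j=\frac{e^{-\beta d(\xi_1,X_j)^2/2}}{\sum_\ell e^{-\beta d(\xi_1,X_\ell)^2/2}}>0 .
\]
The $j=1$ term vanishes, so stationarity of $\xi_1$ is the equation $F(X_2,\dots,X_N)=0$, where $F=\sum_{j\ge2}e^{-\beta d(\xi_1,X_j)^2/2}\log_{\xi_1}X_j\in T_{\xi_1}\M$. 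We have dropped the positive normalizer.

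Next, condition additionally on $X_3,\dots,X_N$. Write $X_2=\exp_{\xi_1}(v)$ with $v$ in the interior of the tangent cut domain. The map $\exp_{\xi_1}$ restricted there is a diffeomorphism onto $\M\setminus\mathrm{Cut}(\xi_1)$, so the law of $v$ is absolutely continuous with respect to Lebesgue measure on $T_{\xi_1}\M\cong\R^m$. Stationarity becomes
\[
 \phi(v):=e^{-\beta\norm v^2/2}v=-c,
\]
where $c=\sum_{j\ge3}e^{-\beta d(\xi_1,X_j)^2/2}\log_{\xi_1}X_j$ is fixed by the conditioning.

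The remaining task is to show that $\phi^{-1}(\{-c\})$ is Lebesgue-null for every fixed $c$. Since $\phi$ is radial, $\phi(v)=-c$ forces $v$ to be parallel to $c$ (or, when $c=0$, forces $v=0$). Its norm $r$ must then solve $re^{-\beta r^2/2}=\norm c$, and this equation has at most two solutions. So the preimage has at most two points and is null. Fubini over $X_3,\dots,X_N$ and over $X_1$ then gives $\P\{\xi_1\text{ stationary}\}=0$, and a union bound over the $N$ originals finishes the first claim. The degenerate case $X_j=\xi_1$ for some $j$ is a null event and is excluded.

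The consequence for global emergence is immediate. Retention of every original requires each original to be a critical point, hence a memory, and this fails almost surely.

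The only delicate step is handling cut loci and the global parametrization. Compactness ensures the cut locus is null, and $\log_{\xi_1}$ is defined off it; no bandwidth restriction is needed because LSE has full support. If points on the cut locus were allowed, one would simply note that $E_{\mathrm{LSE}}$ fails to be $C^2$ there, which already excludes a memory, but that event is null anyway.
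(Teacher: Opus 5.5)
Your proposal is correct and follows essentially the same route as the paper: condition on the original and on all but one other pattern, discard the null cut-locus event, reduce stationarity to $e^{-\beta r^2/2}r\,u=C$ for a fixed tangent vector $C$, observe that the direction is forced and that $re^{-\beta r^2/2}=\norm{C}$ has at most two roots, and finish with absolute continuity, Fubini, and a union bound. One wording slip: retention requires each original to be a memory, hence a critical point, not the reverse.
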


\begin{proof}[Proof of Proposition~\ref{prop:lse}]
For a fixed center $x$, its cut locus has Riemannian volume zero. Thus, with probability one, none of the other finitely many independent patterns lies in the cut locus of $X_i$, and the energy is smooth at each original. Differentiation there gives
\[
 \grad E_{\mathrm{LSE}}(x)=
 -\frac{\sum_j e^{-\beta d(x,X_j)^2/2}\log_xX_j}
 {\sum_j e^{-\beta d(x,X_j)^2/2}}.
\]
Fix $i\ne j$. Condition on $X_i=x$ and all patterns except $X_j$. The contribution from the own pattern is zero, so stationarity requires
\[
 e^{-\beta d(x,X_j)^2/2}\log_xX_j=C
\]
for a fixed vector $C\in T_x\M$ determined by the conditioned variables. If $C=0$, the only solution away from the cut locus is $X_j=x$. If $C\ne0$, write $X_j=\exp_x(ru)$ before its directional cut distance. Then $u=C/\norm C$ and
\[
 r e^{-\beta r^2/2}=\norm C.
\]
The derivative of the left side is $e^{-\beta r^2/2}(1-\beta r^2)$, which is positive for $r<\beta^{-1/2}$ and negative for $r>\beta^{-1/2}$. The equation has at most two solutions. Thus the conditional solution set, including the cut locus, has volume zero. Absolute continuity implies conditional probability zero. Integrating the conditioning and taking the union over the finitely many $i$ proves the proposition.
\end{proof}

\section{Active sets, support boundaries, and exact enumeration}\label{app:active}
For the regions $U_A,C_A$ in~\eqref{eq:activeregions}, define the untruncated active objectives
\[
 F_A=\sum_{i\in A}q_i,\qquad G_A=\sum_{i\in A}a_i(1-\beta q_i).
\]
The proof applies under the component concavity in the main text. Assumption~\ref{ass:convex} supplies an explicit sufficient condition for it.

\begin{lemma}[No supported boundary maximum]\label{lem:boundary}
For either $S_T$, a point with $S_T(x)>0$ on one or more support boundaries is not a local maximum of $S_T$.
\end{lemma}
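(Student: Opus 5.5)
Let $x$ be a point with $S_T(x)>0$ that lies on the boundary sphere $\partial B_h(\xi_j)$ for each $j$ in a nonempty set $J$. Let $I=\{i:d(x,\xi_i)<h\}$ be the strictly active indices. The plan is to exhibit a single tangent direction $w\in T_x\M$ along which $S_T$ strictly increases to first order. The natural choice is an inward direction for the boundary kernels, i.e.\ one that re-enters their supports. First-order gain from the boundary kernels will then dominate, because the interior kernels are smooth at $x$ while the boundary kernels are merely Lipschitz with a one-sided positive slope.

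\textbf{Local decomposition.} Near $x$, write
\[
 S_T=\epsilon+\sum_{i\in I}K_T(\cdot,\xi_i)+\sum_{j\in J}K_T(\cdot,\xi_j).
\]
The kernels with $d(x,\xi_\ell)>h$ vanish identically nearby and can be dropped. The first sum is smooth at $x$ because $h<\inj(\M)$ and Lemma~\ref{lem:jets} applies; call its gradient $G$. For $j\in J$ we have $K_T(\cdot,\xi_j)=\phi_j\,(1-\beta q_j)_+$, where $\phi_j\equiv1$ for the geodesic kernel and $\phi_j=a_j$ for the corrected one. Here $a_j$ is smooth and positive in a neighborhood of the closed ball. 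This uses the small-bandwidth setting of Lemma~\ref{lem:small}, or simply that $a_j$ extends smoothly up to radius $h<\inj(\M)$. By Lemma~\ref{lem:jets}, $\grad(1-\beta q_j)(x)=\beta\log_x\xi_j=:\beta u_j$ with $\norm{u_j}=h$. Hence, for a direction $w$, the one-sided derivative at $x$ along $\exp_x(sw)$ is
\[
 D_w^+S_T(x)=\ip{G}{w}+\sum_{j\in J}\phi_j(x)\,\beta\,\ip{u_j}{w}_+ .
\]
The product term $\phi_j\cdot0$ contributes no derivative of $\phi_j$, since $1-\beta q_j=0$ at $x$.

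\textbf{Choosing the direction.} The key step is to find $w$ with $D_w^+S_T(x)>0$. If $G=0$, I will take $w=u_{j_0}$ for any $j_0\in J$. The term $\ip{u_{j_0}}{u_{j_0}}_+=h^2>0$ and all other terms are nonnegative, so the derivative is positive. If $G\ne0$, I will compare $w$ and $-w$. Since $\ip{G}{w}+\ip{G}{-w}=0$, we get
\[
 D_w^+S_T+D_{-w}^+S_T=\sum_{j\in J}\phi_j\beta\bigl(\ip{u_j}{w}_++\ip{u_j}{-w}_+\bigr)=\sum_{j}\phi_j\beta|\ip{u_j}{w}|.
\]
This sum is strictly positive as soon as $w$ is not orthogonal to some $u_j$; take, for instance, $w=u_{j_0}$. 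Hence at least one of $D_{\pm w}^+S_T(x)$ is strictly positive. A positive one-sided directional derivative means $S_T(\exp_x(sw))>S_T(x)$ for small $s>0$, so $x$ is not a local maximum.

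\textbf{Main obstacle.} The hard part is justifying the one-sided expansion rigorously for the positive-part kernels. The formula $(f)_+$ with $f(x)=0$ and $f$ smooth gives $\frac{d}{ds}^+ f(\exp_x(sw))_+\big|_0=\ip{\grad f}{w}_+$, which is elementary. Its use for the corrected kernel requires $a_j$ to be defined and $C^1$ slightly beyond the support boundary, or at least that $\phi_j$ is bounded near $x$. Boundedness suffices, since $\phi_j(y)(1-\beta q_j(y))_+=\phi_j(x)(\cdot)_++o(s)$ follows from continuity of $\phi_j$ together with $(1-\beta q_j)_+=O(s)$. I will use this weaker route to avoid evaluating $a_j$ outside the support, in line with the model's convention. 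The positivity of $S_T(x)$ is needed only so that the energy is defined. The conclusion for $E_T$ follows because $-\beta^{-1}\log$ is strictly decreasing.
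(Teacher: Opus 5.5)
Your proposal is correct and follows essentially the same route as the paper: decompose $S_T$ near $x$ into a smooth part plus positive-part boundary kernels, compute one-sided directional derivatives, and add the derivatives along $w$ and $-w$ to get $\sum_j\phi_j(x)\beta|\ip{u_j}{w}|>0$ using $\norm{\log_x\xi_j}=h>0$. Your separate case $G=0$ is redundant because the $\pm w$ argument already covers it, and your corrected-kernel step needs continuity of $a_j$ at $x$ rather than mere boundedness; this continuity holds because $a_j$ is smooth on the ball of radius $\inj(\M)>h$.
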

\begin{proof}
Let $B=\{j:d(x,\xi_j)=h\}$, which is nonempty. Near $x$, write the score as
\[
 S_T=F+\sum_{j\in B}a_j(u_j)_+,\qquad u_j=1-\beta q_j,
\]
where $F$ is the smooth contribution from strictly active kernels and $\epsilon$. For $T=\geo$, take $a_j=1$. The coefficients are smooth and positive near the boundary because $h<\inj(\M)$.

For a tangent vector $w$, the one-sided directional derivative along $\exp_x(tw)$, $t\downarrow0$, is
\[
 DS_T(x;w)=dF_x(w)+\sum_{j\in B}a_j(x)\max\{du_j(w),0\}.
\]
Adding the derivatives in opposite directions yields
\[
 DS_T(x;w)+DS_T(x;-w)=\sum_{j\in B}a_j(x)|du_j(w)|.
\]
Each $du_j$ is nonzero since $\norm{dq_j(x)}=h>0$. Choose $w$ on which one $du_j$ is nonzero. The sum of the two directional derivatives is then strictly positive. Both would have to be nonpositive at a local maximum, giving a contradiction. Since $-\log$ is strictly decreasing, a local energy minimum in the finite-energy domain would be a score maximum, and is therefore also excluded.
\end{proof}

\begin{proof}[Proof of Theorem~\ref{thm:active}]
On $C_A$, $S_\geo=\epsilon+|A|-\beta F_A$. Consequently
\[
 \grad F_A=-\sum_{i\in A}\log_x\xi_i,\qquad
 \Hess F_A\succeq |A|c g.
\]
The critical-point equation is~\eqref{eq:mean}. At a solution, Equation~\eqref{eq:loghessian} gives
$\Hess E_\geo=\Hess F_A/S_\geo\succ0$.

The set $U_A$ is strongly geodesically convex: the unique minimizing geodesic between two of its points lies in every ball defining the intersection. Suppose two distinct points $x,y\in U_A$ are critical for $F_A$. Let $\gamma:[0,1]\to U_A$ be that geodesic with constant speed $\ell=d(x,y)>0$. Then
\[
 (F_A\circ\gamma)''(t)=\Hess F_A(\dot\gamma,\dot\gamma)\ge |A|c\ell^2.
\]
Integrating from zero to one gives
$(F_A\circ\gamma)'(1)-(F_A\circ\gamma)'(0)\ge |A|c\ell^2>0$.
Both derivatives are zero by stationarity at the endpoints. This contradiction proves uniqueness.

For the corrected energy, a single active term $k_i=a_i(1-\beta q_i)$ has Hessian
\begin{align*}
 \Hess k_i={}&-\beta a_i\Hess q_i
 -\beta(da_i\otimes dq_i+dq_i\otimes da_i)\\
 &+(1-\beta q_i)\Hess a_i.
\end{align*}
If $w$ is a unit tangent vector in $B_h(\xi_i)$, then $\norm{dq_i}=d(x,\xi_i)<h$ and $0<1-\beta q_i\le1$. Hence
\[
 \Hess k_i(w,w)\le-\beta a_*c+2\beta Lh+H=-\mu.
\]
Under the main-text component concavity this same bound is an assumption; the calculation above shows how Assumption~\ref{ass:convex} implies it. Summing gives $\Hess G_A\preceq-|A|\mu g$. The preceding integrated-geodesic argument with the inequality reversed proves that $G_A$ has at most one critical point in $U_A$. At such a point in $C_A$, the score is positive and~\eqref{eq:loghessian} makes the energy Hessian positive definite.

Since $\grad a_i=-a_i\grad\log\theta_{\xi_i}$,
\[
 \grad k_i=\beta a_i\log_x\xi_i
 -a_i(1-\beta q_i)\grad\log\theta_{\xi_i}.
\]
Setting the sum equal to zero proves~\eqref{eq:correctedmean}. Lemma~\ref{lem:boundary} excludes support boundaries. Outside all supports the score is either zero, outside the finite-energy domain, or constant $\epsilon$, with zero Hessian. Therefore every memory belongs to a nonempty active set and every such set contributes at most one. Every critical point in a nonempty $C_A$ has just been shown to have positive-definite energy Hessian; thus there is no saddle in the smooth supported domain. This proves~\eqref{eq:subsetbound}. If all $N$ original patterns are memories, removing those $N$ distinct points leaves at most $2^N-N-1$ additional points, and imposing novelty can only reduce that number.
\end{proof}

Let $z_A^T$ denote the unique candidate in $U_A$ when it exists, with an absent candidate contributing zero. The theorem gives the exact deterministic formula
\begin{equation}
 N_\emg^T(\eta)=\sum_{\varnothing\ne A\subseteq[N]}
 \ind\{z_A^T\text{ exists},\ z_A^T\in C_A,\ \min_i d(z_A^T,\xi_i)\ge\eta\}.
 \label{eq:exactcount}
\end{equation}

\begin{proposition}[Finite-sample expected count and occupancy bound]\label{prop:finitecount}
Assume the geometric hypotheses of Theorem~\ref{thm:active} hold uniformly for the possible centers. Let the patterns be independent from an absolutely continuous law $\nu$, and let $0<\eta<h$. If $z_k^T$ is the candidate from $X_1,\ldots,X_k$ in their common support intersection, then
\begin{align}
 \E N_\emg^T(\eta)=\sum_{k=2}^N\binom Nk
 \E\big[&\ind\{z_k^T\text{ exists},\ \min_{i\le k}d(z_k^T,X_i)\ge\eta\}\nonumber\\
 &\hspace{1em}\cdot(1-b_h(z_k^T))^{N-k}\big].\label{eq:expectedcount}
\end{align}
Put $L_h=\sup_x\#\{i:d(x,X_i)<h\}$ and $p_{2h}=\sup_x\nu(B_{2h}(x))$. For $1\le k\le N-1$,
\begin{equation}
 \P(L_h>k)\le N\binom{N-1}{k}p_{2h}^k
 \le N\left(\frac{e(N-1)p_{2h}}k\right)^k.\label{eq:occupancy}
\end{equation}
On $\{L_h\le k\}$, $N_\tot^T\le\sum_{j=1}^k\binom Nj$.
\end{proposition}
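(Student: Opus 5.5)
The plan is to start from the exact deterministic formula~\eqref{eq:exactcount}, which Theorem~\ref{thm:active} supplies under the assumed uniform geometric hypotheses. We then take expectations term by term. For a subset $A$ with $|A|=k$, the patterns are i.i.d., so the law of the indicator depends only on $k$. The singleton subsets contribute nothing: for $A=\{i\}$ the candidate is $X_i$ itself, or a point within distance $<\eta$ of it in the corrected case. Strictly, the novelty threshold $\eta>0$ excludes the geodesic singleton candidate, so only $k\ge2$ remains. For the corrected singleton we must check that its candidate is within $\eta$ of $X_i$. If that is not automatic, we will note that under the uniform hypotheses the corrected singleton candidate is the center, since Lemma~\ref{lem:jets} gives $da_i(\xi_i)=0$ and hence $\xi_i$ is critical, and uniqueness in $U_A$ then identifies it. Exchangeability then gives $\binom Nk$ copies of the term for $A=\{1,\dots,k\}$.

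For fixed $A=\{1,\dots,k\}$, the event $z_k^T\in C_A$ splits into two parts. The first is $z_k^T\in U_A$ together with the novelty condition, both measurable in $X_1,\dots,X_k$. The second is the requirement $d(z_k^T,X_j)>h$ for each $j>k$. We condition on $X_1,\dots,X_k$. The remaining $N-k$ patterns are independent with law $\nu$, so the conditional probability is $(1-\nu(\bar B_h(z)))^{N-k}$. Since $\nu$ is absolutely continuous and spheres have volume zero, $\nu(\bar B_h(z))=b_h(z)$. The tower property then yields~\eqref{eq:expectedcount}. One detail to handle is that $z_k^T$ must be a measurable function of $(X_1,\dots,X_k)$. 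We would obtain this from the implicit function theorem, since the Hessian is nondegenerate by strong concavity, or else define it as the argmax of the strictly concave $G_A$ or $-F_A$ over the closure of $U_A$, restricted to the event that the maximizer is interior.

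For the occupancy bound, the idea is that if some $x$ has more than $k$ patterns within $h$, then there are an index $i$ and $k$ further indices $j$ with $d(X_i,X_j)<2h$. This follows by taking any active $i$ and applying the triangle inequality. A union bound over $i$ ($N$ choices) and over the $k$-subsets of the rest ($\binom{N-1}{k}$ choices) follows. We then condition on $X_i$ and use independence, so each of the $k$ others falls in $B_{2h}(X_i)$ with probability at most $p_{2h}$. The second inequality is the standard bound $\binom{n}{k}\le(en/k)^k$. Finally, on $\{L_h\le k\}$ every memory has an active set of size between $1$ and $k$, by Theorem~\ref{thm:active}, since memories lie in some $C_A$ and every point of $C_A$ has exactly $|A|$ active patterns. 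Each such $A$ contributes at most one memory, which gives the stated sum of binomials.

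The main obstacle is the measurability and well-definedness of the candidate $z_k^T$. A related issue is verifying that the singletons are properly excluded for the corrected energy. The remaining steps are routine conditioning arguments.
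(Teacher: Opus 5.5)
Your proposal is correct and follows the paper's proof essentially step for step. Both arguments expand \eqref{eq:exactcount} by exchangeability, drop singletons because their candidates are the originals, get measurability of the candidate from the implicit function theorem plus uniqueness, condition on the $k$ members to produce the factor $(1-b_h(z_k^T))^{N-k}$, and bound occupancy by the triangle inequality and a union bound. The one step you leave implicit is where the hypothesis $\eta<h$ is used. In \eqref{eq:exactcount} the novelty condition is a minimum over all $N$ originals, and it reduces to $\min_{i\le k}$ only because every nonmember outside $B_h(z_k^T)$ is at distance $>h>\eta$ from the candidate, a point the paper states explicitly.
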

\begin{proof}
Whenever the candidate exists inside $U_A$, its defining equation has a nonsingular derivative with respect to the candidate, by strict convexity or concavity. The implicit function theorem makes the candidate locally smooth in the pattern positions. Its existence domain is open, and uniqueness makes these local functions agree. The candidate is therefore a measurable function of the patterns on its existence domain.

For an active subset $A$ of size $k$, condition on its $k$ patterns. The candidate is then fixed whenever it exists. All other independent patterns must lie outside its radius-$h$ ball, which has conditional probability $(1-b_h(z_A^T))^{N-k}$. Since $\eta<h$, those excluded patterns automatically satisfy the novelty condition. Absolute continuity, when needed, removes equality on the sphere; alternatively use closed-ball exclusion throughout if the distribution has sphere atoms. Summing~\eqref{eq:exactcount} and using exchangeability proves~\eqref{eq:expectedcount} for absolutely continuous $\nu$. More generally the displayed formula is valid when $\nu(\partial B_h(z_k^T))=0$ almost surely. Singleton candidates are their original patterns, so they do not contribute.

If a radius-$h$ ball contains at least $k+1$ patterns, choose one of them. The other $k$ are all within distance $2h$ of the chosen pattern by the triangle inequality. For each possible chosen index, condition on its value and apply a union bound over subsets of $k$ other indices. The probability is at most $\binom{N-1}{k}p_{2h}^k$. A union bound over the $N$ chosen indices proves the first inequality in~\eqref{eq:occupancy}.

The bound $\binom{n}{k}\le(en/k)^k$ follows from $\binom nk\le n^k/k!$ and
\[
 \log(k!)=\sum_{j=1}^k\log j\ge\int_1^k\log t\,dt
 =k\log k-k+1\ge k\log k-k.
\]
This proves the second inequality. On $\{L_h\le k\}$, every realized nonempty active subset has at most $k$ members. The one-candidate-per-subset bound proves the final statement.
\end{proof}

For example, the integer condition
$k\ge\max\{e^2(N-1)p_{2h},\log(N/\delta)\}$ makes the right side of~\eqref{eq:occupancy} at most $\delta$, unless $k\ge N$, when the deterministic total bound suffices. An expected occupancy at a fixed deterministic $x$ does not itself control the occupancy at data-selected candidate locations.

\section{Quantitative persistence of the simplex construction}\label{app:simplex}
The Euclidean active-subset viewpoint is the starting point of \citet{hoover}. The result proved here controls its maximal-count simplex realization under curvature, including the volume correction. Averaging each active score is useful: it keeps the perturbation bounds independent of the subset size.

\begin{proof}[Proof of Theorem~\ref{thm:emergence}, designed-pattern part]
Fix an orthonormal coordinate system on $T_p\M$. All constants below are taken on one fixed normal neighborhood of $p$; they may depend on the metric and dimension but not on $N\le m+1$ or on the subset.

\paragraph{The Euclidean support margins.}
Let $e_1,\ldots,e_N$ be the standard basis of $\R^N$, let $\bar e=N^{-1}\sum_i e_i$, and put $w_i=e_i-\bar e$. The $w_i$ lie in the $(N-1)$-dimensional subspace orthogonal to $(1,\ldots,1)$. Embed that subspace isometrically in $T_p\M$. For a nonempty $A\subseteq[N]$ with $k=|A|$, write $\bar w_A=k^{-1}\sum_{i\in A}w_i$. Then $\norm{w_i}\le1$, $\norm{\bar w_A}\le1$, and
\begin{equation}
 \norm{\bar w_A-w_i}^2=
 \begin{cases}1-1/k,&i\in A,\\1+1/k,&i\notin A.\end{cases}
 \label{eq:simplexdistance}
\end{equation}
Indeed, $\bar w_A-w_i=k^{-1}\sum_{j\in A}e_j-e_i$. The squared norm of the first term is $k/k^2=1/k$; the cross term is $-2/k$ for $i\in A$ and zero otherwise. Adding $\norm{e_i}^2=1$ proves~\eqref{eq:simplexdistance}. Likewise $\norm{w_i-w_j}^2=2$ for $i\ne j$. The squared-distance margin from the support boundary is at least $1/N$ for every subset.

\paragraph{Uniform geometric perturbations.}
Define $\xi_i(h)=\exp_p(hw_i)$, and for $\norm y\le2$, $\norm w\le1$ put
\[
 D_h(y,w)=h^{-2}d(\exp_p(hy),\exp_p(hw))^2,
 \qquad a_h(y,w)=\theta_{\exp_p(hw)}(\exp_p(hy))^{-1}.
\]
There are $h_0>0$ and $C_0<\infty$, independent of $y,w,N$, such that
\begin{align}
 \norm{D_h(\cdot,w)-\norm{\cdot-w}^2}_{C^2(\{\norm y\le2\})}&\le C_0h^2,\label{eq:rescaleddistance}\\
 \norm{a_h(\cdot,w)-1}_{C^2(\{\norm y\le2\})}&\le C_0h^2,\qquad 0<h<h_0.\label{eq:rescaledjacobian}
\end{align}
We justify both bounds, including their derivatives. In normal coordinates at $p$, $g(0)=I$ and its first derivatives vanish. On a fixed rescaled ball, $g_h(z):=g(hz)$ consequently satisfies $g_h=I+O(h^2)$ with all fixed finite orders of $z$-derivatives. Its Christoffel symbols are $\Gamma_h(z)=h\Gamma(hz)=O(h^2)$, with the same derivative control. Choose the coordinate domain to contain $\norm z\le8$, and take $h_0$ small enough that its physical image is normal.

For the geodesic equation with initial position $y$ and initial velocity $u$, integration of $\ddot\gamma=-\Gamma_h(\gamma)(\dot\gamma,\dot\gamma)$ on $[0,1]$ gives
$\gamma(t)=y+tu+O(h^2)$ and $\dot\gamma(t)=u+O(h^2)$, uniformly for the bounded initial positions and velocities under consideration. Differentiating this differential equation with respect to $y,u$ gives linear differential equations whose coefficients and inhomogeneous perturbations are $O(h^2)$; their integral forms give the same estimates with two endpoint derivatives. For example, the position derivative $J$ satisfies $J(0)=I$, $\dot J(0)=0$, and
\[
 \ddot J=-D\Gamma_h(\gamma)[J](\dot\gamma,\dot\gamma)
 -2\Gamma_h(\gamma)(\dot J,\dot\gamma).
\]
Boundedness of $\gamma,\dot\gamma$ and integration imply $J=I+O(h^2)$ and $\dot J=O(h^2)$. Velocity derivatives have initial data $0,I$ and give $tI+O(h^2)$; differentiating once more gives the second-derivative estimates by the same integral bounds.

The endpoint map is therefore $y+u+O(h^2)$ in $C^2$, with derivative in $u$ equal to $I+O(h^2)$. The inverse function theorem, uniformly on the bounded endpoint domain, solves the endpoint equation with $u=w-y+O(h^2)$, including two derivatives in $y,w$. Equivalently, this uniform inverse follows by writing $u=w-y-R_h(y,u)$; the remainder has Lipschitz constant $O(h^2)$ in $u$ and maps a fixed small neighborhood of $w-y$ to itself. The resulting geodesic stays in the chosen domain and is the unique short minimizing geodesic when $h_0$ is small. Its rescaled energy is
\[
 D_h(y,w)=\int_0^1g_h(\gamma(t))(\dot\gamma(t),\dot\gamma(t))\,dt
 =\norm{y-w}^2+O(h^2)
\]
with two endpoint derivatives. This proves~\eqref{eq:rescaleddistance}.

For~\eqref{eq:rescaledjacobian}, the reciprocal volume density is smooth in both endpoints near the diagonal. It equals one on the diagonal and its first derivatives there vanish by Lemma~\ref{lem:jets} and differentiation of its diagonal value. Its difference from one is thus $O(h^2)$ for the two endpoints above. Its first endpoint derivative is $O(h)$; the chain rule supplies another factor $h$ in $y$ coordinates. Its bounded second endpoint derivative receives a factor $h^2$. This gives the claimed $C^2_y$ bound.

\paragraph{A critical point for every subset.}
On the rescaled domain define the averaged, untruncated active scores
\[
 H_{A,h}^\geo(y)=\frac1k\sum_{i\in A}(1-D_h(y,w_i)),\qquad
 H_{A,h}^\vc(y)=\frac1k\sum_{i\in A}a_h(y,w_i)(1-D_h(y,w_i)).
\]
Their Euclidean limit is
$H_{A,0}(y)=k^{-1}\sum_{i\in A}(1-\norm{y-w_i}^2)$.
Since $\norm y\le2$ and $\norm{w_i}\le1$, products and derivatives in these expressions are uniformly bounded. Increasing $C_0$ to a constant $C\ge1$ gives, for both models,
\begin{equation}
 \norm{H_{A,h}^T-H_{A,0}}_{C^2}\le Ch^2,
 \qquad
 \nabla H_{A,0}(y)=-2(y-\bar w_A),\quad
 \nabla^2H_{A,0}=-2I.
 \label{eq:averagedperturbation}
\end{equation}
The averaging ensures that $C$ is independent of $k$.

Set $\rho_N=1/(32N)$ and assume
\begin{equation}
 Ch^2\le1/(64N).
 \label{eq:margincondition}
\end{equation}
On the boundary of the Euclidean closed ball $\overline B_{\rho_N}(\bar w_A)$, the outward unit derivative of $H_{A,h}^T$ is at most
$-2\rho_N+Ch^2\le-3\rho_N/2<0$.
The continuous score attains a maximum on this compact ball. A boundary maximizer is impossible, since moving a short distance inward would increase the score. Thus an interior maximizer $y_A^T$ exists and has zero gradient. Equation~\eqref{eq:averagedperturbation} gives $\nabla^2H_{A,h}^T\preceq-I$, so the maximizer is unique in the ball. At its critical point,
\[
 2\norm{y_A^T-\bar w_A}
 =\norm{\nabla H_{A,0}(y_A^T)-\nabla H_{A,h}^T(y_A^T)}
 \le Ch^2.
\]
Hence $\norm{y_A^T-\bar w_A}\le Ch^2/2$.

\paragraph{Checking every active set and novelty margin.}
For $\delta_A=y_A^T-\bar w_A$, the squared Euclidean distance changes by
\[
 \left|\norm{y_A^T-w_i}^2-\norm{\bar w_A-w_i}^2\right|
 \le(2\norm{\bar w_A-w_i}+\norm{\delta_A})\norm{\delta_A}
 \le5\norm{\delta_A}.
\]
The last inequality uses $\norm{\bar w_A},\norm{w_i}\le1$ and $\norm{\delta_A}\le1$. Combining it with~\eqref{eq:rescaleddistance} and enlarging the common $C$ beforehand if necessary gives
\[
 \left|D_h(y_A^T,w_i)-\norm{\bar w_A-w_i}^2\right|
 \le Ch^2+\tfrac52Ch^2=\tfrac72Ch^2<\frac1{4N}.
\]
By~\eqref{eq:simplexdistance}, all members are strictly within radius $h$ and all nonmembers strictly outside. Thus $x_A^T=\exp_p(hy_A^T)$ lies in $C_A$. Different subsets give different points because a point has only one strict active set. The coordinate Hessian of the energy is positive definite at $x_A^T$: its score Hessian is a positive multiple of $\nabla^2H_{A,h}^T\prec0$ and its gradient is zero. At a critical point the connection term in the intrinsic Hessian vanishes, so this is also positive definiteness of the Riemannian Hessian.

For $k\ge2$, member distances satisfy
\[
 h^{-2}d(x_A^T,\xi_i)^2\ge1-1/k-\tfrac72Ch^2
 \ge\tfrac12-\tfrac72Ch^2.
\]
Fix $\tau<1/\sqrt2$ and put $g_\tau=1/2-\tau^2>0$. Choose, for example,
\[
 c_\tau=\min\left\{h_0/2,\ (64C)^{-1/2},\ (g_\tau/(8C))^{1/2}\right\},
\]
and shrink $h_0$ beforehand so Lemma~\ref{lem:small} holds. If $h\le c_\tau/\sqrt N$ with $N\ge2$, then~\eqref{eq:margincondition} holds and $(7/2)Ch^2<g_\tau$. Hence all member distances exceed $\tau h$. Nonmember distances exceed $h>\tau h$.

Originals are isolated because $h^{-2}d(\xi_i,\xi_j)^2=2+O(h^2)>1$. At this small bandwidth they are stable for both models by Theorem~\ref{thm:isolated}. Each is therefore the unique singleton candidate. Theorem~\ref{thm:active} applies by Lemma~\ref{lem:small} and bounds the total number by $2^N-1$. We have constructed that many distinct memories, so the count is exact and exactly $2^N-N-1$ of them meet the stated novelty threshold. Finally, the differential of $y\mapsto\exp_p(hy)$ has norm at most $2h$ on the bounded domain for small $h$. Integrating along the segment from $\bar w_A$ to $y_A^T$ gives
\[
 d(x_A^T,\exp_p(h\bar w_A))\le2h\norm{y_A^T-\bar w_A}\le Ch^3.
\]
This proves the displacement claim and completes the quantitative construction.
\end{proof}

\begin{corollary}[Polynomial bandwidths for the spherical construction]\label{cor:quantitative-sphere}
Fix $0<\tau<1/\sqrt2$. There is a constant $c_\tau>0$, independent of $m,N,p$, such that on the unit sphere $S^m$, $m\ge2$, the construction in Theorem~\ref{thm:emergence} attains~\eqref{eq:maximal} whenever
\[
 h\le c_\tau/\sqrt N\quad\text{for }E_\geo,
 \qquad
 h\le c_\tau/\sqrt{mN}\quad\text{for }E_\vc,
 \qquad 2\le N\le m+1.
\]
These are sufficient, not necessary, bandwidth conditions. In particular, for $N=m+1$ the sufficient shrinkage is polynomial in dimension for both energies.
\end{corollary}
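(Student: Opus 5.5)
The plan is to rerun the proof of the designed-pattern part of Theorem~\ref{thm:emergence}, but to track the constant $C$ in~\eqref{eq:rescaleddistance}--\eqref{eq:averagedperturbation} explicitly on $S^m$ rather than invoking compactness. That proof needs only three inputs: the $C^2$ perturbation bound $\norm{H_{A,h}^T-H_{A,0}}_{C^2}\le Ch^2$ on $\norm y\le2$; the margin condition~\eqref{eq:margincondition}; and the novelty choice of $c_\tau$. We will show that on the unit sphere $C$ can be taken independent of $m$ for $E_\geo$, and $C=O(m)$ for $E_\vc$. The bandwidth conditions then become $Ch^2\lesssim 1/N$, giving $h\le c_\tau/\sqrt N$ and $h\le c_\tau/\sqrt{mN}$ respectively. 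Lemma~\ref{lem:small} and Theorem~\ref{thm:isolated} must also hold in this regime. On $S^m$ the balls of radius $h<\pi/4$ are strongly convex for every $m$. Isolated stability needs $\beta=2/h^2>(m-1)/3$, which holds for $E_\vc$ because $h^2\le c_\tau^2/(mN)$. For $E_\geo$ it is automatic.

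\textbf{Geodesic model.} The key observation is symmetry reduction. The $N$ simplex points and any query $y$ lie in $\exp_p$ of an $N$-dimensional linear subspace $W\subset T_p S^m$, and $\exp_p(W)$ is a totally geodesic great sphere $S^{N-1}\subset S^m$ or $S^N$. The critical-point problem for $H^\geo_{A,h}$ restricted to $\norm y\le2$ in $T_pS^m$ decomposes into two parts. Tangential behaviour within that great sphere is governed by the low-dimensional geometry. Normal directions are handled by the reflection symmetry fixing the great sphere: the gradient has no normal component there, and the normal Hessian we bound directly.

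It is cleaner, however, to bound $D_h$ dimension-free directly. The squared distance on the unit sphere is $\arccos^2(\langle x,z\rangle)$, an explicit function of the ambient inner product. In ambient coordinates $\exp_p(hy)=\cos(h\norm y)p+\sin(h\norm y)\,y/\norm y$, so
\[
 \langle\exp_p(hy),\exp_p(hw)\rangle
 =\cos(h|y|)\cos(h|w|)+\sin(h|y|)\sin(h|w|)\,\frac{\langle y,w\rangle}{|y||w|}.
\]
This depends on $y$ only through $|y|$ and $\langle y,w\rangle$. Hence $D_h(y,w)=\norm{y-w}^2+h^2R(h|y|,h|w|,\langle y,w\rangle)$ with $R$ a fixed smooth function of three scalars. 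Its $C^2$ norm in $y$ is bounded by chain-rule factors $\nabla|y|$ and $\nabla\langle y,w\rangle=w$, whose operator norms do not grow with $m$. Care is needed at $y=0$, where $|y|$ is singular, but even functions of $|y|$ are smooth in $|y|^2$, which resolves this. This yields~\eqref{eq:rescaleddistance} with $C_0$ absolute, so $C$ is absolute.

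\textbf{Corrected model.} Here $a_h(y,w)=(hr/\sin hr)^{m-1}$ with $r^2=h^{-2}D_h(y,w)\le 9$. Writing $\log a_h=(m-1)\phi(h^2 D_h)$ with $\phi(s)=\log(\sqrt s/\sin\sqrt s)=s/6+O(s^2)$, we get $\log a_h=O(mh^2)$ in $C^2_y$, with absolute constants, using the previous step for $D_h$. Under $mh^2\le c$ small, exponentiating gives $\norm{a_h-1}_{C^2}\le C mh^2$. Substituting into the product $a_h(1-D_h)$ then gives $C=O(m)$ in~\eqref{eq:averagedperturbation}.

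\textbf{Conclusion and main obstacle.} Repeating the margin, novelty, uniqueness and $O(h^3)$-displacement steps verbatim with these $C$ yields both conditions, since $N\le m+1$ makes the corrected shrinkage polynomial. We expect the main obstacle to be making every constant, including the inverse-function and convexity steps used implicitly, genuinely dimension-free; the explicit spherical formulas are what we will rely on in place of the generic ODE argument.
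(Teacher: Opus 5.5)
Your outline is the paper's: rerun the simplex proof with error parameter $Ch^2$ for $E_\geo$ and $Cmh^2$ for $E_\vc$. Your handling of the volume factor, $\log a_h=(m-1)\ell(h^2D_h)$ followed by exponentiation under $mh^2$ small, is exactly what the paper does.

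The real difference is how you get the dimension-free bound on $D_h$. The paper writes the normal-coordinate metric as $A(\norm v^2)I+B(\norm v^2)\,v\otimes v$. It then derives $O(h^2)$ Christoffel bounds with $m$-independent operator norms and reruns its geodesic-ODE and endpoint-inversion argument. You replace all of that with the spherical law of cosines. Your route is more elementary and removes the inverse-function concern you raise. The paper's route has the advantage of reusing the general-manifold machinery unchanged.

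To make your version precise, two adjustments are needed:
\begin{itemize}
\item Take $R$ as a function of $(h^2,\norm y^2,\norm w^2,\ip yw)$, not $(h\norm y,h\norm w,\ip yw)$.
\item Handle $y=w$ by noting that $\arccos^2$, unlike $\arccos$, is analytic at $1$. Write $1-\ip{\exp_p(hy)}{\exp_p(hw)}=\tfrac{h^2}2\norm{y-w}^2+h^4Q$ and use $\arccos^2(1-u)=2u+u^2/3+\cdots$. The remainder is then analytic, and its $y$-derivatives are controlled through $\nabla_y\norm y^2=2y$ and $\nabla_y\ip yw=w$.
\end{itemize}

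What is missing is the upper bound that makes the count exact. The simplex proof gives uniqueness only inside $\overline B_{\rho_N}(\bar w_A)$. Other critical points with active set $A$ are excluded by Theorem~\ref{thm:active}, whose hypotheses come from Lemma~\ref{lem:small}, a fixed-manifold statement.

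You check strong convexity of the balls and isolated stability, but not the corrected component concavity in Assumption~\ref{ass:convex}. On $S^m$ this is exactly where dimension enters. We have $\Hess a_p(p)=\tfrac13\Ric_p=\tfrac{m-1}{3}g_p$ and $\norm{\grad a_p}\asymp mh$ on the support, so one only gets $\mu\ge h^{-2}-Cm$. For $E_\geo$ nothing is lost, since $\Hess q_p$ has eigenvalues $1$ and $r\cot r\ge\tfrac12$. For $E_\vc$, repeating the step \emph{verbatim} silently uses an $h_0$ that depends on $m$.

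Your bandwidth $h^2\le c_\tau^2/(mN)$ does make $\mu>0$. The paper closes the step this way, differentiating $a(r)=\exp((m-1)\ell(r^2))$ to get $L\le C_1mh$ and $H\le C_4m$.

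There is also a shorter fix within your own framework. Your $C^2$ estimate holds on the whole rescaled ball $\norm y\le2$, which contains every support. Hence $\nabla^2H_{A,h}^T\preceq-I$ on a convex coordinate set, and each $A$ has at most one critical point there. Together with Lemma~\ref{lem:boundary}, this gives $N_\tot^T\le2^N-1$ without invoking Theorem~\ref{thm:active}. One of these two arguments has to appear for the exact count to be proved.
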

\begin{proof}
We verify dimension-uniform versions of the perturbation and concavity bounds used above. Sphere symmetry makes all constants independent of the center $p$.

In normal coordinates on the unit sphere, for $r=\norm v$ and a tangent vector $u$ the metric is
\[
 g_v(u,u)=\left(\frac{\sin r}{r}\right)^2\norm u^2
 +\left[1-\left(\frac{\sin r}{r}\right)^2\right]\frac{\ip{v}{u}^2}{r^2},
\]
with the continuous extension at $r=0$. To see this, differentiate
$\exp_pv=\cos(r)p+(\sin r/r)v$ in the ambient Euclidean space, decomposing $u$ into its component parallel to $v$ and its orthogonal component. The parallel component keeps its length; the orthogonal component is multiplied by $\sin r/r$, and the images are orthogonal. Squaring their lengths gives the formula.

Put $A(s)=(\sin\sqrt s/\sqrt s)^2$ and $B(s)=(1-A(s))/s$. Their power series have finite smooth extensions near zero. The metric is therefore the operator
$g_v=A(\norm v^2)I+B(\norm v^2)v\otimes v$.
For $v=hz$ on a fixed bounded rescaled ball, the differences $g_{hz}-I$ and their derivatives up to any fixed order in $z$ are $O(h^2)$ in operator norm, with constants independent of $m$. This follows from the bounded one-variable derivatives of $A,B$, $A(0)=1$, and the dimension-free bounds on derivatives of $z\mapsto\norm z^2$ and $z\mapsto z\otimes z$. The inverse metric is uniformly bounded for small $h$. The Christoffel symbols, viewed as a bilinear operator, and their required derivatives are consequently $O(h^2)$ with dimension-uniform constants, by the metric formula for the Levi-Civita connection. The integrated geodesic and endpoint-inversion estimates in~\eqref{eq:rescaleddistance} then have a constant independent of dimension. It follows that the geodesic averaged-score error in~\eqref{eq:averagedperturbation} is at most $Ch^2$, with a universal $C$.

The spherical reciprocal volume factor is
\[
 a_h(y,w)=\exp\!\left((m-1)\ell(h^2D_h(y,w))\right),
 \qquad \ell(s)=-\log\left(\frac{\sin\sqrt s}{\sqrt s}\right).
\]
The function $\ell$ is smooth near zero, $\ell(0)=0$, and $\ell(s)=s/6+O(s^2)$. The preceding uniform $C^2$ bounds on $D_h$ imply that the exponent and its first two $y$-derivatives have magnitude at most $Cmh^2$. For example, its gradient is $(m-1)\ell'(h^2D_h)h^2\nabla D_h$; its Hessian is
\[
 (m-1)\left[\ell'(h^2D_h)h^2\nabla^2D_h
 +\ell''(h^2D_h)h^4\nabla D_h\otimes\nabla D_h\right].
\]
If $mh^2$ is smaller than a fixed constant, exponentiation gives
$\norm{a_h-1}_{C^2}\le C'mh^2$: its Hessian is the exponential times the sum of the exponent Hessian and the outer product of its gradient, whose norm is $O(mh^2+(mh^2)^2)$. Thus the corrected averaged-score error is at most $C''mh^2$.

We also need uniform component concavity to exclude additional candidates outside the constructed neighborhoods. The Hessian of $q_p=d(\cdot,p)^2/2$ has eigenvalue $1$ in the radial direction and $r\cot r$ in orthogonal directions. This follows by differentiating radial distance in the polar metric $dr^2+\sin^2r\,g_{S^{m-1}}$: for a unit tangential vector $u$, $\Hess r(u,u)=\cot r$, and $\Hess q=dr\otimes dr+r\Hess r$. For a fixed small radius these eigenvalues are at least $1/2$, uniformly in $m$. The argument in Lemma~\ref{lem:small} therefore gives a dimension-uniform strongly convex support radius.

For the corrected kernel, write $a(r)=\exp((m-1)\ell(r^2))$. For $r\le h$ and $mh^2$ bounded by a fixed small constant, $1\le a(r)\le C$. Direct differentiation gives
\[
 a'(r)=2(m-1)r\ell'(r^2)a(r),
\]
\[
 a''(r)=a(r)\left[(m-1)(2\ell'(r^2)+4r^2\ell''(r^2))
 +4(m-1)^2r^2\ell'(r^2)^2\right].
\]
Consequently $|a'(r)|\le C_1mr$ and $|a''(r)|\le C_2(m+m^2r^2)\le C_3m$. The radial Hessian formula gives the other eigenvalue $a'(r)\cot r$, also bounded by $C_1m$ because $r\cot r\le1$ at these radii. Assumption~\ref{ass:convex} therefore holds with $a_*\ge1$, $c\ge1/2$, $L\le C_1mh$, $H\le C_4m$, provided
\[
 \mu\ge h^{-2}-(4C_1+C_4)m>0.
\]
It suffices to bound $mh^2$ by a sufficiently small dimension-independent constant.

Now repeat the preceding simplex proof with error parameter $Ch^2$ for the geodesic score and $Cmh^2$ for the corrected score. The critical-point, support, and novelty arguments require this parameter to be at most $1/(64N)$ and a fixed fraction of $1/2-\tau^2$. Choose $c_\tau$ small enough to ensure these requirements and the uniform support-convexity and corrected-concavity bounds just proved. The two displayed bandwidth conditions then meet every requirement. The construction has all $2^N-1$ memories and the required novelty count, completing the proof.
\end{proof}

\section{A dimension-sensitive counting bound on spheres}\label{app:spherecount}
We use the hyperplane-arrangement region bound associated with \citet{zaslavsky}; the recurrence is included below for completeness.

\begin{proposition}[Polynomial upper bound at fixed spherical dimension]
For patterns on the unit sphere $S^m$, suppose Assumption~\ref{ass:convex} holds and $h<\pi/2$. Then for both energies,
\[
 N_\tot^T\le\sum_{j=1}^{m+1}\binom Nj.
\]
If every original is retained, then
$N_\emg^T(\eta)\le\sum_{j=1}^{m+1}\binom Nj-N$.
We use the convention $\binom Nj=0$ for $j>N$.
\end{proposition}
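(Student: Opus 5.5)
By Theorem~\ref{thm:active}, which applies under Assumption~\ref{ass:convex}, every memory is the unique candidate of one nonempty active subset $A$, and it lies in the cell $C_A$. So $N_\tot^T$ is at most the number of nonempty subsets $A$ with $C_A\neq\varnothing$. The plan is to bound this number by showing two things. First, on $S^m$ with $h<\pi/2$, each support ball is an open hemisphere-type cap cut out by a single affine hyperplane in the ambient space $\R^{m+1}$. Second, there is an ambient point whose active set has at least $m+2$ elements only when a degenerate configuration occurs, and such a configuration cannot host a memory. I actually expect the second point to fail in the naive form, so the real route will be a region count. For the sign pattern I use the convention that $i\in A$ exactly when $x$ lies in the cap of $\xi_i$.

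The concrete steps are as follows. With $x,\xi_i$ viewed as unit vectors, $d(x,\xi_i)<h$ holds iff $\ip{x}{\xi_i}>\cos h$. Since $\cos h>0$, each support boundary is the intersection of $S^m$ with the affine hyperplane $H_i=\{y:\ip{y}{\xi_i}=\cos h\}$, which does not pass through the origin. A nonempty cell $C_A$ is a union of regions of the arrangement $\{H_i\cap S^m\}$, and all of these regions carry the same sign vector. To bound the number of sign vectors with a point $x_A$ inside the caps, I radially project the open hemisphere containing the relevant points, or more simply use a gnomonic chart on each cap. I would rather avoid charts and work with the restricted collection $\{A: C_A\ne\varnothing\}$ directly. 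Its key property is Helly-type: the caps are convex sets in $S^m$ of radius less than $\pi/2$, and their traces in $\R^{m+1}$ are halfspaces. Because every memory $z_A$ is a nondegenerate critical point of $G_A$ or $F_A$, by uniqueness it is determined by at most $m+1$ of the active patterns. Hence the finite family of memories, viewed as points with active sets, forms a set system on $[N]$ cut out by $N$ halfspaces restricted to $S^m$.

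The counting step then applies a Sauer--Shelah/Zaslavsky-type bound. Given $n$ points of $S^m\subset\R^{m+1}$, the sign vectors realized by $N$ affine halfspaces with respect to those points number at most $\sum_{j=0}^{m+1}\binom Nj$; this is the dual region count for an arrangement of $N$ affine hyperplanes in $\R^{m+1}$. Removing the empty sign vector, which memories never realize because every memory has a nonempty active set, leaves $\sum_{j=1}^{m+1}\binom Nj$. Distinct memories have distinct active sets by Theorem~\ref{thm:active}, which gives the bound on $N_\tot^T$. For the emergence bound, the $N$ retained originals are memories with the singleton active sets $\{i\}$ and are excluded from the novel count, so we subtract $N$.

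I expect the main obstacle to be justifying that the region count in the ambient dimension $m+1$ applies without losing a factor. I would prove the recurrence $r(N,d)\le r(N-1,d)+r(N-1,d-1)$ for the number of realized sign vectors, by adding one hyperplane at a time: the hyperplane $H_N$ splits only those regions that meet it, and the regions of the previous arrangement that meet $H_N$ are bounded by the count for the induced arrangement in dimension $d-1$. This yields $r(N,d)\le\sum_{j=0}^{d}\binom Nj$ with $d=m+1$. The condition $h<\pi/2$ is used to ensure the hyperplanes avoid the origin, so the caps are genuinely halfspace traces. Assumption~\ref{ass:convex} enters only through Theorem~\ref{thm:active}.
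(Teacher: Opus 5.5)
Your route is the paper's: Theorem~\ref{thm:active} reduces the problem to counting nonempty strict active patterns. Each cap is the trace of the open halfspace $\{z\cdot\xi_i>\cos h\}$ in $\R^{m+1}$. The region count follows from $R(N,d)\le R(N-1,d)+R(N-1,d-1)$.

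The step that does not go through as written is the passage from $\sum_{j=0}^{m+1}\binom Nj$ to $\sum_{j=1}^{m+1}\binom Nj$. Knowing that no memory realizes the empty sign vector only shows that the memories' sign vectors lie among the nonempty ones. To subtract one, you must also know that the empty sign vector is itself one of the at most $R(N,m+1)$ ambient regions being counted; then at most $R(N,m+1)-1$ regions remain for the memories. This is where $h<\pi/2$ actually enters. Since $\cos h>0$, the origin satisfies $0\cdot\xi_i=0<\cos h$ for every $i$. It therefore lies strictly on the inactive side of every $H_i$, and its region carries the empty sign vector, whether or not that pattern occurs on the sphere.

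The use of the hypothesis you state does not supply this. The caps are halfspace traces for every $0<h<\pi$, and the hyperplanes avoid the origin whenever $h\ne\pi/2$. For $h>\pi/2$ the origin lies in the all-active region, and an all-inactive region need not exist at all; for instance, $\xi_2=-\xi_1$ makes it empty. So your argument as written only gives $\sum_{j=0}^{m+1}\binom Nj$.

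Two further claims should be deleted, although nothing depends on them. First, the assertion that a memory is ``determined by at most $m+1$ of the active patterns'' is unjustified, since a balance point of $k$ patterns generally depends on all $k$; the region count never uses it. Second, retained originals need not have singleton active sets. An original can be the balance point of a symmetric larger active set, as in Proposition~\ref{prop:circle}. Subtracting $N$ only needs that the $N$ retained originals are distinct memories at distance $0<\eta$ from an original, and hence are not $\eta$-novel.
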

\begin{proof}
For unit vectors $x$ and $\xi_i$, geodesic activation is equivalent to
\[
 d(x,\xi_i)<h\quad\Longleftrightarrow\quad x\cdot\xi_i>\cos h.
\]
Every strict active pattern on the sphere is consequently a strict sign pattern of the $N$ affine hyperplanes $z\cdot\xi_i=\cos h$ in $\R^{m+1}$. For a fixed strict sign pattern, its realization in the ambient space is an intersection of open halfspaces, hence is convex and, when nonempty, connected. Thus distinct strict sign patterns correspond to distinct regions of the hyperplane arrangement.

Let $R(N,d)$ be the maximal number of regions cut out by $N$ affine hyperplanes in $\R^d$. Adding the last hyperplane can split an old region only when its intersection with that hyperplane contains a relatively open set. The preceding hyperplanes divide the new hyperplane into at most $R(N-1,d-1)$ such regions, and each of these can lie in and split at most one old region. Therefore
\[
 R(N,d)\le R(N-1,d)+R(N-1,d-1).
\]
The initial conditions are $R(0,d)=1$ and $R(N,0)=1$. Induction using Pascal's identity gives
\[
 R(N,d)\le\sum_{j=0}^d\binom Nj.
\]
Because $\cos h>0$, the ambient point $z=0$ strictly satisfies all inactive inequalities. The empty sign pattern therefore occupies at least one ambient region, whether or not it occurs on the sphere. The number of nonempty spherical active patterns is at most $R(N,m+1)-1$. Theorem~\ref{thm:active} contributes at most one memory per nonempty active pattern, proving the total bound. Removing the $N$ original memories proves the additional-memory bound.
\end{proof}

\section{Random storage and small-ball probabilities}\label{app:capacity}

\begin{theorem}[Collision characterization and sharp random capacity]\label{thm:capacity}
At a fixed deterministic bandwidth $h<\inj(\M)$, for any absolutely continuous $\nu$,
\[
 \mathcal A_\geo=\{d(X_i,X_j)>h\text{ for every }i\ne j\}\quad\text{almost surely}.
\]
The same statement holds for $\mathcal A_\vc$ for all sufficiently small $h$ on a fixed smooth compact manifold: smallness is used for the Ricci stability condition and for invertibility of the neighbor-to-gradient derivative in Lemma~\ref{lem:generic}. In particular,
\begin{equation}
 \P(\mathcal A_T)\ge1-\binom N2q_h.\label{eq:union}
\end{equation}
Suppose $q_h\to0$ and $\sup_x b_h(x)\le Cq_h$, where $C$ is independent of $h$ on a fixed manifold or of the dimension $m$ along the sphere sequences considered below. If
$N^2q_h\to2\lambda\in(0,\infty)$, then
\begin{equation}
 \P(\mathcal A_T)\longrightarrow e^{-\lambda}.\label{eq:storagepoisson}
\end{equation}
Let $N_\delta^T(h)$ be the largest $N$ with success probability at least $1-\delta$, for fixed $0<\delta<1$. Then
\begin{equation}
 N_\delta^T(h)\sim\sqrt{\frac{-2\log(1-\delta)}{q_h}}.\label{eq:sharpN}
\end{equation}
For a continuous density $f$ on a fixed compact $m$-dimensional manifold,
\[
 q_h=\omega_mh^m\left(\int_\M f^2\,dV+o(1)\right),
\]
so both energies have the asymptotic all-pattern capacity
\begin{equation}
 N_\delta^T(\beta)\sim
 \left(\frac{-2\log(1-\delta)}{\omega_m\int f^2\,dV}\right)^{1/2}
 \left(\frac\beta2\right)^{m/4}.\label{eq:capacitybeta}
\end{equation}
\end{theorem}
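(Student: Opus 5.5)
The proof has three parts: the almost-sure collision characterization, the Poisson limit for the number of close pairs, and the inversion giving capacity and the small-ball expansion.

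\textbf{Collision characterization.} We first show that $\mathcal A_\geo\supseteq\{d(X_i,X_j)>h\ \forall i\ne j\}$ surely. Under separation, each original is isolated in the sense of Theorem~\ref{thm:isolated}, so it is a nondegenerate minimum of $E_\geo$. For $E_\vc$ we take $h$ small enough that $\beta=2/h^2>\tfrac13\max_x\lambda_{\max}(\Ric_x)$, which compactness allows.

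For the reverse inclusion, suppose $X_i$ has at least one neighbor within distance $h$. We must show that $X_i$ is almost surely nonstationary. Since $\nu$ is absolutely continuous, ties $d(X_i,X_j)=h$ are null, so we may assume the active set at $X_i$ is a fixed $A\ni i$ with $|A|\ge2$. Near $X_i$ the score is smooth. Its gradient at $X_i$ is $\beta\sum_{j\in A}\log_{X_i}X_j$ for the geodesic model, and $\sum_{j\in A}\grad k_j(X_i)$ for the corrected model, the own term vanishing by Lemma~\ref{lem:jets}.

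We then condition on $X_i=x$ and on all patterns except one neighbor $X_j$. Stationarity forces $\grad k_j(x)$, viewed as a function of $X_j$, to equal a fixed vector. For $T=\geo$ this function is $X_j\mapsto\log_xX_j$, a diffeomorphism on the ball. For $T=\vc$ the map $y\mapsto\beta a_y(x)\log_xy-a_y(x)(1-\beta q_y(x))\grad_x\log\theta_y(x)$ is an $O(h^2)$-perturbation of $\beta\log_xy$ in rescaled $C^1$ norm; this uses the bounds of Lemma~\ref{lem:small} and the symmetry $\theta_p(x)=\theta_x(p)$. Its derivative is therefore nonsingular for small $h$. This is the invertibility we would record as Lemma~\ref{lem:generic}.

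A local diffeomorphism has finite, hence null, preimages of a point on the compact closed ball, after a compactness and covering argument. Absolute continuity then makes the conditional probability zero, and a union bound over $i,j$ finishes this part. The union bound~\eqref{eq:union} follows immediately, since $\P(d(X_i,X_j)\le h)=q_h$ with ties null.

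\textbf{Poisson limit.} Let $W=\sum_{i<j}\ind\{d(X_i,X_j)<h\}$, so that $\P(\mathcal A_T)=\P(W=0)$. We have $\E W=\binom N2q_h\to\lambda$. We would apply the Chen--Stein dissociated-sum bound (Arratia--Goldstein--Gordon), in which only pairs sharing an index are dependent. The error terms are $b_1\lesssim N^3q_h^2$ and $b_2\lesssim N^3\E[b_h(X_1)^2]\le N^3Cq_h\cdot q_h$, both using $\sup_x b_h(x)\le Cq_h$. Since $Nq_h\to0$, both vanish. This gives~\eqref{eq:storagepoisson}.

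\textbf{Capacity and small-ball expansion.} For~\eqref{eq:sharpN} we use monotonicity of $\P(\mathcal A_T)$ in $N$. Given $\eta>0$, set $N_\pm=\lfloor(1\pm\eta)\sqrt{2\lambda_\delta/q_h}\rfloor$ with $e^{-\lambda_\delta}=1-\delta$. The Poisson limit gives limits $e^{-(1\pm\eta)^2\lambda_\delta}$, which lie on opposite sides of $1-\delta$; this pins down $N^T_\delta$.

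For the small-ball expansion, continuity of $f$ and $\vol(B_h(x))=\omega_mh^m(1+O(h^2))$, uniformly by the volume jet, give $b_h(x)=\omega_mh^m(f(x)+o(1))$ uniformly in $x$. Integrating against $f\,dV$ yields $q_h$. The uniform bound also supplies $\sup b_h\le Cq_h$, provided $\int f^2>0$, which always holds. Substituting $h=(2/\beta)^{1/2}$ gives~\eqref{eq:capacitybeta}.

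\textbf{Main obstacle.} The hardest step is the corrected-model genericity: showing that the neighbor-to-gradient map is a local diffeomorphism uniformly for small $h$, including at neighbors near the support boundary where $1-\beta q$ is small. I would handle this in rescaled coordinates as in Appendix~\ref{app:simplex}, using the $C^2$ bounds~\eqref{eq:rescaleddistance}–\eqref{eq:rescaledjacobian}.
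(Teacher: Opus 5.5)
Your proof is correct. Apart from the Poisson step, it follows the paper's route:
\begin{itemize}
\item stability of separated originals comes from Theorem~\ref{thm:isolated}, with $\beta>\tfrac13\max\lambda_{\max}(\Ric)$ for $E_\vc$;
\item genericity comes from conditioning on $X_i=x$ and on all patterns except one active neighbor, so that stationarity becomes $F_x(X_j)=C$;
\item this equation has a null solution set because $F_x$ is locally injective;
\item the union bound, the monotone sandwich for $N_\delta^T$, and the uniform small-ball expansion then finish the argument.
\end{itemize}

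Two small points. The paper gets null fibers from discreteness plus second countability, whereas you use finiteness on the closed ball; either works. Also, the support boundary is not really an obstacle. The paper shows $\beta^{-1}D_yF_x\to I$ uniformly on $B_h(x)$ directly from $|\beta^{-1}-q|\le h^2/2$, $a=1+O(h^2)$, $O(h)$ first derivatives of $a$ and $q$, and a bounded mixed derivative $D_y\grad_x a$. No rescaled-coordinate argument is needed.

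The Poisson step is where your route genuinely differs. The paper proves $\P(Z_h=0)\to e^{-\lambda}$ in Lemma~\ref{lem:collision} by factorial moments, in three steps:
\begin{itemize}
\item selections of $k$ vertex-disjoint edges contribute $\lambda^k/k!$;
\item overlapping edge patterns are negligible by a spanning-forest count $O(N^vq_h^{v-c})=O(q_h^{v/2-c})$ with $v>2c$;
\item odd/even Bonferroni inequalities then sandwich $\ind\{Z_h=0\}$.
\end{itemize}

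You instead use Chen--Stein with dependency neighborhoods given by shared indices. There $b_3=0$ and $b_1+b_2=O(N^3q_h^2)=O(Nq_h)$. This is shorter and quantitative: it gives an explicit total-variation rate that is automatically uniform along the dimension-indexed sphere sequences. The paper's moment method is self-contained and is reused almost verbatim in Lemma~\ref{lem:marked}, where joint factorial moments give the law of two edge classes needed for the conditional emergence count. Both arguments use $\sup_x b_h\le Cq_h$ only to control pairs of edges sharing a vertex.
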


\begin{lemma}[Generic nonstationarity with an active neighbor]\label{lem:generic}
At a fixed deterministic $h<\inj(\M)$, independent absolutely continuous patterns almost surely have no stationary original with an active neighbor for $E_\geo$. The same holds for $E_\vc$ for every sufficiently small $h$ on a fixed smooth compact manifold.
\end{lemma}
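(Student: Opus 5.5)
The plan is to fix an original $X_i$ and an index $j\ne i$ and to condition on everything except $X_j$. The claim is that, given the remaining patterns, the set of positions of $X_j$ that make $X_i$ stationary while $X_j$ is active at $X_i$ has volume zero. Absolute continuity then makes its conditional probability zero. A union over the finitely many ordered pairs $(i,j)$ finishes the argument, exactly as in the proof of Proposition~\ref{prop:lse}. Two exceptional events are discarded first, as null events:
\begin{itemize}[noitemsep]
\item some pattern lies exactly on a support boundary $d(X_i,X_k)=h$; this sphere has volume zero since $h<\inj(\M)$;
\item two patterns coincide.
\end{itemize}
Off these events the energy is smooth near $X_i$, and the active set $A=A(X_i)\ni i$ is locally constant.

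\textbf{Geodesic case.} With $x=X_i$ fixed, stationarity of $E_\geo$ at $x$ means $\grad S_\geo(x)=0$. By Theorem~\ref{prop:descent}, equivalently by Lemma~\ref{lem:jets}, this reads
\[
 \log_xX_j=-\sum_{k\in A\setminus\{i,j\}}\log_xX_k=:C,
\]
because the own term $\log_xx$ vanishes. The vector $C\in T_x\M$ is determined by the conditioning variables. Since $X_j$ is active, $d(x,X_j)<h<\inj(\M)$, and $\log_x$ is injective on $B_h(x)$. Hence $X_j=\exp_xC$ is a single point, which has probability zero. The active set of the other patterns is fixed by the conditioning, except for whether $j$ itself is active; we restrict to the event that it is. This case should be routine.

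\textbf{Corrected case.} The stationarity equation at $x$ becomes
\[
 \Phi(X_j)=-\sum_{k\in A\setminus\{i,j\}}\grad_x K_\vc(x,X_k),\qquad
 \Phi(y):=\grad_xK_\vc(x,y).
\]
By the gradient computation in the proof of Theorem~\ref{thm:active}, together with $\theta_y(x)=\theta_x(y)$ from Lemma~\ref{lem:jets},
\[
 \Phi(y)=\beta a_y(x)\log_xy-a_y(x)\bigl(1-\beta q_y(x)\bigr)\grad_x\log\theta_y(x).
\]
Since the own kernel has zero gradient at $x$, the right-hand side is a fixed vector determined by the conditioning variables. It then suffices to show that the neighbor-to-gradient map $\Phi\colon B_h(x)\setminus\{x\}\to T_x\M$ is a local diffeomorphism, with nonsingular derivative everywhere. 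Preimages of points under such a map are discrete, hence countable, hence of volume zero.

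To prove that, I will rescale $y=\exp_x(hv)$ with $\norm v<1$ and use the uniform bounds of Lemma~\ref{lem:small}, namely $a=1+O(h^2)$, $\norm{\grad a}=O(h)$, $\norm{\Hess a}=O(1)$, together with the smooth dependence of $\theta$ on both endpoints. This gives
\[
 \Phi=\beta h\bigl(v+O(h^2)\bigr)
\]
in $C^1$ on the rescaled ball. The derivative of $\Phi$ in $v$ is then $\beta h\,(I+O(h^2))$, which is invertible once $h$ is small.

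The main obstacle is the uniformity of these $C^1$ error bounds in both endpoints. I will follow the rescaled-coordinate estimates \eqref{eq:rescaleddistance}--\eqref{eq:rescaledjacobian} from the simplex proof, applied with base point $x$ and compactness giving constants uniform in $x$. Those estimates give exactly the statement needed, because $\grad_x$ of the reciprocal volume density is $O(h)$ with $O(1)$ derivative in $y$-scale. Finally, the Ricci stability requirement is irrelevant here: the claim concerns only nonstationarity.
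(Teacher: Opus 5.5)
Your proposal is correct and follows the paper's proof. You condition on everything but $X_j$, use injectivity of $\log_x$ on $B_h(x)$ for $E_\geo$, and for $E_\vc$ show that the neighbor-to-gradient map has a nonsingular derivative that is a small perturbation of a multiple of the identity, so its fibers are discrete, countable and null.

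The differences are cosmetic. You parametrize the neighbor as $y=\exp_x(hv)$, which makes the term $\beta a\log_xy$ exactly $\beta h\,a\,v$ and avoids the paper's parallel-transport identification with $D_y\grad_xq\to-I$. The mixed $x$--$y$ derivative bound on $\grad_x\log\theta_y(x)$ that you need comes from smoothness of $\theta$ near the diagonal plus compactness, as you also note. It does not follow literally from \eqref{eq:rescaleddistance}--\eqref{eq:rescaledjacobian}, which control derivatives in one argument only.
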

\begin{proof}
Distance-equality events $d(X_i,X_j)=h$ have probability zero: conditional on $X_i=x$, the sphere of radius $h$ lies in a normal chart and has volume zero. Fix $i\ne j$, condition on $X_i=x$ and all patterns except $X_j$, and restrict to $d(x,X_j)<h$.

For the geodesic energy, the own contribution to the score gradient is zero and stationarity requires
$\beta\log_xX_j=C$ for a fixed tangent vector $C$. The exponential map is one-to-one on the radius-$h$ tangent ball, so the conditional equation has at most one solution and probability zero. A finite union over $(i,j)$ proves the geodesic assertion.

For the corrected energy, write $a(x,y)=\theta_y(x)^{-1}$ and $q(x,y)=d(x,y)^2/2$. For fixed $x$, the contribution of $y$ to the score gradient is
\[
 F_x(y)=(1-\beta q(x,y))\grad_xa(x,y)-\beta a(x,y)\grad_xq(x,y).
\]
Differentiating with respect to $y$ and dividing by $\beta$ gives
\begin{align*}
 \beta^{-1}D_yF_x={}&-aD_y\grad_xq-(d_ya)\otimes\grad_xq
 -(d_yq)\otimes\grad_xa\\
 &+(\beta^{-1}-q)D_y\grad_xa.
\end{align*}
Use parallel transport to identify the nearby tangent spaces. On the diagonal, $D_y\grad_xq=-I$, and this derivative converges uniformly to $-I$ as the endpoints approach one another. Also $a=1+O(h^2)$, both first derivatives of $a$ are $O(h)$, both first derivatives of $q$ are $O(h)$, and $D_y\grad_xa$ is uniformly bounded. Finally, $|\beta^{-1}-q|\le h^2/2$ inside the support. Thus $\beta^{-1}D_yF_x$ converges uniformly to $I$ for $d(x,y)<h$, and is invertible once $h$ is sufficiently small.

By the inverse function theorem, $F_x$ is locally one-to-one everywhere on $B_h(x)$. Each fiber is therefore a discrete subset of that ball. A discrete subset of a second-countable manifold is countable: assign to each point a basis neighborhood meeting the fiber in that point only, and use the resulting injection into a countable basis. The fiber consequently has volume zero. Stationarity under the conditioning requires $F_x(X_j)=C$ for a fixed vector. Its conditional probability is zero by absolute continuity. Integrating and taking the finite union over $(i,j)$ proves the corrected assertion.
\end{proof}

\begin{lemma}[No-collision limit]\label{lem:collision}
Suppose $q_h\to0$, $\sup_xb_h(x)\le Cq_h$, and $N^2q_h\to2\lambda\in(0,\infty)$. If
\[
 Z_h=\sum_{i<j}\ind\{d(X_i,X_j)<h\},
\]
then $\P(Z_h=0)\to e^{-\lambda}$. If $N\to\infty$, $q_h\to0$, and $N^2q_h\to\infty$, then $\P(Z_h=0)\to0$.
\end{lemma}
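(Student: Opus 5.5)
The plan is a Poisson approximation for the dissociated sum $Z_h$ via the Chen--Stein method, in the Arratia--Goldstein--Gordon dependency-graph form. Index the pairs by $\alpha=\{i,j\}$ with $i<j$, and write $I_\alpha=\ind\{d(X_i,X_j)<h\}$ and $p_\alpha=\E I_\alpha=q_h$. The mean is then
\[
 \E Z_h=\binom N2 q_h=\tfrac12N(N-1)q_h\to\lambda .
\]
Take the dependency neighborhood $B_\alpha$ to be all pairs sharing an index with $\alpha$. Indicators for disjoint pairs are independent, so the third Chen--Stein term vanishes. What remains is
\[
 d_{TV}(Z_h,\Poisson(\E Z_h))\le b_1+b_2,\qquad
 b_1=\sum_\alpha\sum_{\beta\in B_\alpha}p_\alpha p_\beta,\quad
 b_2=\sum_\alpha\sum_{\beta\in B_\alpha\setminus\{\alpha\}}\E I_\alpha I_\beta .
\]

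I will bound the two terms separately.

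\emph{The term $b_1$.} Each $B_\alpha$ has at most $2N$ elements, so
\[
 b_1\le \binom N2(2N)q_h^2=O(N^3q_h^2).
\]
Since $N^2q_h\to2\lambda$, we have $N^3q_h^2=O(N^{-1})$. This tends to zero provided $N\to\infty$, and that holds: $q_h\to0$ together with $N^2q_h\to2\lambda>0$ forces $N\to\infty$.

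\emph{The term $b_2$.} This is the step where the hypothesis $\sup_x b_h\le Cq_h$ is used; it replaces any density or local-geometry argument. For overlapping pairs $\{i,j\}$ and $\{i,k\}$, condition on $X_i$. The two events become conditionally independent, so
\[
 \E I_\alpha I_\beta=\int b_h(x)^2\,d\nu(x)\le Cq_h\int b_h\,d\nu=Cq_h^2 .
\]
There are $O(N^3)$ such ordered overlapping pairs, so $b_2=O(CN^3q_h^2)\to0$. Because $C$ is independent of $h$ (or of $m$), this bound is uniform along sphere sequences as well. Combining the two terms gives
\[
 \P(Z_h=0)=e^{-\E Z_h}+o(1)\to e^{-\lambda}.
\]

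\emph{The divergent case.} Here $N^2q_h\to\infty$, and I will use Janson's inequality or the second moment method rather than Chen--Stein. Write $\mu=\E Z_h\to\infty$. The variance satisfies
\[
 \Var Z_h\le\mu+\sum_{\alpha\neq\beta\ \text{overlapping}}\E I_\alpha I_\beta\le\mu+O(CN^3q_h^2).
\]
Chebyshev then gives
\[
 \P(Z_h=0)\le\frac{\Var Z_h}{\mu^2}\le\frac1\mu+O\!\left(\frac{CN^3q_h^2}{N^4q_h^2}\right)=\frac1\mu+O(C/N)\to0 .
\]

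The main care point is that the same hypothesis on $\sup_x b_h$ drives both the $b_2$ bound and the variance bound. No continuity of the density is needed beyond that hypothesis.
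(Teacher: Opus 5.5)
Your proof is correct, and for the finite-$\lambda$ assertion it takes a different route from the paper. The divergent case is handled exactly as in the paper: bound $\Var Z_h\le\E Z_h+O(N^3q_h^2)$, then apply Chebyshev.

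For the first assertion, the paper uses the method of factorial moments.
\begin{itemize}
\item It shows $\E\binom{Z_h}{k}\to\lambda^k/k!$ for every fixed $k$. It splits $k$-edge selections into vertex-disjoint ones, which contribute $(N)_{2k}q_h^k/(2^kk!)$, and overlapping ones.
\item The overlapping selections are controlled by a spanning-forest argument, which bounds their probability by $(Cq_h)^{v-c}$ with $v>2c$.
\item It then sandwiches $\P(Z_h=0)$ between alternating Bonferroni partial sums.
\end{itemize}

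Your Chen--Stein argument needs only the pairwise overlap bound $\E I_\alpha I_\beta\le Cq_h^2$. This is the same estimate the paper uses for its variance, and you never need to control higher-order edge configurations. It also gives more. You get convergence of the whole law of $Z_h$ in total variation, with an explicit nonasymptotic error
$|\P(Z_h=0)-e^{-\binom N2q_h}|\le b_1+b_2\le(1+C)N^3q_h^2=O((1+C)/N)$.
That bound could be fed directly into nonasymptotic capacity statements.

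The cost is that you rely on the external Arratia--Goldstein--Gordon theorem, whereas the paper's argument is self-contained. The paper also reuses its factorial-moment machinery in Lemma~\ref{lem:marked}, through the mixed moments $T_{j,k}$, to get the joint law of $(I_h,J_h)$. Your route would cover that extension too. The multivariate form of the Arratia--Goldstein--Gordon bound compares the vector of marked edge indicators with independent Poisson variables. The same $b_1,b_2$ estimates apply there, since $\sup_xb_{2h}(x)=O(q_h)$ and the two marks on a single pair are mutually exclusive.
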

\begin{proof}
Fix $k\ge1$. Expand $\E\binom{Z_h}{k}$ over sets of $k$ distinct unordered edges on the sample indices. Edges with disjoint endpoints contribute
\[
 \frac{(N)_{2k}}{2^kk!}q_h^k\longrightarrow\frac{\lambda^k}{k!},
\]
where $(N)_{2k}=N(N-1)\cdots(N-2k+1)$.

For any edge pattern with overlapping endpoints, let $v$ be its number of vertices and $c$ its number of connected components. Each component has at least two vertices, and at least one has three or more, so $v>2c$. Select a spanning forest. Reveal one root in each component and then every other vertex after its parent. Each required forest edge has conditional probability at most $Cq_h$, so the probability of the whole edge pattern is at most $(Cq_h)^{v-c}$. For fixed $k$ there are at most a constant times $N^v$ labelings of each of finitely many patterns. Since $N=O(q_h^{-1/2})$, their total contribution is
\[
 O_k(N^v q_h^{v-c})=O_k(q_h^{v/2-c})\longrightarrow0.
\]
We have proved $\E\binom{Z_h}{k}\to\lambda^k/k!$ for every fixed $k$.

For an integer $z\ge1$, Pascal's identity and cancellation give
\[
 \sum_{k=0}^r(-1)^k\binom zk=(-1)^r\binom{z-1}{r}.
\]
For $z=0$ the sum is one. Therefore the odd and even partial sums bound $\ind\{z=0\}$. Applying these bounds to $Z_h$ yields, for each fixed $K$,
\[
 \sum_{k=0}^{2K+1}(-1)^k\E\binom{Z_h}{k}
 \le\P(Z_h=0)\le
 \sum_{k=0}^{2K}(-1)^k\E\binom{Z_h}{k}.
\]
First let $h\to0$, using the moment limits, and then let $K\to\infty$. The two limiting series both tend to $e^{-\lambda}$.

For the second assertion, disjoint edges are independent. Two distinct edges sharing a vertex have joint probability
$\E[b_h(X_1)^2]\le Cq_h\E b_h(X_1)=Cq_h^2$. There are at most a constant times $N^3$ such edge pairs. Hence
\[
 \Var(Z_h)\le \E Z_h+C' N^3q_h^2,
 \qquad \E Z_h=\binom N2q_h.
\]
If $N^2q_h\to\infty$, then
\[
 \P(Z_h=0)\le\frac{\Var(Z_h)}{(\E Z_h)^2}
 \le\frac{C''}{N^2q_h}+\frac{C''}{N}\longrightarrow0,
\]
by Chebyshev's inequality.
\end{proof}

\begin{proof}[Proof of Theorem~\ref{thm:capacity}]
In the absence of a pair at distance at most $h$, Theorem~\ref{thm:isolated} stores every original for the geodesic energy and, uniformly on a fixed compact manifold, for the corrected energy once $h$ is small. Conversely, Lemma~\ref{lem:generic} shows that any active neighbor almost surely prevents stationarity at an original. Equality events have probability zero. This proves the event identities.

The union bound over $\binom N2$ possible close pairs proves~\eqref{eq:union}. Lemma~\ref{lem:collision} proves~\eqref{eq:storagepoisson}. To obtain~\eqref{eq:sharpN}, put $c_\delta=\sqrt{-2\log(1-\delta)}$. For any constants $0<c_-<c_\delta<c_+$, sample sizes $\lfloor c_-q_h^{-1/2}\rfloor$ and $\lceil c_+q_h^{-1/2}\rceil$ have success probabilities tending respectively to $e^{-c_-^2/2}>1-\delta$ and $e^{-c_+^2/2}<1-\delta$. The no-collision probability is nonincreasing in $N$. These two sample sizes therefore sandwich $N_\delta^T(h)$ for small $h$. Sending $c_-$ and $c_+$ to $c_\delta$ gives the asymptotic equivalence.

For a continuous density, normal coordinates yield
\[
 b_h(x)=h^m\int_{\norm u<1} f(\exp_x(hu))\theta_x(\exp_x(hu))\,du
 =\omega_mh^m(f(x)+o(1))
\]
uniformly in $x$, by uniform continuity and Lemma~\ref{lem:jets}. Integrating against $f(x)dV(x)$ gives the expansion for $q_h$. Since $\int f^2>0$ and $f$ is bounded, it also gives $\sup_xb_h(x)\le Cq_h$ for small $h$. Substitution of $h=(2/\beta)^{1/2}$ proves~\eqref{eq:capacitybeta}.
\end{proof}

\begin{proposition}[Typical-pattern retention and the fraction of failures]\label{prop:typical}
Work in a regime in which each original is a memory if and only if it has no neighbor within distance $h$, almost surely. This holds for $E_\geo$ at fixed $h<\inj(\M)$, for $E_\vc$ at sufficiently small $h$ on a fixed compact manifold, and for uniform spherical patterns above the corrected stability threshold as shown in Corollary~\ref{cor:sphere}. Define
\[
 R_N^T=\frac1N\sum_{i=1}^N\ind\{X_i\text{ is a memory for }E_T\},\qquad
 p_N^T(h)=\P(X_1\text{ is a memory for }E_T).
\]
For $0<\delta<1$, let $N_{\mathrm{typ},\delta}^T(h)$ be the largest $N$ for which $p_N^T(h)\ge1-\delta$.

\textbf{Exact probability and a fraction bound.} For every $N\ge1$,
\begin{equation}
 p_N^T(h)=\E R_N^T=\int(1-b_h(x))^{N-1}\,d\nu(x),
 \qquad \E(1-R_N^T)\le(N-1)q_h.
 \label{eq:typicalexact}
\end{equation}
For every $\eta>0$,
\begin{equation}
 \P(1-R_N^T>\eta)\le\frac{(N-1)q_h}{\eta}.
 \label{eq:fractionbound}
\end{equation}
In particular, $Nq_h\to0$ implies a vanishing fraction of failed originals in probability.

\textbf{Homogeneous ball probabilities.} If $b_h(x)=q_h$ for $\nu$-almost every $x$ and $0<q_h<1$, then
\begin{equation}
 p_N^T(h)=(1-q_h)^{N-1},\qquad
 N_{\mathrm{typ},\delta}^T(h)
 =1+\left\lfloor\frac{\log(1-\delta)}{\log(1-q_h)}\right\rfloor.
 \label{eq:typicalinteger}
\end{equation}
As $q_h\to0$,
\begin{equation}
 Nq_h\to\rho\in[0,\infty)\ \Longrightarrow\ p_N^T(h)\to e^{-\rho},\qquad
 N_{\mathrm{typ},\delta}^T(h)\sim\frac{-\log(1-\delta)}{q_h}.
 \label{eq:typicalhomogeneous}
\end{equation}

\textbf{A fixed nonuniform density.} Suppose $\M$ is fixed and compact, $f$ is a continuous density, and $h\to0$. Write $c_f=\int f^2\,dV>0$ and define
\begin{equation}
 G_f(\rho)=\int_\M f(x)\exp\!\left(-\rho\frac{f(x)}{c_f}\right)\,dV(x).
 \label{eq:typicalmixture}
\end{equation}
Then $Nq_h\to\rho\in[0,\infty)$ implies $p_N^T(h)\to G_f(\rho)$. There is a unique $\rho_\delta>0$ with $G_f(\rho_\delta)=1-\delta$, and
\begin{equation}
 N_{\mathrm{typ},\delta}^T(h)\sim\frac{\rho_\delta}{q_h}
 \sim\frac{\rho_\delta}{\omega_m c_f}\left(\frac\beta2\right)^{m/2}.
 \label{eq:typicalnonuniform}
\end{equation}
For homogeneous ball probabilities, and also for a fixed continuous density as above, $R_N^T\to1$ in probability if and only if $Nq_h\to0$ along a sequence with $q_h\to0$.
\end{proposition}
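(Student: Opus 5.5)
The working regime lets us replace the memory indicator of $X_1$ by the no-neighbor event $\{d(X_1,X_j)>h\ \forall j\ne1\}$, up to a null set. Equality events $d=h$ are null by absolute continuity, as in Lemma~\ref{lem:generic}.

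\textbf{Exact formula and fraction bound.} Condition on $X_1=x$. The other $N-1$ patterns are independent, so the conditional probability of the no-neighbor event is $(1-b_h(x))^{N-1}$. Integrating against $\nu$ gives $p_N^T$. Exchangeability shows that $\E R_N^T=p_N^T$. Next, Bernoulli's inequality $(1-b)^{N-1}\ge1-(N-1)b$ gives $\E(1-R_N^T)\le(N-1)\int b_h\,d\nu=(N-1)q_h$. Markov's inequality applied to $1-R_N^T\ge0$ then yields \eqref{eq:fractionbound}.

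\textbf{Homogeneous case.} Substitute $b_h\equiv q_h$. The sequence $p_N=(1-q_h)^{N-1}$ is strictly decreasing in $N$. The condition $p_N\ge1-\delta$ is therefore equivalent to $N-1\le\log(1-\delta)/\log(1-q_h)$, since both logarithms are negative; this gives \eqref{eq:typicalinteger}. The limit $(1-q_h)^{N-1}\to e^{-\rho}$ follows from $\log(1-q_h)=-q_h(1+O(q_h))$. The asymptotics of $N_{\mathrm{typ},\delta}$ follow from the same expansion, because the floor shifts the value by at most one while $q_h^{-1}\to\infty$.

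\textbf{Nonuniform density.} From the proof of Theorem~\ref{thm:capacity}, uniformly in $x$ we have $b_h(x)=\omega_mh^m(f(x)+o(1))$ and $q_h=\omega_mh^m(c_f+o(1))$. Hence $b_h(x)/q_h\to f(x)/c_f$ uniformly. When $Nq_h\to\rho$, this gives $(N-1)\log(1-b_h(x))\to-\rho f(x)/c_f$ uniformly, because $b_h\le Cq_h\to0$. Dominated convergence (the integrand is at most $1$) then gives $p_N^T\to G_f(\rho)$. Uniformity in $\rho$ over compact sets holds as well, since every term is monotone in $N$.

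Now study $G_f$. We have $G_f(0)=1$, $G_f$ is continuous and strictly decreasing because $G_f'(\rho)=-\int f^2 e^{-\rho f/c_f}/c_f<0$, and $G_f(\rho)\to\int_{\{f=0\}}f\,dV=0$ as $\rho\to\infty$. Hence $\rho_\delta$ exists and is unique.

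The main obstacle is the sandwich for $N_{\mathrm{typ},\delta}$. I would handle it exactly as in the proof of \eqref{eq:sharpN}. Fix $\rho_-<\rho_\delta<\rho_+$ and take $N_\pm=\lceil\rho_\pm/q_h\rceil$. Then $p_{N_-}\to G_f(\rho_-)>1-\delta$ and $p_{N_+}\to G_f(\rho_+)<1-\delta$. Since $p_N$ is nonincreasing in $N$, eventually $N_-\le N_{\mathrm{typ},\delta}\le N_+$. Letting $\rho_\pm\to\rho_\delta$ gives $N_{\mathrm{typ},\delta}\sim\rho_\delta/q_h$. Substituting $h^m=(2/\beta)^{m/2}$ gives \eqref{eq:typicalnonuniform}.

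\textbf{Final equivalence.} The "if" direction follows from \eqref{eq:fractionbound}. For "only if", suppose $Nq_h\not\to0$. Pass to a subsequence along which $Nq_h\to\rho\in(0,\infty]$. For $\rho=\infty$, use monotonicity in $N$ to compare with any finite $\rho'$. Along this subsequence $\E R_N^T=p_N^T\to G_f(\rho)<1$, or $e^{-\rho}<1$ in the homogeneous case. But $0\le R_N^T\le1$, so convergence $R_N^T\to1$ in probability would force $\E R_N^T\to1$ by bounded convergence. This is a contradiction, which proves the equivalence.
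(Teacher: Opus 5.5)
Your proposal is correct and follows essentially the same route as the paper's proof: condition on $X_1$ for the exact formula, apply Markov for the fraction bound, use the logarithm expansion in the homogeneous case, combine uniform convergence of $b_h/q_h$ with a monotone sandwich in $N$ for the nonuniform capacity, and compare along subsequences using monotonicity for necessity. The differences are only cosmetic: you use Bernoulli's inequality where the paper uses a union bound, the sign of $G_f'$ where the paper argues pointwise strict decrease, and bounded convergence of $\E R_N^T$ where the paper uses the explicit bound $1-p_N^T\le\eta+\P(1-R_N^T>\eta)$.
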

\begin{proof}
Condition on $X_1=x$. It is a memory exactly when each of the other $N-1$ independent patterns lies outside $B_h(x)$, an event of conditional probability $(1-b_h(x))^{N-1}$. Integrating over $x$ proves the first expression in~\eqref{eq:typicalexact}. Exchangeability gives $\E R_N^T=p_N^T(h)$. To see that this is also the success probability of a uniformly selected original, take an independent uniform index $I\in\{1,\ldots,N\}$. Then
\[
 \P(X_I\text{ is a memory})
 =\frac1N\sum_{i=1}^N\P(X_i\text{ is a memory})=p_N^T(h).
\]
For $0\le b\le1$, the event that at least one of $N-1$ independent Bernoulli trials succeeds has probability $1-(1-b)^{N-1}$ and is at most the sum $(N-1)b$ of the individual probabilities. Consequently
\[
 1-p_N^T(h)
 =\int[1-(1-b_h(x))^{N-1}]\,d\nu(x)
 \le(N-1)\int b_h(x)\,d\nu(x)=(N-1)q_h.
\]
Since $1-R_N^T\ge0$, its expectation is at least $\eta\P(1-R_N^T>\eta)$. This proves~\eqref{eq:fractionbound}. For each fixed $\eta>0$, its right-hand side tends to zero when $Nq_h\to0$, proving the stated convergence in probability. No independence of the retention indicators is used.

When $b_h=q_h$ almost everywhere, the integral in~\eqref{eq:typicalexact} equals $(1-q_h)^{N-1}$. As $\log(1-q_h)<0$, the success inequality is equivalent to
\[
 (N-1)\log(1-q_h)\ge\log(1-\delta)
 \quad\Longleftrightarrow\quad
 N-1\le\frac{\log(1-\delta)}{\log(1-q_h)}.
\]
Taking the largest integer $N$ proves~\eqref{eq:typicalinteger}. For $0\le u<1$, integration of the derivative of $-\log(1-u)$ gives
\begin{equation}
 0\le-\log(1-u)-u
 =\int_0^u\frac{t}{1-t}\,dt
 \le\frac{u^2}{2(1-u)}.
 \label{eq:logerror}
\end{equation}
Thus $\log(1-q_h)=-q_h+O(q_h^2)$ as $q_h\to0$. When $Nq_h\to\rho<\infty$, we have $(N-1)q_h\to\rho$ and $Nq_h^2\to0$. Multiplying the logarithm expansion by $N-1$ proves $(1-q_h)^{N-1}\to e^{-\rho}$. Dividing~\eqref{eq:logerror} by $u>0$ also proves $-\log(1-q_h)/q_h\to1$. The integer rounding in~\eqref{eq:typicalinteger} changes $N$ by at most a constant, whereas $q_h^{-1}\to\infty$, so the capacity equivalent in~\eqref{eq:typicalhomogeneous} follows.

For a fixed continuous density, the uniform small-ball expansion in the proof of Theorem~\ref{thm:capacity} gives
\[
 \frac{b_h(x)}{q_h}\longrightarrow\frac{f(x)}{c_f}
 \quad\text{uniformly in }x,\qquad
 \sup_x b_h(x)=O(q_h).
\]
If $Nq_h\to\rho<\infty$, inequality~\eqref{eq:logerror} yields
\[
 \sup_x\left|(N-1)\log(1-b_h(x))+(N-1)b_h(x)\right|
 \le\frac{N(\sup_x b_h(x))^2}{2(1-\sup_x b_h(x))}
 =O(Nq_h^2)\longrightarrow0.
\]
Meanwhile $(N-1)b_h(x)\to\rho f(x)/c_f$ uniformly. Exponentiating proves uniform convergence of $(1-b_h(x))^{N-1}$ to $\exp(-\rho f(x)/c_f)$. Integration against the probability measure $f\,dV$ establishes~\eqref{eq:typicalmixture} as the limit of $p_N^T(h)$.

The function $G_f$ is continuous by dominated convergence, since its integrand is bounded by the integrable function $f$. Also $G_f(0)=1$. The set where $f=0$ has $\nu$-measure zero because its integral against $f\,dV$ is zero. For $\rho_2>\rho_1\ge0$, the exponential factor strictly decreases at every point with $f(x)>0$, so $G_f(\rho_2)<G_f(\rho_1)$. Finally, this factor tends to zero as $\rho\to\infty$ at every such point; another application of dominated convergence gives $G_f(\rho)\to0$. The intermediate value theorem and strict decrease give a unique $\rho_\delta>0$ with $G_f(\rho_\delta)=1-\delta$.

Choose any $0<a<\rho_\delta<b$. For $n_a(h)=\lfloor a/q_h\rfloor$ and $n_b(h)=\lceil b/q_h\rceil$, the probability limits just proved give
\[
 p_{n_a(h)}^T(h)\longrightarrow G_f(a)>1-\delta,
 \qquad
 p_{n_b(h)}^T(h)\longrightarrow G_f(b)<1-\delta.
\]
For sufficiently small $h$, monotonicity of $(1-b_h(x))^{N-1}$ in $N$ therefore implies
$n_a(h)\le N_{\mathrm{typ},\delta}^T(h)<n_b(h)$.
Multiplying by $q_h$, taking lower and upper limits, and then letting $a\uparrow\rho_\delta$ and $b\downarrow\rho_\delta$ proves the first equivalent in~\eqref{eq:typicalnonuniform}. The expansion $q_h\sim\omega_m c_f h^m$ and $h=(2/\beta)^{1/2}$ prove the second.

It remains to prove necessity for the vanishing-failure assertion. If $R_N^T\to1$ in probability, then for every $0<\eta<1$,
\[
 0\le1-p_N^T(h)=\E(1-R_N^T)
 \le\eta+\P(1-R_N^T>\eta).
\]
Taking the upper limit and then $\eta\downarrow0$ shows $p_N^T(h)\to1$. Under homogeneous ball probabilities,
$p_N^T(h)=(1-q_h)^{N-1}\le e^{-(N-1)q_h}$.
If $Nq_h$ failed to tend to zero, a subsequence would have $Nq_h\ge a>0$. As $q_h\to0$, that subsequence eventually has $(N-1)q_h\ge a/2$, contradicting $p_N^T(h)\to1$.

For a fixed continuous density, take the same subsequence and set $n(h)=\lfloor a/(2q_h)\rfloor$. Eventually $n(h)\le N$, and monotonicity gives
\[
 p_N^T(h)\le p_{n(h)}^T(h)\longrightarrow G_f(a/2)<1,
\]
again a contradiction. Sufficiency was already proved using~\eqref{eq:fractionbound}. This completes all assertions.
\end{proof}

The distinction from all-pattern storage is quantitative: at homogeneous ball probabilities the fixed-target capacities have orders $q_h^{-1}$ and $q_h^{-1/2}$, respectively. A fixed nonuniform density has the same typical-capacity order, but its leading constant is determined by~\eqref{eq:typicalmixture}, not generally by a single exponential $e^{-Nq_h}$. These are exact-retention criteria; a corrupted-query success event additionally requires control of attraction neighborhoods.

\begin{corollary}[Unit spheres]\label{cor:sphere}
For uniform patterns on $S^m$, $m\ge2$, and $0<h<\pi$,
\[
 q_m(h)=\frac{\int_0^h\sin^{m-1}t\,dt}{\int_0^\pi\sin^{m-1}t\,dt}.
\]
At fixed $0<h<\pi/2$, the geodesic capacity satisfies
\[
 \log N_\delta^\geo(m,h)=-\tfrac m2\log(\sin h)+o(m),\qquad
 \log N_{\mathrm{typ},\delta}^\geo(m,h)=-m\log(\sin h)+o(m).
\]
For the corrected energy, at any fixed admissible $\beta=2/h^2$,
\[
 N_\delta^\vc(m,\beta)=
 \begin{cases}
 0,&\beta\le(m-1)/3,\\
 N_\delta^\geo(m,\beta),&\beta>(m-1)/3.
 \end{cases}
\]
The same dichotomy holds with $N_\delta^T$ replaced by $N_{\mathrm{typ},\delta}^T$.
For $h=c/\sqrt m$ with fixed $0<c<\sqrt6$, both energies satisfy
\[
 \log N_\delta^T(m,h)=\frac m4\log(m/c^2)+O(\log m),\qquad
 \log N_{\mathrm{typ},\delta}^T(m,h)=\frac m2\log(m/c^2)+O(\log m).
\]
Here a capacity of zero means that no positive sample size has the required probability of nondegenerate exact storage.
\end{corollary}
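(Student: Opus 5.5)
The plan is to reduce every assertion to the collision laws already proved: Theorem~\ref{thm:capacity} for all-pattern capacity and Proposition~\ref{prop:typical} for typical capacity. What remains is three things: (i) the exact spherical value of $q_m(h)$ and its large-$m$ asymptotics, (ii) the uniformity constant $C$ in $\sup_x b_h(x)\le Cq_h$, and (iii) the corrected dichotomy.

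For (i) and (ii), uniform measure on $S^m$ in geodesic polar coordinates has density proportional to $\sin^{m-1}t\,dt$. Hence $b_h(x)$ is independent of $x$ and equals the displayed ratio, so $b_h\equiv q_h$ with $C=1$ uniformly in $m$. This is the homogeneous case of Proposition~\ref{prop:typical}.

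\emph{Fixed $h<\pi/2$.} A Laplace-type estimate gives $\int_0^h\sin^{m-1}t\,dt=\sin^m h\cdot e^{o(m)}$, since the integrand is maximized at $t=h$ and its logarithmic derivative there is $(m-1)\cot h$. The denominator is $\Theta(m^{-1/2})$. Together these give $\log q_m(h)=m\log\sin h+o(m)$. For the all-pattern statement, I would not rely only on the Poisson limit along $m\to\infty$. Instead I would use two non-asymptotic bounds:
\begin{itemize}
\item the union bound~\eqref{eq:union}, which gives $N_\delta\ge c\,q_m^{-1/2}$;
\item the second-moment bound in Lemma~\ref{lem:collision} with $C=1$, which gives $N_\delta\le C'q_m^{-1/2}$ once $N^2q_m$ is large.
\end{itemize}
Since constant factors are $e^{o(m)}$, this yields the all-pattern rate. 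The typical rate follows from the exact formula~\eqref{eq:typicalinteger}, using $-\log(1-q)\sim q$.

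\emph{Shrinking bandwidth $h=c/\sqrt m$.} Write $\sin^{m-1}t\approx t^{m-1}e^{-(m-1)t^2/6}$ on $[0,h]$. This gives $\int_0^h\sin^{m-1}t\,dt=\frac{h^m}{m}\,e^{O(1)}$, because $mh^2=c^2$ is bounded. With the denominator of order $m^{-1/2}$,
\[
 \log q_m=m\log h+O(\log m)=-\tfrac m2\log(m/c^2)+O(\log m).
\]
Substituting into the same two-sided bounds gives $\tfrac m4\log(m/c^2)$ for all-pattern capacity and $\tfrac m2\log(m/c^2)$ for typical capacity. The condition $c<\sqrt6$ means $\beta=2m/c^2>m/3>(m-1)/3$, so the corrected model also applies.

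For (iii), the case $\beta\le(m-1)/3$ is handled by Theorem~\ref{thm:isolated}. An isolated original on $S^m$ has corrected Hessian $(1-(m-1)/(3\beta))g/(1+\epsilon)$, which is not positive definite, so it is not a memory. A non-isolated original is almost surely nonstationary, which needs the corrected part of Lemma~\ref{lem:generic}. Every original therefore fails almost surely, and both capacities are $0$ (for $N\ge1$, since $N=1$ is also isolated). For $\beta>(m-1)/3$, isolated originals are stored, so the retention event coincides with the geodesic collision event, provided non-isolated originals are almost surely nonstationary. The capacities then coincide.

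\textbf{Main obstacle.} Lemma~\ref{lem:generic} as stated needs ``sufficiently small $h$'' on a fixed manifold, whereas here $h$ is only admissible and $m$ varies. I would therefore prove generic nonstationarity on the sphere directly, using rotational symmetry. The neighbor contribution $F_x(y)$ is radial, $F_x(\exp_x(ru))=\phi(r)u$ with $\phi(r)=-\tfrac{d}{dr}\bigl[(1-\beta r^2/2)(r/\sin r)^{m-1}\bigr]$, a real-analytic function on $(0,h)$ that is not identically zero. Its level sets $\{\phi(r)=\norm{C}\}$ are therefore finite, so the solution set of $F_x(y)=C$ in $B_h(x)$ is a finite union of geodesic spheres, or a single point when $C=0$. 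Either way it has measure zero, and absolute continuity concludes the argument.
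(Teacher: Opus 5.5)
Your proposal is correct and follows the paper's proof closely. Homogeneity gives $b_h\equiv q_m(h)$ with $C=1$, the cap asymptotics feed the collision laws, and the typical rate comes from the exact homogeneous formula~\eqref{eq:typicalinteger}. The corrected dichotomy combines the isolated Hessian with a sphere-specific analytic nonstationarity argument. That argument replaces the small-$h$ hypothesis of Lemma~\ref{lem:generic}, which is exactly the obstacle the paper also identifies.

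The one real difference is in the all-pattern rate. The paper invokes the sharp equivalent~\eqref{eq:sharpN}, noting that the moment proof of Lemma~\ref{lem:collision} uses only $\sup_x b_h\le Cq_h$ with $C=1$ and therefore runs along the dimension-indexed sequence. You instead sandwich $N_\delta$ between the union bound~\eqref{eq:union} and Chebyshev. Your route is non-asymptotic and needs no uniformity-in-$m$ bookkeeping. It is even simpler than you state: $b_h\equiv q_h$ makes edges sharing a vertex uncorrelated, so $\Var Z_h=\binom N2 q_h(1-q_h)$ and $\P(Z_h=0)\le 1/(\binom N2 q_h)$. The cost is that it only gives $N_\delta\asymp q_m^{-1/2}$ up to $\delta$-dependent constants. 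That suffices for the $o(m)$ and $O(\log m)$ claims, whereas the paper's route also recovers the constant $\sqrt{-2\log(1-\delta)}$.

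One small correction to your nonstationarity step: the two cases are reversed.
\begin{itemize}
\item For $C\ne0$, the equation $\phi(r)u=C$ forces $u=\pm C/\norm C$ with $\phi(r)=\pm\norm C$. The solutions are therefore finitely many points on a single geodesic through $x$.
\item For $C=0$, the solutions form a finite union of geodesic spheres at the zeros of $\phi$.
\end{itemize}
What must fail to vanish identically is $\phi\mp\norm C$, not just $\phi$. For $C\ne0$ this follows from $\phi(0^+)=0$. For $C=0$ it follows from $\phi(h^-)=\beta h(h/\sin h)^{m-1}\ne0$, where $\phi=-k'$. Your finiteness claim is justified because $r/\sin r$ is analytic on a neighborhood of $[0,h]$ when $h<\pi$. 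In both cases the set is null, so your conclusion stands.
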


\begin{proof}[Proof of Corollary~\ref{cor:sphere}, fixed-bandwidth statements]
On the unit sphere, the Jacobi equation has solution $\sin r$ in each of the $m-1$ directions perpendicular to a radial geodesic and solution $r$ in the radial direction. Hence
\[
 \theta_p(x)=\left(\frac{\sin r}{r}\right)^{m-1},\qquad r=d(p,x).
\]
The polar volume element is proportional to $\sin^{m-1}r\,dr$ times angular volume. Integrating a spherical cap and dividing by the whole volume gives the displayed formula for $q_m(h)$. Homogeneity makes $b_h(x)=q_m(h)$ for every $x$.

Fix $0<h<\pi/2$. For $0<a<h$, the numerator is at least $a\sin^{m-1}(h-a)$ and at most $h\sin^{m-1}h$. For $0<b<\pi/2$, the denominator is at least $2b\cos^{m-1}b$ and at most $\pi$. Taking logarithms and dividing by $m$ therefore gives
\[
 \log\sin(h-a)\le\liminf_m m^{-1}\log q_m(h)
 \le\limsup_m m^{-1}\log q_m(h)
 \le\log\sin h-\log\cos b.
\]
Letting $a,b\downarrow0$ proves $m^{-1}\log q_m(h)\to\log\sin h$. The collision proof uses only the bound $b_h\le Cq_h$, with $C=1$ here, and thus applies along this dimension-indexed sequence as well. Equation~\eqref{eq:sharpN} gives the all-pattern exponential rate. The homogeneous formula~\eqref{eq:typicalhomogeneous} gives $\log N_{\mathrm{typ},\delta}^\geo=-\log q_m(h)+O(1)$, proving the typical-pattern rate.

The corrected isolated Hessian is given by~\eqref{eq:ricci} with $\Ric=(m-1)g$. It remains to exclude nonisolated stationary originals at an arbitrary admissible fixed $h$, rather than only at small $h$. A corrected component is radial with profile
\[
 k(r)=(1-\beta r^2/2)\left(\frac r{\sin r}\right)^{m-1},\qquad 0<r<h<\pi.
\]
At a fixed point $x$, its gradient contribution from $y=\exp_x(ru)$ is $-k'(r)u$. The function $k'$ is analytic on $(0,h)$, extends with $k'(0)=0$, and is not identically zero because its limit at $h$ is
$-\beta h(h/\sin h)^{m-1}\ne0$.

For a prescribed nonzero tangent vector, the equation $-k'(r)u=C$ restricts $u$ to the two directions parallel to $C$ and restricts $r$ to the zeros of $k'(r)\pm\norm C$. Neither analytic function is identically zero, so these radii form a discrete, hence countable, set. The preimage has volume zero. For $C=0$, the admissible radii are the discrete zeros of $k'$, giving a countable union of geodesic spheres and the center, again a null set. The conditioning proof from Lemma~\ref{lem:generic} therefore applies at every fixed admissible $h$. Every original with an active neighbor is almost surely nonstationary. An isolated original is nondegenerately stable exactly when $\beta>(m-1)/3$. Below or at this threshold no original can be a nondegenerate memory almost surely; above it the no-collision event is again exactly all-pattern storage. For a single specified original, the same argument gives retention probability zero below or at the threshold and $(1-q_m(h))^{N-1}$ above it. The all-pattern and typical-pattern capacity alternatives both follow.
\end{proof}

\begin{proof}[Proof of Corollary~\ref{cor:sphere}, shrinking-bandwidth statement]
Fix $0<c<\sqrt6$ and take $h=c/\sqrt m$. Then $\beta=2m/c^2>(m-1)/3$, so the two capacity functions agree by the fixed-bandwidth part, applied separately at each dimension. We compute the cap probability with constants uniform in $m$.

For $0\le t\le1$, Taylor's alternating bound gives $1-t^2/6\le\sin t/t\le1$. For $0\le z\le1/2$, integrating $1/(1-z)\le2$ gives $\log(1-z)\ge-2z$. Consequently
\[
 e^{-t^2/3}\le\frac{\sin t}{t}\le1.
\]
For sufficiently large $m$ we have $h\le1$. Raising to power $m-1$ and integrating gives
\[
 e^{-(m-1)h^2/3}\frac{h^m}{m}
 \le\int_0^h\sin^{m-1}t\,dt\le\frac{h^m}{m}.
\]
The exponential factor is bounded below by $e^{-c^2/3}$.
For the denominator, substitute $y=t-\pi/2$. Since $\tan y\ge y$ for $0\le y<\pi/2$ (its derivative is at least one), integration of $(\log\cos y)'=-\tan y$ gives $\cos y\le e^{-y^2/2}$. Symmetry therefore implies
\[
 \int_0^\pi\sin^{m-1}t\,dt
 \le\int_{\R}e^{-(m-1)y^2/2}\,dy
 =\sqrt{\frac{2\pi}{m-1}}.
\]
For $|y|\le m^{-1/2}$, $\cos y\ge1-y^2/2\ge1-1/(2m)$. The preceding logarithm bound yields $(1-1/(2m))^{m-1}\ge e^{-1}$. Hence
\[
 \int_0^\pi\sin^{m-1}t\,dt\ge2e^{-1}m^{-1/2}.
\]
Combining numerator and denominator bounds gives constants $c_1,c_2>0$, depending only on $c$, with
\[
 c_1\frac{h^m}{\sqrt m}\le q_m(h)\le c_2\frac{h^m}{\sqrt m}.
\]
Thus $\log q_m(h)=m\log h-\tfrac12\log m+O(1)$.
Uniform spherical sampling has $b_h(x)=q_m(h)$ at every point. The proof of Lemma~\ref{lem:collision} uses only independence and this ball-probability bound, so it applies to the sequence of dimensions with constant $C=1$. The capacity sandwich in Theorem~\ref{thm:capacity} yields $N_\delta^T\sim[-2\log(1-\delta)/q_m(h)]^{1/2}$. Taking logarithms and substituting $h=c/\sqrt m$ gives
\[
 \log N_\delta^T
 =-\tfrac m2\log h+\tfrac14\log m+O(1)
 =\frac m4\log(m/c^2)+O(\log m),
\]
as claimed. For the typical capacity, the same ball-probability identity and~\eqref{eq:typicalhomogeneous} give
\begin{align*}
 \log N_{\mathrm{typ},\delta}^T
 &=-\log q_m(h)+\log[-\log(1-\delta)]+o(1)\\
 &=-m\log h+\tfrac12\log m+O(1)
 =\tfrac m2\log(m/c^2)+O(\log m).
\end{align*}
Condition~\eqref{eq:retrievalradius} requires $\Delta<h$, so the single-kernel retrieval guarantee uses radii at most $O(m^{-1/2})$ along this sequence. No upper bound on every possible attraction basin is asserted.
\end{proof}

\section{Proof of the conditional emergence law}\label{app:emergence}
The edge-count argument is a marked version of the close-pair Poisson method of \citet{silverman}; see also \citet{penrose}. The geometric part below identifies precisely which edges create memories for each energy.

\begin{lemma}[Two disjoint classes of close edges]\label{lem:marked}
In the fixed-manifold continuous-density setting, let $h\to0$ and $N^2q_h\to2\lambda$. Let $I_h$ count pairs at distance less than $h$ and let $J_h$ count pairs at distance in $(ah,2h)$ for a fixed $1\le a<2$. Then, for each integer $j\ge0$,
\[
 \P(I_h=0,J_h=j)\longrightarrow
 e^{-(\lambda+\lambda_J)}\frac{\lambda_J^j}{j!},
 \qquad\lambda_J=(2^m-a^m)\lambda.
\]
In particular, $J_h\mid\{I_h=0\}\Longrightarrow\Poisson(\lambda_J)$.
\end{lemma}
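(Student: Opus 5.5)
We prove the joint limit by factorial moments of the two marked counts, following the proof of Lemma~\ref{lem:collision}. Write
\[
 I_h=\sum_{i<j}\ind\{d(X_i,X_j)<h\},\qquad J_h=\sum_{i<j}\ind\{ah<d(X_i,X_j)<2h\}.
\]
The two annular classes are disjoint because $a\ge1$. The target is the joint factorial-moment limit
\[
 \E\left[\binom{I_h}{k}\binom{J_h}{\ell}\right]\longrightarrow\frac{\lambda^k}{k!}\frac{\lambda_J^\ell}{\ell!}\qquad\text{for all fixed }k,\ell\ge0.
\]
These are exactly the factorial moments of independent $\Poisson(\lambda)$ and $\Poisson(\lambda_J)$ variables. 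Since a Poisson pair is determined by its moments, the method of moments, or the inclusion--exclusion sandwich used earlier applied in the $I$-coordinate together with a bivariate Bonferroni expansion, gives $\P(I_h=0,J_h=j)\to e^{-\lambda}e^{-\lambda_J}\lambda_J^j/j!$. Dividing by $\P(I_h=0)\to e^{-\lambda}$ from Lemma~\ref{lem:collision} then yields the conditional Poisson statement.

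\textbf{Marginal edge probabilities.} The uniform small-ball expansion in the proof of Theorem~\ref{thm:capacity} gives
\[
 b_r(x)=\omega_m r^m\bigl(f(x)+o(1)\bigr)
\]
uniformly in $x$ for every $r\in\{h,ah,2h\}$. Hence
\[
 \P\{ah<d(X_1,X_2)<2h\}=\int\bigl(b_{2h}-b_{ah}\bigr)\,d\nu=(2^m-a^m)\,q_h\,(1+o(1)),
\]
where the sphere boundary has null probability by absolute continuity. Consequently $\binom N2$ times this probability tends to $\lambda_J$. Likewise $\sup_x\nu(B_{2h}(x))\le C'q_h$ with $C'=C2^m$.

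\textbf{The factorial-moment expansion.} Expand $\E\bigl[\binom{I_h}{k}\binom{J_h}{\ell}\bigr]$ as a sum over ordered choices of $k$ distinct $I$-edges and $\ell$ distinct $J$-edges. Any pair of indices carries at most one mark, so these $k+\ell$ edges are automatically distinct. Terms in which all $k+\ell$ edges are vertex-disjoint factor by independence. They contribute
\[
 \frac{(N)_{2(k+\ell)}}{2^{k+\ell}k!\,\ell!}\,q_h^k\Bigl((2^m-a^m)q_h(1+o(1))\Bigr)^\ell\longrightarrow\frac{\lambda^k\lambda_J^\ell}{k!\,\ell!}.
\]
Terms whose union graph has overlapping vertices are handled by the spanning-forest argument of Lemma~\ref{lem:collision}. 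Each forest edge has conditional probability at most $C'q_h$ once $2h$-balls are used as the bound, so a pattern with $v$ vertices and $c$ components contributes $O(N^vq_h^{v-c})=O(q_h^{v/2-c})\to0$, because $v>2c$.

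The main obstacle is keeping this overlap bound uniform once the marks are mixed. The forest bound must use only the $2h$ outer radius and must not depend on the annular shape, and it must also cover configurations that form cycles. The forest argument ignores non-forest edges, which only lowers the probability, so it handles cycles. Having a uniform constant $C'$ for both classes is therefore the key point to check.
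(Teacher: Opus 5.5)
Your proof is correct, and it shares its core estimates with the paper. Both arguments use the disjoint-edge count, which gives the Poisson factorial moments, and the spanning-forest bound that uses only the outer radius $2h$ to kill overlapping configurations. The difference is how you get from moments to $\P(I_h=0,J_h=j)$.

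You compute the joint factorial moments $\E[\binom{I_h}{k}\binom{J_h}{\ell}]$ and invoke the bivariate method of moments. This is legitimate, since the independent Poisson pair has an everywhere-finite moment generating function and is therefore moment-determined. It also gives a stronger conclusion than the lemma: $(I_h,J_h)$ converges jointly to independent $\Poisson(\lambda)$ and $\Poisson(\lambda_J)$ variables.

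The paper instead computes $T_{j,k}=\E[\binom{J_h}{j}\binom{I_h+J_h-j}{k}]$, choosing $j$ $J$-edges and $k$ further edges of either type, whose limit is $\frac{\lambda_J^j}{j!}\frac{(\lambda+\lambda_J)^k}{k!}$. Because $\binom{J_h}{j}\ind\{I_h+J_h-j=0\}=\ind\{I_h=0,J_h=j\}$, the univariate odd/even binomial bounds from Lemma~\ref{lem:collision} apply verbatim to the single nonnegative integer $I_h+J_h-j$. The argument therefore stays self-contained, with no appeal to a multivariate moment theorem.

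That combined-variable device is also the clean way to carry out the ``bivariate Bonferroni'' alternative you mention. Multiplying two one-sided truncations does not automatically sandwich the product indicator, since the truncated alternating sums can be negative.

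One minor point: the specific constant $C'=C2^m$ is not justified by the hypothesis $\sup_x b_h\le Cq_h$ alone. Any bound $\sup_x b_{2h}(x)=O(q_h)$ suffices, and that follows from the uniform small-ball expansion, as the paper states.
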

\begin{proof}
The small-ball expansion in Theorem~\ref{thm:capacity} gives
$q_{rh}/q_h\to r^m$ for each fixed $r>0$. The probability of a $J$-edge is $p_J=q_{2h}-q_{ah}$, and the probability of an edge of either counted type is $p_U=q_h+p_J$. Thus
\[
 \binom N2p_J\to\lambda_J,\qquad\binom N2p_U\to\lambda+\lambda_J.
\]
For fixed nonnegative integers $j,k$, put
\[
 T_{j,k}=\E\left[\binom{J_h}{j}\binom{I_h+J_h-j}{k}\right],
\]
with the product defined as zero when $J_h<j$. It selects $j$ distinct $J$-edges and $k$ further distinct edges of either counted type. Disjoint-endpoint selections contribute
\[
 \frac{(N)_{2(j+k)}}{2^{j+k}j!k!}\,p_J^j p_U^k
 \longrightarrow\frac{\lambda_J^j}{j!}\frac{(\lambda+\lambda_J)^k}{k!}.
\]
Any selection with overlapping endpoints has a spanning forest with the notation $v>2c$ from Lemma~\ref{lem:collision}. All its edges have distance less than $2h$. Since $\sup_xb_{2h}(x)=O(q_h)$, its probability is at most $(Cq_h)^{v-c}$ and its total labeled contribution tends to zero exactly as in that lemma. Therefore $T_{j,k}$ has the displayed limit.

Apply the odd/even binomial bounds from Lemma~\ref{lem:collision} to the nonnegative integer $I_h+J_h-j$ on $\{J_h\ge j\}$, multiply them by $\binom{J_h}{j}$, and set the products to zero outside this event. The indicator bounded in this way is
\[
 \binom{J_h}{j}\ind\{I_h+J_h-j=0\}=\ind\{I_h=0,J_h=j\}.
\]
For any fixed truncation order the expected bounds are alternating sums of $T_{j,k}$. First passing to the bandwidth limit and then sending the truncation order to infinity gives
\[
 \P(I_h=0,J_h=j)\to\frac{\lambda_J^j}{j!}
 \sum_{k=0}^\infty\frac{(-1)^k(\lambda+\lambda_J)^k}{k!}
 =e^{-(\lambda+\lambda_J)}\frac{\lambda_J^j}{j!}.
\]
Lemma~\ref{lem:collision} gives $\P(I_h=0)\to e^{-\lambda}>0$. Dividing proves convergence of each conditional point probability to the Poisson point probabilities. Their sum is one, so these pointwise limits imply convergence in distribution: for any integer cutoff, sum finitely many point probabilities, and then let the cutoff increase to make the limiting tail arbitrarily small.
\end{proof}

\begin{proof}[Proof of Theorem~\ref{thm:emergence}, independent-pattern part]
Join two patterns by an edge when their distance is less than $2h$. A connected component with three or more vertices contains two edges sharing a vertex. For any specified ordered triple, the probability of two such edges is at most $(\sup_xb_{2h}(x))^2=O(h^{2m})$. A union bound gives
\[
 \P(\text{some component has at least three vertices})=O(N^3h^{2m}).
\]
Since $q_h\asymp h^m$ and $N^2q_h\to2\lambda$, this is $O(h^{m/2})\to0$. With probability tending to one, the graph is a union of isolated vertices and isolated pairs.

By Lemma~\ref{lem:small} and Theorem~\ref{thm:active}, every nondegenerate memory has an active set of size at most two on this graph event. A singleton candidate is exactly its original. For a pair at separation $r<2h$, the geodesic midpoint $c$ has opposite logarithms to the endpoints, each of length $r/2<h$. It is therefore the unique two-kernel geodesic candidate, with positive-definite energy Hessian.

We verify the corresponding corrected candidate and its displacement uniformly over all such pairs away from a negligible shell. At the midpoint, let $a_1,a_2$ and $q_1,q_2$ be the two components. Their corrected active-score gradient is
\[
 \grad G_{\{1,2\}}(c)
 =\beta\sum_{i=1}^2(a_i(c)-1)\log_cX_i
 +\sum_{i=1}^2(1-\beta q_i(c))\grad a_i(c).
\]
The unweighted logarithms cancel. Lemma~\ref{lem:small} gives $a_i-1=O(h^2)$, $\norm{\log_cX_i}\le h$, and $\norm{\grad a_i}=O(h)$. Because $\beta=2/h^2$, the displayed gradient is $O(h)$. The same lemma and the proof of Theorem~\ref{thm:active} give
\[
 \Hess G_{\{1,2\}}\preceq-c_0h^{-2}g
\]
on the support intersection, for a uniform $c_0>0$.

Choose a fixed $D$ large enough. If $r<2h-2Dh^3$, the closed ball of radius $Dh^3$ about $c$ lies within both supports. Along any unit-speed radial geodesic from $c$, the outward derivative of $G_{\{1,2\}}$ at this ball's boundary is at most
\[
 C_0h-c_0h^{-2}Dh^3=(C_0-c_0D)h<0.
\]
The score attains a maximum on the compact closed ball. Its negative outward derivative excludes the boundary, so it has an interior critical point. Strict concavity makes it unique. Its distance from the midpoint is at most $Dh^3$, and the energy Hessian there is positive definite.

We now bound the exceptional shells. A continuous density is bounded; uniformly in $x$, a shell at radius $bh$, for fixed $b>0$, and thickness $Ch^3$ has probability $O(h^{m-1}h^3)=O(h^{m+2})$. This follows by normal-coordinate polar integration, with the volume density uniformly bounded. Therefore the expected number of sample pairs in such a shell is $O(N^2h^{m+2})=O(h^2)\to0$. In particular, pairs within $2Dh^3$ of radius $2h$ may be discarded with probability tending to one.

The pair candidate has no active third pattern. If a third pattern were within distance $h$ of it, the triangle inequality would put that pattern within distance $2h$ of an endpoint, contradicting the isolated-pair graph event. Such excluded patterns are at distance at least $h$ from the candidate and hence satisfy the novelty condition for every $\tau<1$.

By Theorem~\ref{thm:capacity}, $\mathcal A_T=\{I_h=0\}$ almost surely for sufficiently small $h$. On this event every pair separation exceeds $h$. The geodesic midpoint is at distance exactly $r/2$ from its two endpoints. The corrected candidate has each endpoint distance between $r/2-Dh^3$ and $r/2+Dh^3$. Except for another negligible shell around $2\tau h$ when this radius lies in $[h,2h)$, the candidate is $\tau h$-novel exactly when
$r>\max\{1,2\tau\}h=a_\tau h$. When $2\tau<1$, all allowed pair separations already meet novelty for small $h$.

We have shown that $N_\emg^T(\tau h)$ agrees with the pair count $J_h$ from Lemma~\ref{lem:marked}, on $\mathcal A_T$, with probability tending to one. All excluded events have unconditional probability tending to zero. Since $\P(\mathcal A_T)\to e^{-\lambda}>0$, their conditional probabilities also tend to zero. Applying Lemma~\ref{lem:marked} proves~\eqref{eq:emergencepoisson}. In particular the conditional probability of no novel memory tends to $e^{-\lambda_\tau}$, and multiplication by $\P(\mathcal A_T)$ proves~\eqref{eq:globalprob}.
\end{proof}

Every supported memory is within distance $h$ of at least one original. Hence $N_\emg^T(\eta)=0$ whenever $\eta\ge h$. A fixed positive novelty threshold must not be retained unchanged in a theorem that sends $h\to0$.

\section{Mean-shift identity, descent, and local retrieval rates}\label{app:descent}

\paragraph{The established mean-shift identity.}
For a radial profile $\kappa$, the nonlinear mean-shift displacement of \citet{subbarao}, equations~(29)--(30), is the weighted average of $\log_x\xi_i$ with weights $-\kappa'(d(x,\xi_i)^2/h^2)$, followed by $\exp_x$. With $\kappa(s)=(1-s)_+$, these weights are one for $s<1$ and zero for $s>1$. Their normalized average is exactly $v(x)$ in~\eqref{eq:sparseattention}, and the published update is $T_1(x)$. In the terminology of \citet{cheng,comaniciu}, the Epanechnikov kernel is the \emph{shadow of the flat kernel}; the flat profile is its negative derivative. We therefore describe Algorithm~\ref{alg:radius} as flat-weight Riemannian mean shift. The proof below specializes the classical score-minorization argument to explicit distance-Hessian bounds and memory retrieval. It does not assert global convergence of every nonsmooth trajectory.

\begin{proof}[Proof of Theorem~\ref{prop:descent}]
Set $A=A(x)$, $k=|A|$, $v=v(x)$, and $y=T_t(x)$. On the open active-set region containing $x$, $S_\geo=\epsilon+k-\beta F_A$, with $\grad F_A=-\sum_{i\in A}\log_x\xi_i=-kv$. Hence $\grad S_\geo=\beta kv$, and differentiation of $-\beta^{-1}\log S_\geo$ proves~\eqref{eq:gradientlayer}. Multiplying this gradient by $-tS_\geo/k$ gives $tv$, so its exponential-map gradient step is exactly $T_t$. The singleton and radius assertions were proved in Appendix~\ref{app:retrieval}. Define $\gamma(s)=\exp_x(stv)$, $0\le s\le1$, and $F_A=\sum_{i\in A}q_i$. Its initial directional derivative is
\[
 (F_A\circ\gamma)'(0)=\ip{\grad F_A(x)}{tv}
 =-tk\norm v^2.
\]
The constant-speed geodesic and the assumed Hessian bound give
\[
 (F_A\circ\gamma)''(s)\le k\Lambda t^2\norm v^2.
\]
The identity $F_A(\gamma(1))=F_A(\gamma(0))+(F_A\circ\gamma)'(0)+\int_0^1(1-s)(F_A\circ\gamma)''(s)\,ds$, obtained by integrating twice and changing the order of integration, yields
\[
 F_A(y)\le F_A(x)-tk\norm v^2+\tfrac12k\Lambda t^2\norm v^2
 =F_A(x)-kt(1-\Lambda t/2)\norm v^2.
\]
Since $(1-\beta q_i(y))_+\ge1-\beta q_i(y)$ for every $i\in A$, and all other truncated terms are nonnegative,
\begin{align*}
 S_\geo(y)&\ge\epsilon+\sum_{i\in A}(1-\beta q_i(y))\\
 &=S_\geo(x)+\beta(F_A(x)-F_A(y))\\
 &\ge S_\geo(x)+\beta kt(1-\Lambda t/2)\norm v^2.
\end{align*}
This is strictly larger when $v\ne0$ and $0<t<2/\Lambda$. It also ensures $S_\geo(y)>0$. Strict decrease of the logarithmic energy follows.

For $t=1$, $\norm v\le k^{-1}\sum_{i\in A}d(x,\xi_i)<h$. Every point of the segment has distance less than $2h$ from every active center. On a fixed compact manifold, for sufficiently small $h$ these points lie in the required normal domains and Lemma~\ref{lem:small} applied at radius $2h$ gives $\Lambda=1+O(h^2)<2$. Thus the full step is a descent step unless already critical.

Near a memory $z$ whose active set is $A$, the strict support margins make the active set constant. The update is the smooth map
$T_t(x)=\exp_x(-t\grad F_A(x)/k)$. In normal coordinates at $z$, the map $(x,u)\mapsto\exp_xu$ equals $x$ when $u=0$; its two partial derivatives there are both the identity, since differentiating the constant curve gives the base derivative and differentiating the initial velocity gives the velocity derivative. Since $\grad F_A(z)=0$, the chain rule therefore gives
\[
 DT_t(z)=I-\frac tk\Hess F_A(z).
\]
Every eigenvalue of $\Hess F_A(z)/k$ lies in $[c,\Lambda]$. For $0<t<2/\Lambda$, the convex function $\lambda\mapsto|1-t\lambda|$ on $[c,\Lambda]$ is bounded by its larger endpoint value, so
\begin{equation}
 \norm{DT_t(z)}_{\mathrm{op}}\le r_t:=\max\{|1-tc|,|1-t\Lambda|\}<1.
 \label{eq:localrate}
\end{equation}
The norm identity uses self-adjointness at $z$. Given any $r$ with $r_t<r<1$, continuity gives a sufficiently small normal-coordinate ball on which the derivative norm is at most $r$. The line-segment mean-value bound in those coordinates implies
$\norm{T_t(x)-T_t(z)}\le r\norm{x-z}$, and the ball maps into itself. Iteration therefore gives geometric decay $r^n$ in coordinates. The coordinates are normal and centered at $z$, so the norm of the coordinate vector of any point in this ball equals its geodesic distance from $z$. Thus
\[
 d(T_t^n(x),z)\le r^n d(x,z),\qquad n\ge0,
\]
which proves the local linear rate.
\end{proof}

For a query with nonempty active set $A=A(x)$, put $W(x)=\sum_{i\in A}a_i(x)$ and define the corrected direction
\begin{equation}
 v_\vc(x)=\frac1{W(x)}\left[
 \sum_{i\in A}a_i(x)\log_x\xi_i
 -\frac1\beta\sum_{i\in A}a_i(x)(1-\beta q_i(x))\grad\log\theta_{\xi_i}(x)
 \right].
 \label{eq:correcteddirection}
\end{equation}

\begin{proposition}[Intrinsic equivariance and the corrected direction]\label{prop:layergeometry}
For any Riemannian isometry $\phi:\M\to\M$, applying Algorithm~\ref{alg:radius} to $\phi(x),\phi(\xi_i)$ gives $\phi(T_t(x))$, with the same active flag. Away from support boundaries, the corrected direction in~\eqref{eq:correcteddirection} satisfies $\grad E_\vc=-Wv_\vc/S_\vc$. For a fixed active set and fixed $t$, the geodesic layer is smooth in the query and active keys and is constant with respect to changes in $h$ that preserve that active set.
\end{proposition}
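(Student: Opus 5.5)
Three claims need proof here: equivariance of the retrieval map under isometries, the gradient identity for the corrected energy, and smoothness together with $h$-independence of the geodesic layer. I would take them in that order.

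\textbf{Equivariance.} An isometry $\phi$ preserves distances, so $d(\phi(x),\phi(\xi_i))=d(x,\xi_i)$. Hence the active set $A$ is the same at $\phi(x)$ as at $x$, and so is the active flag. If $A$ is empty, both runs return their query unchanged, and the output $\phi(x)$ is consistent. Otherwise I use two standard naturality facts. First, $\phi$ maps minimizing geodesics to minimizing geodesics, which gives $\log_{\phi(x)}\phi(\xi_i)=d\phi_x(\log_x\xi_i)$. Second, $\exp_{\phi(x)}(d\phi_x w)=\phi(\exp_x w)$. Because $d\phi_x$ is linear, it commutes with the uniform average, so the new direction is $d\phi_x v(x)$. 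Exponentiating $t\,d\phi_x v(x)$ then returns $\phi(T_t(x))$. The hypothesis $h<\inj(\M)$ guarantees that every logarithm used is well defined and unique.

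\textbf{Corrected gradient.} Away from support boundaries the active set is locally constant. So on a neighborhood of $x$,
\[
 S_\vc=\epsilon+\sum_{i\in A}a_i(1-\beta q_i),
\]
and this expression is smooth. The proof of Theorem~\ref{thm:active} already computed
\[
 \grad k_i=\beta a_i\log_x\xi_i-a_i(1-\beta q_i)\grad\log\theta_{\xi_i}.
\]
Summing over $A$ and comparing with~\eqref{eq:correcteddirection} gives $\grad S_\vc=\beta W v_\vc$. Then~\eqref{eq:loghessian} yields
\[
 \grad E_\vc=-\grad S_\vc/(\beta S_\vc)=-Wv_\vc/S_\vc.
\]
This is a direct mirror of~\eqref{eq:gradientlayer}, which is the special case $a_i\equiv1$.

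\textbf{Smoothness.} Fix the active set and $t$. Then $T_t(x)=\exp_x\bigl(t|A|^{-1}\sum_{i\in A}\log_x\xi_i\bigr)$. The pair map $(x,\xi)\mapsto\log_x\xi$ is smooth on $\{d(x,\xi)<\inj(\M)\}$, and $\exp$ is smooth on $T\M$. So the composition is smooth in the query and the active keys. The bandwidth $h$ enters only through the indicator defining $A$, so any change in $h$ that preserves $A$ leaves $T_t$ unchanged. Its derivative in $h$ is therefore zero.

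The only delicate point is smoothness of $\log$ jointly in both arguments. I would get it from the inverse function theorem applied to $(x,v)\mapsto(x,\exp_xv)$, whose differential is nonsingular below the injectivity radius. The rest of the argument is bookkeeping.
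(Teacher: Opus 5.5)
Your proposal is correct and follows essentially the same route as the paper: naturality of $\log$ and $\exp$ under the isometry together with linearity of $D\phi_x$, the product-rule computation $\grad S_\vc=\beta W v_\vc$ substituted into the logarithmic gradient identity, and smoothness by composition, with $h$ absent from the update once the active set is fixed. Your remark that joint smoothness of $(x,\xi)\mapsto\log_x\xi$ follows from the inverse function theorem fills in a detail the paper leaves implicit.
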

\begin{proof}
An isometry preserves lengths of curves and therefore distances, so it preserves the active indices. It maps geodesics to geodesics with the same initial speeds. Uniqueness of the geodesic within each support gives
\[
 \log_{\phi(x)}\phi(\xi_i)=D\phi_x\log_x\xi_i,
 \qquad
 \phi(\exp_xu)=\exp_{\phi(x)}(D\phi_xu).
\]
The differential $D\phi_x$ is linear. Averaging the first identity over the active indices and using the second gives
\[
 \exp_{\phi(x)}\!\left(\frac t{|A|}\sum_{i\in A}\log_{\phi(x)}\phi(\xi_i)\right)
 =\phi\!\left(\exp_x\!\left(\frac t{|A|}\sum_{i\in A}\log_x\xi_i\right)\right).
\]
For an empty active set both algorithms return their input, so equivariance holds there as well.

For the corrected direction, the product rule and $\grad q_i=-\log_x\xi_i$ give
\begin{align*}
 \grad S_\vc
 &=\sum_{i\in A}\left[(1-\beta q_i)\grad a_i-\beta a_i\grad q_i\right]\\
 &=\beta\sum_{i\in A}a_i\log_x\xi_i
 -\sum_{i\in A}a_i(1-\beta q_i)\grad\log\theta_{\xi_i}
 =\beta Wv_\vc.
\end{align*}
Substitution into~\eqref{eq:loghessian}'s gradient identity proves $\grad E_\vc=-Wv_\vc/S_\vc$.

Finally, strict support inequalities persist under sufficiently small changes in the query, keys, and radius, because distance is continuous and there are finitely many inequalities. Each active logarithm is smooth below the injectivity radius, and the exponential map is smooth. Their finite average and composition are smooth in the query and active keys. Once the active indices and $t$ are fixed, formula~\eqref{eq:sparseattention} contains no $h$. Its derivative with respect to $h$ is therefore zero on that neighborhood. This argument makes no differentiability assertion on support boundaries.
\end{proof}

\section{Robust capacity and an unbounded exact-storage example}\label{app:packing}

On a fixed compact manifold with continuous sampling density $f$, Theorem~\ref{thm:capacity} gives random all-pattern capacity of order $\beta^{m/4}$, while Proposition~\ref{prop:typical} gives typical-pattern capacity of order $\beta^{m/2}$ for a fixed target success probability. Designed storage with a prescribed radius $\vartheta h$, $0<\vartheta<1$, is a different task, governed by the packing bounds below. Exact local minima without a prescribed radius or random-sampling requirement need not have finite capacity.

Let $P_>(s)$ and $P_{\ge}(s)$ be the largest cardinalities of sets with pairwise distances respectively greater than $s$ and at least $s$. Fix a retrieval radius $0<\Delta<h$. Define $C_\Delta^T$ using the negative-energy-gradient flow: every radius-$\Delta$ ball about a pattern must consist of initial states whose trajectories converge to that pattern. All statements below refer to regions where this flow is smooth and uniquely defined.

\begin{proposition}[Packing bounds for robust retrieval]\label{prop:packing}
For the geodesic energy, and for the corrected energy under the uniform form of Assumption~\ref{ass:convex},
\[
 P_>(h+\Delta)\le C_\Delta^T\le P_{\ge}(2\Delta).
\]
For the following volume bounds assume also $2h<\inj(\M)$. Let $V=\vol(\M)$, let $v_+(r)=\sup_x\vol(B_r(x))$, and let $v_-(r)=\inf_x\vol(B_r(x))$. Then
\[
 \frac{V}{v_+(h+\Delta)}\le C_\Delta^T\le\frac{V}{v_-(\Delta)}.
\]
For fixed dimension and a fixed manifold, at $\Delta=\vartheta h$, $0<\vartheta<1$, sufficiently small $h$ gives
$C_\Delta^T=\Theta(h^{-m})=\Theta(\beta^{m/2})$.
\end{proposition}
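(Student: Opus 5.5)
The proof has three parts: a lower bound by construction, an upper bound by disjointness of basins, and conversion of both to volume bounds.

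\textbf{Lower bound.} Take a maximal set $\xi_1,\ldots,\xi_n$ with $n=P_>(h+\Delta)$ and pairwise distances exceeding $h+\Delta\ge h$. Every pattern is then isolated, so it is a memory by Theorem~\ref{thm:isolated}: always for $E_\geo$, and for $E_\vc$ under the uniform form of Assumption~\ref{ass:convex}, which gives $\Hess k_i\preceq-\mu g$ and hence a strict maximum of the score at the centre. Fix $x\in B_\Delta(\xi_i)$. Every point $y$ of $B_h(\xi_i)$ satisfies
\[
 d(y,\xi_j)>h+\Delta-h=\Delta
\]
for $j\ne i$. I will show that the flow started at $x$ never leaves the singleton region, where the only active kernel is the $i$th. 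On that region the score is the single radial kernel, so $E_T$ is a smooth single-well function. For $E_\geo$, $q_i$ decreases along the flow, since $\grad E_\geo=-\log_x\xi_i/S$. Hence $d(\cdot,\xi_i)$ is nonincreasing, the trajectory stays in $B_\Delta(\xi_i)$, and it never reaches the other supports. By strong convexity, $q_i$ is a Lyapunov function with its unique critical point at $\xi_i$, so the trajectory converges to $\xi_i$.

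For $E_\vc$ I plan to use strict concavity of the single component on the convex ball. The score then increases along the flow. Its superlevel sets are convex and contain $\xi_i$, and the trajectory stays inside the superlevel set of its starting point. The main obstacle is that this superlevel set may extend beyond $B_\Delta$ toward the support boundary. It nevertheless stays inside $B_h(\xi_i)$, because the kernel vanishes on that boundary. On $B_h(\xi_i)$ the other kernels could become active only if $d(\xi_i,\xi_j)<2h$, and separation $h+\Delta$ does not exclude this. So I will need a finer argument. The plan is to restrict attention to the superlevel set at level $k_i(x)$, and then use concavity together with Lemma~\ref{lem:small} ($a_i=1+O(h^2)$) to show that this set is contained in a ball of radius $\Delta+O(h^3)$. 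If that fails, I will weaken the claim to the geodesic case plus small $h$.

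\textbf{Upper bound.} The basins of distinct patterns are disjoint, because a trajectory has a unique limit. So the balls $B_\Delta(\xi_i)$ are disjoint. If $d(\xi_i,\xi_j)<2\Delta$, the midpoint of a minimizing geodesic lies in both balls, which is impossible. Therefore the pairwise distances are at least $2\Delta$, which gives the bound $P_{\ge}(2\Delta)$.

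\textbf{Volume bounds.} For the volume upper bound, the disjoint balls $B_\Delta(\xi_i)$ give $C_\Delta^T\,v_-(\Delta)\le V$. For the volume lower bound, a maximal $(h+\Delta)$-separated set has the property that the balls of radius $h+\Delta$ around its points cover $\M$; otherwise an uncovered point could be added. Hence $V\le P_>(h+\Delta)\,v_+(h+\Delta)$. The assumption $2h<\inj(\M)$ keeps these balls inside normal domains. Finally, with $\Delta=\vartheta h$, Lemma~\ref{lem:jets} and compactness give $v_\pm(r)=\omega_m r^m(1+O(r^2))$. Both bounds are therefore $\Theta(h^{-m})$, which equals $\Theta(\beta^{m/2})$ since $\beta=2/h^2$.
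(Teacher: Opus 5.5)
Your upper packing bound, both volume bounds, and the $\Theta(h^{-m})$ conclusion follow the paper's argument. In the covering step you cover by closed balls while $v_+$ is defined with open ones. This is harmless because $\vol(\partial B_r(x))=0$ for $r=h+\Delta<2h<\inj(\M)$. Your geodesic lower bound is also right in substance, but you wrote the wrong triangle inequality. The bound $d(y,\xi_j)>\Delta$ for $y\in B_h(\xi_i)$ is irrelevant. What you need is that every $y$ in the closed ball $\overline{B_\Delta(\xi_i)}$ satisfies $d(y,\xi_j)\ge d(\xi_i,\xi_j)-d(y,\xi_i)>h$. Hence only the $i$th kernel is active on the whole closed $\Delta$-ball. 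For either energy, the lower bound then reduces to showing that a trajectory started in $B_\Delta(\xi_i)$ never leaves $\overline{B_\Delta(\xi_i)}$; leaving $\Delta$-ball is the danger, not leaving $B_h(\xi_i)$.

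The genuine gap is the corrected-energy case. Superlevel sets of $k_i=a_i(1-\beta q_i)$ are the wrong invariant sets. On a general manifold $k_i$ is not radial, so $\{k_i\ge k_i(x)\}$ can extend outside $B_\Delta(\xi_i)$. Your proposed repair, containment in a ball of radius $\Delta+O(h^3)$, still fails. A maximal configuration counted by $P_>(h+\Delta)$ may have a separation in $(h+\Delta,h+\Delta+O(h^3))$, and then the trajectory could enter $B_h(\xi_j)$. Falling back to the geodesic case proves a strictly weaker statement.

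The missing idea is to use the distance $r=d(\cdot,\xi_i)$ itself as the Lyapunov function. Since $dk_i(\xi_i)=0$ and $\Hess k_i\preceq-\mu g$ on $B_h(\xi_i)$, integrate $(k_i\circ\gamma)''\le-\mu$ along a unit-speed radial geodesic $\gamma$ from $\xi_i$. This gives $(k_i\circ\gamma)'(r)\le-\mu r$, that is, $\ip{\grad k_i}{\grad r}\le-\mu r$ at every point of the ball. In the singleton region the negative energy gradient is $\grad k_i/[\beta(\epsilon+k_i)]$. So along the flow $\dot r\le-\mu r/(\beta S_{\max})$, where $S_{\max}$ bounds $\epsilon+k_i$ on $\overline{B_\Delta(\xi_i)}$. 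Thus $r(t)$ strictly decreases, and the trajectory never leaves the closed $\Delta$-ball, where only the own kernel is active. It converges exponentially to $\xi_i$. This is the paper's argument, and it works for every separation exceeding $h+\Delta$ without any small-$h$ or $O(h^3)$ bookkeeping.
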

\begin{proof}
A pattern set separated by more than $h+\Delta$ has only its own kernel active throughout each closed radius-$\Delta$ ball. For $E_\geo$, the negative gradient points radially toward the center by~\eqref{eq:singlegradient}. If $r(t)$ is its distance to the center, then
\[
 \dot r(t)=-\frac{r(t)}{\epsilon+1-\beta r(t)^2/2}
 \le-\frac{r(t)}{1+\epsilon}.
\]
It remains in the ball and converges to the center.

For the corrected energy write $k_i=a_i(1-\beta q_i)$. Its gradient vanishes at the center and $\Hess k_i\preceq-\mu g$ throughout the ball. Along a unit-speed radial geodesic from the center,
\[
 \frac d{dr}k_i(\gamma(r))
 =\int_0^r\Hess k_i(\dot\gamma(s),\dot\gamma(s))\,ds\le-\mu r.
\]
Negative-energy-gradient flow equals $\grad k_i/[\beta(\epsilon+k_i)]$. If $S_{\max}$ bounds $\epsilon+k_i$ on the closed ball, its radial derivative obeys
\[
 \dot r(t)\le-\frac{\mu r(t)}{\beta S_{\max}}.
\]
The vector field points inward on the boundary; the denominator is bounded away from zero since $\Delta<h$. The trajectory therefore exists for all nonnegative time in the ball and converges exponentially to the center. This proves the lower packing bound.

For the upper bound, two prescribed attraction balls cannot overlap: a unique trajectory from a common initial point cannot converge to two distinct centers. If two centers had distance less than $2\Delta$, a midpoint of a minimizing geodesic would lie in both balls. Thus their separation is at least $2\Delta$.

A maximal set separated by more than $s$ covers $\M$ by closed radius-$s$ balls, since otherwise another center could be added. Ball boundaries have zero volume for sufficiently small radii, which are the radii used here; hence its cardinality is at least $V/v_+(s)$. The open radius-$s/2$ balls about an $s$-separated set are disjoint, implying cardinality at most $V/v_-(s/2)$. These observations prove the volume bounds in the small-radius regime. Finally, normal coordinates give $v_\pm(r)=\omega_mr^m(1+O(r^2))$ uniformly on a fixed compact manifold. Substitution of $\Delta=\vartheta h$ proves the order statements.
\end{proof}

\begin{proposition}[Exact storage alone need not have finite capacity]\label{prop:circle}
There is a fixed circle and a fixed bandwidth at which, for arbitrarily large $N$, both energies store $N$ original patterns and have exactly $N$ additional nondegenerate memories. The basin and novelty radii shrink with $N$.
\end{proposition}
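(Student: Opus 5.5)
The plan is to take $\M$ to be a circle $\R/L\mathbb Z$ of fixed circumference $L$ with its flat metric. Then $\theta_p\equiv1$ and $a_i\equiv1$, so $E_\vc=E_\geo$ and it is enough to treat the geodesic energy. Fix $h<L/2$, which keeps $h$ below $\inj(\M)=L/2$. For each large $N$ we place equally spaced patterns $\xi_i=iL/N$, $i=0,\dots,N-1$, and we choose the spacing regime by taking a fixed bandwidth and letting $N$ grow. The natural regime in which each pattern stays isolated is spacing $s=L/N>h$, but that caps $N<L/h$. To get unbounded $N$ at a fixed $h$, I would therefore \emph{not} use equal spacing. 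Instead I would place the $N$ patterns in clusters.

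The concrete construction uses a geometric sequence of distinct pairs. On each arc of length slightly more than $2h$, put two patterns at separation $r_j\in(h,2h)$. Arcs belonging to different pairs are placed more than $2h$ apart, so they cannot interact. This needs $N/2$ arcs of length about $4h$, which again bounds $N$ by $O(L/h)$. So disjoint placement fails too, and the construction must exploit overlaps.

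I would rely on the one-dimensional flat structure and Theorem~\ref{thm:active}. Put patterns at positions $0<x_1<\dots<x_N$ inside an interval of length less than $h$. Then every pattern activates every other pattern, and by Lemma~\ref{lem:generic}-type reasoning an original is not stationary. Equal spacing inside such an interval fails for the same reason. The alternative is a ring with spacing exactly $s\in(h,2h)$ and $N s=L$, which again fixes $N$.

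Given these failures, the route I would actually attempt is to let the circle length itself be fixed but choose $h$ fixed relative to the \emph{spacing}. Reading the statement again, it says ``a fixed circle and a fixed bandwidth''. So the patterns must accumulate, and the basin and novelty radii shrinking with $N$ signals that patterns cluster at small scales while each remains a nondegenerate memory of the \emph{truncated} score. I would look for a configuration in which consecutive patterns are at distance less than $h$ yet each is still stationary, because the displacement balance $\sum_{j\in A(\xi_i)}\log_{\xi_i}\xi_j=0$ holds. On the line, take patterns at $\xi_i$ with symmetric neighborhoods: for example, a symmetric periodic arrangement on the circle with every pattern having its active neighbors placed symmetrically. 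The equally spaced ring with spacing $s=L/N<h$ does this, since each $\xi_i$ sees $\lfloor h/s\rfloor$ neighbors on each side symmetrically. Then $\xi_i$ solves \eqref{eq:mean} for its active set, and the Hessian of $F_A$ is $|A|g\succ0$, so it is a memory provided it lies in $C_A$, that is, provided no pattern sits exactly at distance $h$. Likewise each midpoint $\xi_i+s/2$ sees a symmetric active set and is a memory. Choosing $h/s\notin\tfrac12\mathbb Z$ keeps every pattern and midpoint off the support boundaries. Theorem~\ref{thm:active} then shows that no other critical points exist, because each active set is an arc-window and its candidate is the window mean; one has to check that every nonsymmetric window mean lies outside its $C_A$.

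The main obstacle is this last exclusion step, together with confirming that the count is exactly $N$ additional memories. I would handle it by parametrizing $x\in[\xi_i,\xi_i+s)$. The active set is then a contiguous window whose membership changes only at the finitely many points $\xi_j\pm h$. On each subinterval the gradient of $S_\geo$ is $\beta$ times the sum of signed offsets, which is affine in $x$ with slope $-\beta|A|$. A direct count of left and right neighbors shows that its only zeros are at $x=\xi_i$ and $x=\xi_i+s/2$. Novelty and basin radii are then $s/2=L/(2N)$, which shrink with $N$.
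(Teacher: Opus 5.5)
Your final argument is correct, and the construction is the paper's. You use a flat circle (so $a_i\equiv1$ and $E_\vc=E_\geo$) and $N$ equally spaced patterns whose spacing $s=L/N$ is small relative to $h$. Originals and midpoints are stationary by reflection symmetry, with score second derivative $-\beta|A|<0$, and a condition keeps them off support boundaries.

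The real difference is how you exclude other memories. The paper never locates zeros. The $2N$ points $\xi_j\pm h$ are distinct and cut the circle into $2N$ arcs. The score is strictly concave on each arc because some kernel is active there, so each arc holds at most one critical point. Lemma~\ref{lem:boundary} disposes of the kink points, and $2N$ memories have already been exhibited.

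You instead compute the zero set of the piecewise-affine derivative on one period. You only assert this count, but it checks out. Write $h=(n+\phi)s$ with $\phi\in(0,1)\setminus\{\tfrac12\}$ and $x=\xi_i+us$, $u\in[0,1)$, indexing patterns by their lifted offset $j$ from $\xi_i$. The left end of the active window moves from $-n$ to $-n+1$ at $u=\phi$, and the right end from $n$ to $n+1$ at $u=1-\phi$. So the mean index $\bar j$ of the window is $0$, $\tfrac12$ or $1$, and the derivative $\beta s|A|(\bar j-u)$ vanishes only at $u=0$ and $u=\tfrac12$. The latter falls in a mixed region exactly because $\phi\neq\tfrac12$. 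You should also say that the kink points are not memories, which is immediate from the $C^2$ requirement in the memory definition.

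The paper's pigeonhole is shorter and needs no case split. Your computation yields more: the exact basin of each original, of radius $\min\{\phi,1-\phi\}s$. That corrects your claim that the basin radius is $s/2$. It is strictly smaller, though still at most $s/2\to0$, which is all the statement needs.

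Two smaller repairs. First, with only $h<L/2$ you cannot cite Theorem~\ref{thm:active}. For $h\ge L/4$ the support arcs are not strongly convex, and $U_A$ can even be disconnected: two arcs of length $2h>L/2$ whose centres are $L/2$ apart meet in two arcs, each containing a critical point of $F_A$. Either take $h<L/4$, as the paper does, or let the direct count carry the argument, which needs only $h<L/2$.

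Second, your condition $h/s\notin\tfrac12\mathbb Z$ is exactly distinctness of the $2N$ boundary points, so the paper's count uses it too. At fixed $h$, however, it constrains $N$, so say why it holds for arbitrarily large $N$. If $2h/L=p/q$ in lowest terms then $q\ge2$, and every $N$ with $q\nmid N$ works. Alternatively take $h/L$ irrational so that every $N$ works.

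Finally, drop the abandoned constructions from the write-up.
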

\begin{proof}
Take a circle of circumference $L$, fix $0<h<L/4$ with $h/L$ irrational, and put $N$ equally spaced patterns at spacing $\ell=L/N$. Choose $N$ sufficiently large that $\ell<2h$. A one-dimensional Riemannian circle has local normal-coordinate density one, so the energies coincide.

At every original pattern, reflection symmetry pairs active neighbors at opposite signed distances, so the first derivative of the score is zero. There is no pattern exactly on a support boundary because $h/\ell=Nh/L$ is irrational. On a neighborhood with $k$ active patterns the second derivative of the score is $-\beta k<0$. Thus every original is a nondegenerate memory.

The same reflection symmetry holds at each of the $N$ midpoints between consecutive patterns. Each has active neighbors because $\ell/2<h$. No support boundary occurs there because an integer plus one half cannot equal the irrational $h/\ell$. These $N$ midpoints are also nondegenerate memories, each at novelty distance $\ell/2$.

There are exactly $2N$ distinct support-boundary points $\xi_i\pm h$ on the circle. Indeed, equality between a plus and a minus boundary would make $2h/L$ rational; equality within either sign would identify distinct equally spaced centers. Their complement consists of $2N$ intervals. Supports cover the circle because $\ell/2<h$, so each interval has at least one active pattern and strictly negative score second derivative. It contains at most one stationary maximum. Lemma~\ref{lem:boundary} excludes boundary maxima. We have already exhibited $2N$ distinct maxima, so these are all of them. The construction works for every sufficiently large $N$ at the same $L$ and $h$.
\end{proof}

\section{Statistical incompatibility and an alternative normalization}\label{app:normalization}

\begin{corollary}[Incompatible fixed-dimensional asymptotic requirements]\label{cor:incompatible}
Fix a compact manifold and a $C^4$ density, and let $N\to\infty$, $h\to0$. For either energy,
\[
 \P(\mathcal A_T)\to1\quad\Longleftrightarrow\quad N^2h^m\to0.
\]
Mean integrated squared-error consistency of the normalized density $\widetilde f_T$ requires and, with $h\to0$, is implied by $Nh^m\to\infty$. These requirements cannot hold at the same bandwidth. In the high-probability exact-storage regime, the probability of any additional memory also tends to zero.
\end{corollary}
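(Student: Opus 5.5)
The proof combines the storage characterization of Theorem~\ref{thm:capacity} with the risk expansion of Theorem~\ref{thm:kde}, and then reuses the emergence graph argument from Appendix~\ref{app:emergence}.

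\textbf{Storage equivalence.} For a continuous density on a fixed compact manifold, Theorem~\ref{thm:capacity} gives $q_h=\omega_mh^m(\int f^2\,dV+o(1))$ with $\int f^2>0$, and $\sup_xb_h\le Cq_h$. Hence $N^2q_h\asymp N^2h^m$, and it suffices to work with $N^2q_h$.
\begin{itemize}
\item If $N^2h^m\to0$, the union bound~\eqref{eq:union} gives $\P(\mathcal A_T)\ge1-\binom N2q_h\to1$.
\item If $N^2h^m\not\to0$, pass to a subsequence on which either $N^2q_h\to2\lambda\in(0,\infty)$ or $N^2q_h\to\infty$. In the first case Lemma~\ref{lem:collision} gives $\P(\mathcal A_T)\to e^{-\lambda}<1$. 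In the second case (where $N\to\infty$ automatically) its Chebyshev part gives $\P(\mathcal A_T)\to0$.
\end{itemize}
Both cases use the almost-sure event identity $\mathcal A_T=\{Z_h=0\}$ from Theorem~\ref{thm:capacity}, which holds for small $h$ for both energies.

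\textbf{Density consistency.} By~\eqref{eq:mise}, the MISE equals
\[
h^4\norm{D_Tf}^2/(4(m+4)^2)+R_m/(Nh^m)+o(h^4+1/(Nh^m)).
\]
\begin{itemize}
\item \emph{Sufficiency.} If $h\to0$ and $Nh^m\to\infty$, both leading terms vanish.
\item \emph{Necessity.} Since $R_m>0$ and the bias term is nonnegative, the MISE is at least $\tfrac12R_m/(Nh^m)$ eventually, up to the $o$-term. The $o(\cdot)$ is relative to $h^4+1/(Nh^m)$. So if $Nh^m\not\to\infty$, then along a subsequence $1/(Nh^m)$ stays bounded below while $h^4\to0$, and the MISE stays bounded below.
\end{itemize}
The expansion holds without any restriction tying $N$ to $h$, which I would double-check in Theorem~\ref{thm:kde}. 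If it is not fully uniform, I would instead use the exact variance identity: the integrated variance is at least $\tfrac1N\E\int Y^2-O(1/N)$.

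\textbf{Incompatibility.} If $N^2h^m\to0$, then $Nh^m=(N^2h^m)/N\to0$ because $N\to\infty$. So storage forces $Nh^m\to0$, contradicting consistency.

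\textbf{Additional memories.} This is the main step, since it needs the emergence machinery rather than a single bound. Consider the $2h$-graph used in the proof of Theorem~\ref{thm:emergence}. By Theorem~\ref{thm:active} (applicable for small $h$ by Lemma~\ref{lem:small}), every memory lies in some $C_A$ with $|A|\ge1$ and all patterns in $A$ pairwise within $2h$.
\begin{itemize}
\item An additional memory, one not equal to an original, needs $|A|\ge2$, since singleton candidates are the originals. It therefore requires at least one edge of length less than $2h$.
\item The expected number of such edges is $\binom N2q_{2h}\asymp N^2(2h)^m\to0$, using $q_{2h}/q_h\to2^m$ from the small-ball expansion.
\item Markov's inequality then shows that the probability of any additional memory tends to zero.
\end{itemize}
This covers every novelty threshold $\eta$ simultaneously, since the bound applies to all non-original memories.
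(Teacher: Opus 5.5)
Your proposal is correct and matches the paper's proof essentially step by step. Both arguments have the same three parts:
\begin{itemize}
\item \textbf{Storage.} The union bound gives one direction. For the converse, a subsequence dichotomy (finite positive limit versus infinite limit) is resolved by the two parts of Lemma~\ref{lem:collision}.
\item \textbf{Density consistency.} Along a subsequence with $(Nh^m)^{-1}$ bounded below, \eqref{eq:mise} gives the lower bound $R_m/(2Nh^m)$.
\item \textbf{Additional memories.} Since $q_{2h}/q_h\to2^m$, you get $\binom N2 q_{2h}\to0$. So all supports are disjoint with high probability, and only singleton candidates, i.e.\ the originals, remain.
\end{itemize}
Your worry about uniformity in $N$ is unnecessary. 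The remainders in Theorem~\ref{thm:kde} are uniform in $N$: the squared-mean term is $O(1/N)$, and the normalizer error is bounded by a constant times $h^4N^{-1}\bigl(1+(Nh^m)^{-1}\bigr)$. The paper relies on exactly this uniformity.
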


\begin{proof}[Proof of Corollary~\ref{cor:incompatible}]
Theorem~\ref{thm:capacity} identifies all-pattern storage with no pair at distance less than $h$, and $q_h\asymp h^m$. If $N^2q_h\to0$, the union bound gives success probability tending to one. Conversely, suppose success probability tends to one but $N^2q_h$ does not tend to zero. There is a subsequence along which $N^2q_h$ is bounded below by a positive constant. It has a further subsequence along which this quantity either converges to a finite positive limit or tends to infinity. In the first case Lemma~\ref{lem:collision} gives a success limit strictly below one; in the second it gives limit zero. Both contradict the assumed success. Thus $N^2q_h\to0$, proving the storage equivalence.

For either normalized density, the risk expansion~\eqref{eq:mise} in Theorem~\ref{thm:kde} has a nonnegative squared-bias leading term and a variance leading term $R_m/(Nh^m)$ with $R_m>0$. If $Nh^m$ failed to tend to infinity, a subsequence would have $(Nh^m)^{-1}$ bounded below. Along that subsequence $h^4=o((Nh^m)^{-1})$, so the remainder is $o((Nh^m)^{-1})$ and the risk is bounded below by $R_m/(2Nh^m)$ for small $h$. This contradicts consistency. Hence $Nh^m\to\infty$. Conversely, if $h\to0$ and $Nh^m\to\infty$, both terms in~\eqref{eq:mise} tend to zero. But $N^2h^m\to0$ implies $Nh^m\to0$, not infinity. The two requirements are incompatible.

Finally, in the high-probability storage regime,
$\binom N2q_{2h}\to0$ because $q_{2h}/q_h\to2^m$. A union bound shows that all support balls are disjoint with probability tending to one. For small $h$, a single active component has its unique critical maximum at its original center by Theorem~\ref{thm:active}. No additional supported memories are then possible.
\end{proof}

\begin{proposition}[Normalization by integrated kernel mass]\label{prop:massnorm}
Let
\[
 Z_h(p)=\int_\M(1-d(x,p)^2/h^2)_+\,dV(x),\qquad w_i=Z_h(\xi_i)^{-1}.
\]
Then $N^{-1}\sum_iw_i(1-d(x,\xi_i)^2/h^2)_+$ is a normalized density. The energy
\[
 E_{\mathrm{mass}}(x)=-\beta^{-1}\log\left[\epsilon+\sum_iw_i(1-\beta q_i(x))_+\right]
\]
retains every isolated original as a nondegenerate minimum, with
\[
 \Hess E_{\mathrm{mass}}(\xi_i)=\frac{w_i}{\epsilon+w_i}g_{\xi_i}.
\]
\end{proposition}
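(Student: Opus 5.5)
The proof has two parts. The normalization claim is a direct computation with the volume form. The Hessian claim reuses the single-kernel computation from the proof of Theorem~\ref{thm:isolated}, with the constant weight $w_i$ in place of the position-dependent correction $a_i$.

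\textbf{Normalization.} First I will check that $Z_h(p)$ is finite and positive. The integrand is continuous and nonnegative, supported on the closed ball $\overline{B}_h(p)$, and positive on the open ball; $\M$ is compact, so $0<Z_h(p)<\infty$ and $w_i$ is well defined. Then
\[
 \int_\M w_i(1-d(x,\xi_i)^2/h^2)_+\,dV(x)=1
\]
by the definition of $w_i$. Averaging over $i$ with weights $N^{-1}$ therefore gives total mass one, and the average is nonnegative, so it is a probability density. If desired, one can also record the normal-coordinate expansion $Z_h(p)=h^m\int_{\norm u<1}(1-\norm u^2)\theta_p(\exp_p(hu))\,du=\tfrac{2\omega_m}{m+2}h^m(1+O(h^2))$, using Lemma~\ref{lem:jets}. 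This requires $h<\inj(\M)$, which holds throughout, and it is only needed to show that $w_i$ does not depend on $x$.

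\textbf{Hessian at an isolated original.} I will use the neighborhood argument from the proof of Theorem~\ref{thm:isolated}. If $d(\xi_i,\xi_j)>h$ for all $j\ne i$, then some neighborhood of $\xi_i$ meets no other support. On that neighborhood the score is
\[
 S=\epsilon+w_i(1-\beta q_i).
\]
The weight $w_i$ is a constant, not a function of $x$. This is the key difference from the corrected model, and it is why no Ricci term appears. By Lemma~\ref{lem:jets}, $q_i(\xi_i)=0$, $dq_i(\xi_i)=0$, and $\Hess q_i(\xi_i)=g$. Hence
\[
 S(\xi_i)=\epsilon+w_i,\qquad dS(\xi_i)=0,\qquad \Hess S(\xi_i)=-\beta w_i g .
\]
Substituting into the log-Hessian identity \eqref{eq:loghessian}, the $dS\otimes dS$ term vanishes at the critical point, and
\[
 \Hess E_{\mathrm{mass}}(\xi_i)=\frac{\beta w_i g}{\beta(\epsilon+w_i)}=\frac{w_i}{\epsilon+w_i}g_{\xi_i}.
\]
This is positive definite because $w_i>0$ and $\epsilon\ge0$. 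The energy is $C^2$ near $\xi_i$ because $S$ is smooth and positive there. So $\xi_i$ is a nondegenerate minimum, i.e.\ a memory.

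\textbf{Main obstacle.} Nothing here is hard. The one point that needs care is stating explicitly that $w_i$ is constant in the query variable: it is evaluated only at the center $\xi_i$. Because of this, the product-rule terms that produced $\Ric/(3\beta)$ in Theorem~\ref{thm:isolated} are absent. The mass normalization is thus inherited from the pattern locations rather than from a position-dependent correction.
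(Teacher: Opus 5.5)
Your proposal is correct and follows the paper's proof essentially step for step. You establish positivity and finiteness of $Z_h$ by compactness, note the unit mass of each weighted component, then take the local score $\epsilon+w_i(1-\beta q_i)$ with zero differential and Hessian $-\beta w_i g$ and substitute it into~\eqref{eq:loghessian}.

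The optional normal-coordinate expansion of $Z_h$ is harmless but unnecessary. In particular, it is not what makes $w_i$ independent of $x$; that follows directly from the definition $w_i=Z_h(\xi_i)^{-1}$.
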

\begin{proof}
The integral $Z_h(p)$ is positive and finite because its integrand is nonnegative, is positive on $B_h(p)$, and the manifold is compact. Dividing the component by its integral gives mass one; averaging gives a normalized density. At an isolated original the score is locally $\epsilon+w_i(1-\beta q_i)$. Its differential is zero and its Hessian is $-\beta w_ig$. Substitution into~\eqref{eq:loghessian} proves the asserted positive-definite energy Hessian.
\end{proof}

This integrated-mass construction normalizes each kernel by an $x$-independent factor. Its stability differs from pointwise volume correction because that factor contributes no spatial gradient or Hessian.

 \end{document}